\newtheorem{theorem}{Theorem}
\newtheorem{lemma}[theorem]{Lemma}
\newtheorem{condition}{Condition}
\newtheorem{remark}[theorem]{Remark}
\newtheorem{proposition}[theorem]{Proposition}
\newtheorem{corollary}[theorem]{Corollary}
\def\abovestrut#1{\rule[0in]{0in}{#1}\ignorespaces}
\def\abovespace{\abovestrut{0.2in}}
\def\R{\mathbb{R}}
\def\E{\mathbb{E}}
\def\N{\mathbb{N}}
\def\wt{\widetilde}
\def\wh{\widehat}
\def\eps{\varepsilon}
\DeclareMathOperator{\diag}{diag}
\DeclareMathOperator{\range}{range}
\DeclareMathOperator{\rank}{rank}
\title{A Spectral Algorithm for Learning Hidden Markov Models}
\date{}
\author[1,2]{Daniel Hsu}
\author[2]{Sham M. Kakade}
\author[1]{Tong Zhang}
\affil[1]{Rutgers University, Piscataway, NJ 08854}
\affil[2]{University of Pennsylvania, Philadelphia, PA 19104}
\begin{document}
\maketitle

\begin{abstract}
Hidden Markov Models (HMMs) are one of the most fundamental and widely used
statistical tools for modeling discrete time series.
In general, learning HMMs from data is computationally hard (under
cryptographic assumptions), and practitioners typically resort to
search heuristics which suffer from the usual local optima issues.
We prove that under a natural separation condition (bounds on the smallest
singular value of the HMM parameters), there is an efficient and
provably correct algorithm for learning HMMs.
The sample complexity of the algorithm does not explicitly depend on the
number of distinct (discrete) observations---it implicitly depends on
this quantity through spectral properties of the underlying HMM.
This makes the algorithm particularly applicable to settings with a large
number of observations, such as those in natural language processing where
the space of observation is sometimes the words in a language.
The algorithm is also simple, employing only a singular value
decomposition and matrix multiplications.
\end{abstract}

\section{Introduction}

Hidden Markov Models (HMMs) \citep{BaumEagon67,Rabiner89} are the workhorse
statistical model for discrete time series, with widely diverse
applications including automatic speech recognition, natural language processing
(NLP), and genomic sequence modeling. In this model, a discrete hidden
state evolves according to some Markovian dynamics, and
observations at a particular time depend only on the hidden state at
that time. The learning problem is to estimate the
model only with observation samples from the underlying distribution. Thus far, the
predominant learning algorithms have been local search heuristics, such as
the Baum-Welch / EM algorithm \citep{BaumPSW70,EM}.

It is not surprising that practical algorithms have resorted to heuristics, as the general learning problem has been shown to be hard under cryptographic assumptions \citep{Terwijn02}. Fortunately, the hardness results are for HMMs that seem divorced from those that we are likely to encounter in practical applications. 

The situation is in many ways analogous to learning mixture distributions
with samples from the underlying distribution. There, the general problem is
also believed to be hard. However, much recent progress has been made when
certain separation assumptions are made with respect to the component
mixture distributions (\emph{e.g.}~\citep{Das99,DS07,VW02,CR08,BV08}). Roughly
speaking, these separation assumptions imply that with high
probability, given a point sampled from the distribution, one can determine
the mixture component that generated the point. In fact, there is a
prevalent sentiment that we are often only interested in clustering when such a separation condition holds. Much of the theoretical work
here has focused on how small this separation can be and still permit an efficient algorithm to recover the model.

We present a simple and efficient algorithm for learning HMMs under a certain
natural separation condition. We provide two results for learning. The
first is that we can approximate the joint distribution over observation
sequences of length $t$ (here, the quality of approximation is measured by
total variation distance). As $t$ increases,
the approximation quality degrades polynomially. Our second result is on
approximating the \emph{conditional} distribution over a future
observation, conditioned on some history of observations. We show that this
error is asymptotically bounded---\emph{i.e.}~for any $t$, conditioned on
the observations prior to time $t$, the error in predicting the $t$-th
outcome is controlled.
Our algorithm can be thought of as `improperly'
learning an HMM in that we do not explicitly recover the transition and
observation models.  However, our model does maintain a hidden state
representation which is closely (in fact, linearly) related to the HMM's,
and can be used for interpreting the hidden state.

The separation condition we require is a spectral condition on both the
observation matrix and the transition matrix. Roughly speaking, we require
that the observation distributions arising from distinct hidden states be
distinct (which we formalize by singular value conditions on the
observation matrix). This requirement can be thought of as being weaker than
the separation condition for clustering in that the observation distributions
can overlap quite a bit---given one observation, we do not necessarily have
the information to determine which hidden state it was generated from
(unlike in the clustering literature). We also have a spectral condition on
the correlation between adjacent observations. We believe both of these
conditions to be quite reasonable in many practical applications.
Furthermore, given our analysis, extensions to our algorithm which relax
these assumptions should be possible.

The algorithm we present has both polynomial sample and computational
complexity.  Computationally, the algorithm is quite simple---at its core
is a singular value decomposition (SVD) of a correlation matrix between
past and future observations. This SVD can be viewed as a
Canonical Correlation Analysis (CCA)~\citep{cca} between past and future
observations. The sample complexity results we present do not explicitly
depend on the number of distinct observations; rather, they implicitly
depend on this number through spectral properties of the HMM. This makes
the algorithm particularly applicable to settings with a large number of
observations, such as those in NLP where the space of observations is
sometimes the words in a language. 

\subsection{Related Work}

There are two ideas closely related to this work. The first comes from the
subspace identification literature in control
theory~\citep{Ljung,SubIDMoor,SubID}. The second idea is that, rather than
explicitly modeling the hidden states, we can represent the probabilities
of sequences of observations as products of matrix observation operators,
an idea which dates back to the literature on multiplicity
automata~\citep{S61,CP71,F74}.

The subspace identification methods, used in control theory, use spectral
approaches to discover the relationship between hidden states and the
observations. In this literature, the relationship is discovered for linear
dynamical systems such as Kalman filters. The basic idea is that the
relationship between observations and hidden states can often be discovered
by spectral/SVD methods correlating the past and future observations (in
particular, such methods often do a CCA between the past and future
observations). However, algorithms presented in the literature cannot be
directly used to learn HMMs because they assume additive noise models with
noise distributions independent of the underlying states, and such models
are not suitable for HMMs (an exception is~\citep{ARJ03}).
In our setting, we use this idea of performing a CCA between past and future observations to uncover information about the observation process (this is done through an SVD on a correlation matrix between past and future observations). The state-independent additive noise condition is avoided through the second idea.

The second idea is that we can represent the probability of sequences as
products of matrix operators, as in the literature on multiplicity
automata~\citep{S61,CP71,F74} (see \citep{pomdpMA} for discussion of this
relationship). This idea was re-used in both the Observable Operator Model
of \cite{jaeger} and the Predictive State Representations of \cite{LSS01},
both of which are closely related and both of which can model HMMs. In
fact, the former work by \cite{jaeger} provides a non-iterative algorithm
for learning HMMs, with an asymptotic analysis. However, this algorithm
assumed knowing a set of `characteristic events', which is a rather strong
assumption that effectively reveals some relationship between the hidden
states and observations. In our algorithm, this problem is avoided through
the first idea.

Some of the techniques in the work in \citep{Even-DarKM07} for tracking
belief states in an HMM are used here. As discussed earlier, we provide a
result showing how the model's conditional distributions over observations
(conditioned on a history) do not asymptotically diverge. This result was
proven in \citep{Even-DarKM07} when an approximate model is \emph{already
known}. Roughly speaking, the reason this error does not diverge is that
the previous observations are always revealing information about the next
observation; so with some appropriate contraction property, we would not
expect our errors to diverge. Our work borrows from this contraction
analysis.

Among recent efforts in various communities~\citep{ARJ03,VWM07,ZJ07,CC08},
the only previous efficient algorithm shown to PAC-learn HMMs in a setting
similar to ours is due to \cite{MR06}.
Their algorithm for HMMs is a specialization of a more general method for
learning phylogenetic trees from leaf observations.
While both this algorithm and ours rely on the same rank condition and
compute similar statistics, they differ in two significant regards.
First, \citep{MR06} were not concerned with large observation spaces, and
thus their algorithm assumes the state and observation spaces to have the
same dimension.
In addition,
\citep{MR06} take the more ambitious approach of learning the
observation and transition matrices explicitly, which unfortunately results
in a less sample-efficient algorithm that injects noise to
artificially spread apart the eigenspectrum of a probability matrix.
Our algorithm avoids recovering the observation and transition
matrix explicitly\footnote{In Appendix~\ref{section:mr06},
we discuss the key step in \citep{MR06}, and also
show how to use their technique in conjunction with our algorithm to
recover the HMM observation and transition matrices.
Our algorithm does not rely on this extra step---we believe it to be
generally unstable---but it can be taken if desired.}, and instead uses
subspace identification to learn an alternative
representation.

\section{Preliminaries} \label{section:preliminaries}

\subsection{Hidden Markov Models}

The HMM defines a probability distribution over sequences of hidden states
$(h_t)$ and observations $(x_t)$.
%
We write the set of hidden states as $[m] = \{ 1, \ldots, m \}$ and set
of observations as $[n] = \{ 1, \ldots, n \}$, where $m \leq n$.

Let $T \in \R^{m \times m}$ be the state transition
probability matrix with $T_{ij} = \Pr[h_{t+1} = i | h_t = j]$,
$O \in \R^{n \times m}$ be the observation
probability matrix with $O_{ij} = \Pr[x_t = i | h_t = j]$,
and $\vec{\pi} \in \R^m$ be the initial state distribution with $\vec{\pi}_i = \Pr[h_1 = i]$.
The conditional independence properties that an HMM satisfies are: 1)
conditioned on the previous hidden state, the current hidden state is sampled
independently of all other events in the history; and 2) conditioned on the
current hidden state, the current observation is sampled independently from
all other events in the history.
These conditional independence properties of the HMM imply that $T$ and $O$
fully characterize the probability distribution of any sequence of states
and observations.

A useful way of computing the probability of sequences is in terms of
`observation operators', an idea which dates back to the literature on
multiplicity automata (see~\citep{S61,CP71,F74}).  
The following lemma is straightforward to verify (see~\citep{jaeger,Even-DarKM07}).
\begin{lemma} \label{lemma:oom}
For $x = 1, \ldots, n$, define 
\[
A_x = T \diag(O_{x,1}, \ldots, O_{x,m}).
\]
For any $t$:
$$ \Pr[x_1, \ldots, x_t] \ = \ \vec{1}_m^\top A_{x_t} \ldots A_{x_1} \vec{\pi}. $$
\end{lemma}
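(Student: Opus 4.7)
The plan is to prove the identity by a straightforward induction on $t$, using the standard HMM forward recursion rewritten in matrix notation. I would first introduce an unnormalized forward vector $\vec{\alpha}_t \in \R^m$ defined by $(\vec{\alpha}_t)_i = \Pr[x_1, \ldots, x_{t-1}, h_t = i]$, with the convention that for $t = 1$ the observation sequence is empty so $\vec{\alpha}_1 = \vec{\pi}$. The target identity will then follow immediately from the claim that $\vec{\alpha}_{t+1} = A_{x_t} \cdots A_{x_1} \vec{\pi}$ together with the marginalization $\vec{1}_m^\top \vec{\alpha}_{t+1} = \Pr[x_1, \ldots, x_t]$.

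The heart of the proof is verifying the one-step recursion $\vec{\alpha}_{t+1} = A_{x_t}\, \vec{\alpha}_t$. I would establish this by conditioning on the value of $h_t$ and invoking the two HMM conditional independence properties stated in the preliminaries: (i) given $h_t$, the next hidden state $h_{t+1}$ is independent of $x_1, \ldots, x_{t-1}$, and (ii) given $h_t$, the current observation $x_t$ is independent of $x_1, \ldots, x_{t-1}$ and of $h_{t+1}$. Concretely,
\[
(\vec{\alpha}_{t+1})_i = \sum_{j=1}^m \Pr[x_1, \ldots, x_t, h_t = j, h_{t+1} = i],
\]
and the conditional independences let me factor each summand as $\Pr[h_{t+1} = i \mid h_t = j]\,\Pr[x_t \mid h_t = j]\,\Pr[x_1, \ldots, x_{t-1}, h_t = j] = T_{ij}\, O_{x_t, j}\, (\vec{\alpha}_t)_j$, which is precisely the $i$-th entry of $A_{x_t}\,\vec{\alpha}_t$ by definition of $A_{x_t} = T \diag(O_{x_t, 1}, \ldots, O_{x_t, m})$.

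Unrolling this recursion from the base case $\vec{\alpha}_1 = \vec{\pi}$ yields $\vec{\alpha}_{t+1} = A_{x_t} A_{x_{t-1}} \cdots A_{x_1}\, \vec{\pi}$ (note the order: the most recent observation's operator sits on the left). Summing the entries, $\vec{1}_m^\top \vec{\alpha}_{t+1} = \sum_i \Pr[x_1, \ldots, x_t, h_{t+1} = i] = \Pr[x_1, \ldots, x_t]$, which is the desired formula.

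There is no real mathematical obstacle here; the work is almost purely notational. The one subtle bookkeeping point that I would be careful about is the left-to-right ordering of the $A_{x_s}$ factors, since the convention that $A_x$ multiplies a column vector from the left (combining an emission $O_{x,\cdot}$ with a transition $T$) forces the factor for the latest observation $x_t$ to appear leftmost. Getting this ordering straight, and ensuring that the conditional independences are applied to the correct conditioning sets, is the only place where a careless mistake could creep in.
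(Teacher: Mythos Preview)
Your proposal is correct and is exactly the standard forward-recursion argument; the paper itself does not give a proof of this lemma, stating only that it is ``straightforward to verify'' with references to \citep{jaeger,Even-DarKM07}. Your induction via the unnormalized forward vector $\vec{\alpha}_t$ is the canonical way to fill in this verification.
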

Our algorithm learns a representation that is based on this observable
operator view of HMMs.

\subsection{Notation}
As already used in Lemma~\ref{lemma:oom}, the vector $\vec{1}_m$ is the
all-ones vector in $\R^m$.
We denote by $x_{1:t}$ the sequence $(x_1,\ldots,x_t)$, and by
$x_{t:1}$ its reverse $(x_t,\ldots,x_1)$. When we use a sequence as
a subscript, we mean the product of quantities indexed by the sequence elements.
So for example, the probability calculation in Lemma~\ref{lemma:oom} can be
written $\vec{1}_m^\top A_{x_{t:1}} \vec{\pi}$.
We will use $\vec{h}_t$ to denote a probability vector (a distribution
over hidden states),
with the arrow distinguishing it from the random hidden state variable
$h_t$.
Additional notation used in the theorem statements and proofs is listed in
Table~\ref{table:notation}.
%

\subsection{Assumptions}
We assume the HMM obeys the following condition.
\begin{condition}[HMM Rank Condition] \label{cond:rank}
$\vec{\pi} > 0$ element-wise, and $O$ and $T$ are rank $m$.
\end{condition}
The rank condition rules out the problematic case in which some state $i$
has an output distribution equal to a convex combination (mixture) of some
other states' output distributions.
Such a case could cause a learner to confuse state $i$ with a mixture of
these other states.
As mentioned before, the general task of learning HMMs (even the specific
goal of simply accurately modeling the distribution probabilities
\citep{Terwijn02}) is hard under cryptographic assumptions; the rank
condition is a natural way to exclude the malicious instances created by
the hardness reduction.

The rank condition on $O$ can be relaxed through a simple modification of
our algorithm that looks at multiple observation symbols simultaneously to
form the probability estimation tables.
For example, if two hidden states have identical observation probability in
$O$ but different transition probabilities in $T$, then they may be
differentiated by using two consecutive observations.
Although our analysis can be applied in this case with minimal
modifications, for clarity, we only state our results for an algorithm that
estimates probability tables with rows and columns corresponding to single
observations.

\subsection{Learning Model}
Our learning model is similar to those of~\citep{KMR+94,MR06} for
PAC-learning discrete probability distributions.
We assume we can sample observation sequences from an HMM.  In
particular, we assume each sequence is generated starting from the same
initial state distribution (\emph{e.g.}~the stationary distribution of the
Markov chain specified by $T$).  This setting is valid for practical
applications including speech recognition, natural language processing, and
DNA sequence modeling, where multiple independent sequences are available.

For simplicity, this paper only analyzes an algorithm that uses the initial 
few observations of each sequence, and ignores the rest.
We do this
to avoid using concentration bounds with complicated mixing conditions 
for Markov chains in our sample complexity calculation, 
as these conditions
are not essential to the main ideas we present. In practice, however,
one should use the full sequences to form the probability estimation tables
required by our algorithm.
In such scenarios, a single long sequence is sufficient for learning, and
the effective sample size can be simply discounted by the mixing rate of the
underlying Markov chain.

Our goal is to derive accurate estimators for the cumulative (joint)
distribution $\Pr[x_{1:t}]$ and the conditional distribution $\Pr[x_t |
x_{1:t-1}]$  for any sequence length $t$. For the conditional distribution,
we obtain an approximation that does not depend on $t$, while for the joint
distribution, the approximation quality degrades gracefully with $t$.

\section{Observable Representations of Hidden Markov Models}

A typical strategy for learning HMMs is to estimate the observation and
transition probabilities for each hidden state (say, by maximizing the
likelihood of a sample). However, since the hidden states are not directly
observed by the learner, one often resorts to heuristics (\emph{e.g.}~EM) that
alternate between imputing the hidden states and selecting parameters $\wh O$
and $\wh T$ that maximize the likelihood of the sample and current state
estimates. Such heuristics can suffer from local optima issues and require
careful initialization (\emph{e.g.}~an accurate guess of the hidden states) to
avoid failure.

However, under Condition~\ref{cond:rank}, HMMs admit an efficiently learnable
parameterization that depends only on \emph{observable quantities}. Because
such quantities can be estimated from data, learning this representation
avoids any guesswork about the hidden states and thus allows for algorithms
with strong guarantees of success.

This parameterization is natural in the context of Observable Operator
Models~\citep{jaeger}, but here we emphasize its connection to subspace
identification.

\subsection{Definition}

Our HMM representation is defined in terms of the following
vector and matrix quantities:
\begin{eqnarray*}
{[P_1]}_i & = & \Pr[x_1 = i] \\
{[P_{2,1}]}_{ij} & = & \Pr[x_2=i,x_1=j] \\
{[P_{3,x,1}]}_{ij} & = & \Pr[x_3=i,x_2=x,x_1=j]
\quad \forall x \in [n],
\end{eqnarray*}
where $P_1 \in \R^n$ is a vector, and $P_{2,1} \in \R^{n \times n}$
and the $P_{3,x,1} \in \R^{n \times n}$
are matrices.
These are the marginal probabilities of observation singletons, pairs, and
triples. 

The representation further depends on a matrix $U \in \R^{n \times m}$ that
obeys the following condition.
\begin{condition}[Invertibility Condition] \label{cond:U}
$U^\top O$ is invertible.
\end{condition}
In other words, $U$ defines an $m$-dimensional subspace that preserves the
state dynamics---this will become evident in the next few lemmas.

A natural choice for $U$ is given by the `thin' SVD of $P_{2,1}$, as the
next lemma exhibits.
\begin{lemma} \label{lemma:svdU}
Assume $\vec{\pi} > 0$ and that $O$ and $T$ have column rank $m$. Then
$\rank(P_{2,1}) = m$. Moreover, if $U$ is the matrix of left singular
vectors of $P_{2,1}$ corresponding to non-zero singular values, then
$\range(U) = \range(O)$, so $U \in \R^{n \times m}$ obeys
Condition~\ref{cond:U}.
\end{lemma}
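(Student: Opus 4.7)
The plan is to first express $P_{2,1}$ explicitly in terms of the HMM parameters and then read off both the rank and the range from that factorization.

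First, I would condition on the hidden states $h_1, h_2$ to obtain
\[
[P_{2,1}]_{ij} \;=\; \sum_{a,b=1}^{m} \Pr[x_2=i\mid h_2=a]\,\Pr[h_2=a\mid h_1=b]\,\Pr[x_1=j\mid h_1=b]\,\Pr[h_1=b],
\]
which in matrix form becomes
\[
P_{2,1} \;=\; O\,T\,\diag(\vec{\pi})\,O^\top.
\]
This identity is the backbone of the argument; everything else is linear-algebraic.

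Next I would show $\rank(P_{2,1})=m$ and, more strongly, that $\range(P_{2,1})=\range(O)$. Since $\vec{\pi}>0$ entrywise, $\diag(\vec{\pi})$ is an invertible $m\times m$ matrix, and by assumption $T$ is rank $m$, so the $m\times m$ middle factor $M:=T\diag(\vec{\pi})$ is invertible. Writing $P_{2,1}=OMO^\top$, clearly $\range(P_{2,1})\subseteq\range(O)$. Conversely, given any $v\in\R^m$, the fact that $O$ has full column rank $m$ means $O^\top\colon\R^n\to\R^m$ is surjective, so there is $w\in\R^n$ with $O^\top w=M^{-1}v$, and then $P_{2,1}w=OMO^\top w=Ov$; thus $\range(O)\subseteq\range(P_{2,1})$. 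Hence the two ranges coincide, and $\rank(P_{2,1})=\rank(O)=m$.

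Finally, the matrix of left singular vectors corresponding to the nonzero singular values of any matrix is an orthonormal basis for its range; applied to $P_{2,1}$ this gives $\range(U)=\range(P_{2,1})=\range(O)$, and in particular $U\in\R^{n\times m}$. To conclude Condition~\ref{cond:U}, I would note that since the columns of $O$ lie in $\range(U)$ and $U$ has orthonormal columns, $O=UU^\top O$, so $O=UC$ where $C=U^\top O\in\R^{m\times m}$. Because $\rank(O)=m$ and $\rank(UC)\le\rank(C)$, we must have $\rank(C)=m$, so $C=U^\top O$ is invertible. I expect no real obstacle: the only subtle point is verifying that the range of $P_{2,1}$ equals (not just is contained in) $\range(O)$, which is where the invertibility of $M=T\diag(\vec{\pi})$ together with surjectivity of $O^\top$ is used.
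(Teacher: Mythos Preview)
Your proof is correct and follows essentially the same approach as the paper: both derive the factorization $P_{2,1}=O\,T\,\diag(\vec{\pi})\,O^\top$ and then argue $\range(P_{2,1})=\range(O)$ from the full rank of the middle and right factors. The only cosmetic difference is that the paper phrases the reverse inclusion via the pseudoinverse, writing $O=P_{2,1}(T\diag(\vec{\pi})O^\top)^+$, whereas you construct a preimage directly using surjectivity of $O^\top$; your explicit verification that $U^\top O$ is invertible is a welcome addition the paper leaves implicit.
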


\begin{proof}
Using the conditional independence properties of the HMM,
entries of the matrix $P_{2,1}$ can be factored as
\begin{align*}
[P_{2,1}]_{ij}
& = \sum_{k=1}^m \sum_{\ell=1}^m
\Pr[x_2 = i, x_1 = j, h_2 = k, h_1 = \ell] \\
& = \sum_{k=1}^m \sum_{\ell=1}^m
O_{ik} \ T_{k\ell} \ \vec{\pi}_{\ell} \ [O^\top]_{\ell j}
\end{align*}
so $P_{2,1} = OT \diag(\vec{\pi}) O^\top$ and thus $\range(P_{2,1}) \subseteq
\range(O)$.
The assumptions on $O$, $T$, and $\vec{\pi}$ imply that $T \diag(\vec{\pi}) O^\top$
has linearly independent rows and that $P_{2,1}$ has $m$ non-zero singular values.
Therefore
$$ O = P_{2,1} (T \diag(\vec{\pi}) O^\top)^+ $$
(where $X^+$ denotes the Moore-Penrose pseudo-inverse of a matrix $X$~\citep{perturbation}),
which in turn implies $\range(O) \subseteq \range(P_{2,1})$.
Thus $\rank(P_{2,1}) = \rank(O) = m$, and also $\range(U) = \range(P_{2,1})
= \range(O)$.
\end{proof}

Our algorithm is motivated by Lemma~\ref{lemma:svdU} in that we compute the SVD
of an empirical estimate of $P_{2,1}$ to discover a $U$ that satisfies
Condition~\ref{cond:U}.
We also note that this choice for $U$ can be thought of as a surrogate for
the observation matrix $O$ (see Remark~\ref{remark:U}).

Now given such a matrix $U$, we can finally define the observable representation:
\begin{eqnarray*}
\vec{b}_1 & = & U^\top P_1 \\
\vec{b}_\infty & = & \left( P_{2,1}^\top U \right)^+ P_1 \\
B_x & = & \left( U^\top P_{3,x,1} \right)
\left( U^\top P_{2,1} \right)^+ \quad \forall x \in [n] \ .
\end{eqnarray*}

\subsection{Basic Properties}

The following lemma shows that the observable representation,
parameterized by $\{ \vec{b}_\infty, \vec{b}_1, B_1, \ldots, B_n \}$, is
sufficient to compute the probabilities of any sequence of observations.
\begin{lemma}[Observable HMM Representation] \label{lemma:obs}
Assume the HMM obeys Condition~\ref{cond:rank}
and that $U \in \R^{n \times m}$ obeys Condition~\ref{cond:U}.
Then:
\begin{enumerate}
\item $\vec{b}_1 \ = \ (U^\top O) \vec{\pi}$.
\item $\vec{b}_\infty^\top \ = \ \vec{1}_m^\top (U^\top O)^{-1}$.
\item $B_x \ = \ (U^\top O) A_x (U^\top O)^{-1}$ \ $\forall x \in [n]$.
\item $\Pr[x_{1:t}]
\ = \ \vec{b}_\infty^\top B_{x_{t:1}} \vec{b}_1$
\ $\forall t \in \N, x_1,\ldots,x_t \in [n]$.
\end{enumerate}
\end{lemma}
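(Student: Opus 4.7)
The plan is first to factor $P_1$, $P_{2,1}$, and $P_{3,x,1}$ in terms of the HMM parameters by summing over the latent state trajectory. Using the HMM conditional independences exactly as in the proof of Lemma~\ref{lemma:svdU}, one obtains
\[
P_1 \;=\; O\vec{\pi}, \qquad
P_{2,1} \;=\; O\,T\,\diag(\vec{\pi})\,O^\top, \qquad
P_{3,x,1} \;=\; O\,A_x\,T\,\diag(\vec{\pi})\,O^\top,
\]
where the third identity uses the definition $A_x = T\diag(O_{x,1},\ldots,O_{x,m})$. These three factorizations are the only probabilistic input to the proof; everything after is linear algebra.

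Part 1 is then immediate: $\vec{b}_1 = U^\top P_1 = (U^\top O)\vec{\pi}$. For Parts 2 and 3 the key step is to simplify the pseudo-inverses. Under Conditions~\ref{cond:rank} and~\ref{cond:U}, the matrices $T$, $\diag(\vec{\pi})$, and $U^\top O$ are invertible $m\times m$, while $O$ has full column rank and $O^\top$ has full row rank. A short check shows that whenever invertible square factors stand between matrices of full column (resp.\ row) rank, the Moore--Penrose inverse of the product distributes in the expected order, e.g.
\[
(O\,\diag(\vec{\pi})\,T^\top O^\top U)^+ \;=\; (O^\top U)^{-1}\,(T^\top)^{-1}\,\diag(\vec{\pi})^{-1}\,O^+,
\]
and analogously for $(U^\top P_{2,1})^+$. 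Substituting into the definitions of $\vec{b}_\infty$ and $B_x$, the $O^+ O = I_m$ and $O^\top (O^\top)^+ = I_m$ identities eliminate the observation matrix from the middle, so
\[
\vec{b}_\infty \;=\; (O^\top U)^{-1}(T^\top)^{-1}\diag(\vec{\pi})^{-1}\vec{\pi},
\qquad
B_x \;=\; (U^\top O)\,A_x\,(U^\top O)^{-1}.
\]
Finally I would use $\diag(\vec{\pi})^{-1}\vec{\pi} = \vec{1}_m$ together with the stochasticity identity $T^\top\vec{1}_m = \vec{1}_m$ (so $(T^\top)^{-1}\vec{1}_m = \vec{1}_m$) to conclude $\vec{b}_\infty^\top = \vec{1}_m^\top(U^\top O)^{-1}$.

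Part 4 then telescopes: multiplying out $\vec{b}_\infty^\top B_{x_{t:1}} \vec{b}_1$, each interior pair $(U^\top O)^{-1}(U^\top O)$ cancels, leaving $\vec{1}_m^\top A_{x_{t:1}}\vec{\pi}$, which equals $\Pr[x_{1:t}]$ by Lemma~\ref{lemma:oom}.

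The only nontrivial obstacle is the pseudo-inverse manipulation in Parts 2 and 3; one must verify the rank conditions needed for $(ABCD)^+ = D^+ C^{-1} B^{-1} A^{-1}$ to be valid, namely that $A$ has full column rank and $D$ has full row rank while $B,C$ are invertible. Everything else is either a substitution or a consequence of $T$ being column-stochastic.
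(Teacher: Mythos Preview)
Your proof is correct and reaches the same conclusions, but the linear-algebra route differs from the paper's. The paper never fully expands $(U^\top P_{2,1})^+$ or $(P_{2,1}^\top U)^+$. Instead it inserts the identity $(U^\top O)^{-1}(U^\top O)$ into the factorizations of $P_1^\top$ and $P_{3,x,1}$ so as to \emph{recognize} $U^\top P_{2,1}$ as a trailing factor, and then invokes only the single fact that $MM^+ = I$ for a full-row-rank $M$. For example, it writes $P_1^\top = \vec{1}_m^\top T\diag(\vec{\pi})O^\top = \vec{1}_m^\top (U^\top O)^{-1}\,(U^\top P_{2,1})$, so that $\vec{b}_\infty^\top = P_1^\top (U^\top P_{2,1})^+$ collapses in one step; the stochasticity identity $\vec{1}_m^\top T = \vec{1}_m^\top$ is used up front rather than at the end. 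Your approach instead factors the pseudo-inverse completely via the product rule and then cancels $O^+O = I_m$ and $O^\top(O^\top)^+ = I_m$. Both are valid; the paper's trick is slightly lighter in that each identity needs only one rank check, whereas yours needs the general product formula for Moore--Penrose inverses (which you correctly identify as the step requiring care). One small slip: in your closing summary you write $(ABCD)^+ = D^+ C^{-1} B^{-1} A^{-1}$, but with $A$ merely full-column-rank the last factor must be $A^+$, as you in fact use in the body of the argument.
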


In addition to joint probabilities, we can compute conditional
probabilities using the observable representation.
We do so through (normalized) conditional `internal states' that depend on a
history of observations. We should emphasize that these states are \emph{not} in fact probability distributions over hidden states (though the following lemma shows that they are linearly related).
As per Lemma~\ref{lemma:obs}, the initial state is
$$ \vec{b}_1 \ =  \ (U^\top O) \vec{\pi}. $$
Generally, for any $t \geq 1$, given observations $x_{1:t-1}$ with
$\Pr[x_{1:t-1}] > 0$, we define the internal state as:
$$ \vec{b}_t \ = \ \vec{b}_t(x_{1:t-1}) \ = \
\frac{B_{x_{t-1:1}} \vec{b}_1}{\vec{b}_\infty^\top B_{x_{t-1:1}} \vec{b}_1}. $$
The case $t = 1$ is consistent with the general definition of $\vec{b}_t$ because
the denominator is $\vec{b}_\infty^\top \vec{b}_1 = \vec{1}_m^\top (U^\top O)^{-1} (U^\top O)
\vec{\pi}
= \vec{1}_m^\top \vec{\pi} = 1$.
The following result shows how these internal states can be used to
compute conditional probabilities $\Pr[x_t=i|x_{1:t-1}]$.

\begin{lemma}[Conditional Internal States] \label{lemma:belief}
Assume the conditions in Lemma~\ref{lemma:obs}.
Then, for any time $t$:
\begin{enumerate}
\item (Recursive update of states)
If $\Pr[x_{1:t}] > 0$, then
$$ \vec{b}_{t+1} \ = \ \frac{B_{x_t} \vec{b}_t}{\vec{b}_\infty^\top B_{x_t}
\vec{b}_t}, $$
\item (Relation to hidden states)
$$ \vec{b}_t \ = \ (U^\top O) \ \vec{h}_t(x_{1:t-1})$$
where $[\vec{h}_t(x_{1:t-1})]_i \ = \ \Pr[h_t = i | x_{1:t-1}]$
is the conditional probability of the hidden state at time $t$ given the
observations $x_{1:t-1}$,

\item (Conditional observation probabilities)
$$   \Pr[x_t | x_{1:t-1}] \ = \  \vec{b}_\infty^\top B_{x_t}
\vec{b}_t. $$
\end{enumerate}
\end{lemma}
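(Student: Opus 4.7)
The plan is to reduce all three claims to standard HMM forward-algorithm identities via the similarity relation $B_x = (U^\top O) A_x (U^\top O)^{-1}$ established in Lemma~\ref{lemma:obs}. Because that lemma already translates between the $B_x$ operators and the $A_x$ operators of Lemma~\ref{lemma:oom}, I would work in the $A_x$ picture whenever possible and only transfer to $B_x$ at the end.

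I would prove Part~2 first, since Parts~1 and~3 fall out of it. Starting from the definition $\vec{b}_t = B_{x_{t-1:1}} \vec{b}_1 / (\vec{b}_\infty^\top B_{x_{t-1:1}} \vec{b}_1)$, I would substitute $\vec{b}_1 = (U^\top O)\vec\pi$ and telescope $B_{x_{t-1:1}} = (U^\top O) A_{x_{t-1:1}} (U^\top O)^{-1}$ to rewrite the numerator as $(U^\top O)\, A_{x_{t-1:1}} \vec\pi$. For the denominator, the $(U^\top O)^{-1}(U^\top O)$ collapse gives $\vec{1}_m^\top A_{x_{t-1:1}} \vec\pi$, which by Lemma~\ref{lemma:oom} equals $\Pr[x_{1:t-1}]$. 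It then remains to identify $[A_{x_{t-1:1}} \vec\pi]_i$ with $\Pr[h_t = i, x_{1:t-1}]$. This is a short induction on $t$: the base case $t=1$ is $\vec\pi_i = \Pr[h_1 = i]$, and the inductive step uses $[A_x]_{ij} = T_{ij} O_{x,j}$ together with the HMM conditional independence properties to expand $[A_{x_t} A_{x_{t-1:1}}\vec\pi]_i = \sum_j T_{ij} O_{x_t,j} \Pr[h_t = j, x_{1:t-1}] = \Pr[h_{t+1}=i, x_{1:t}]$. Dividing by $\Pr[x_{1:t-1}]$ converts the joint into the conditional $\vec{h}_t(x_{1:t-1})$, giving $\vec{b}_t = (U^\top O)\,\vec{h}_t(x_{1:t-1})$.

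Given Part~2, Part~3 is a one-line computation: using the similarity and $\vec{b}_\infty^\top = \vec{1}_m^\top (U^\top O)^{-1}$,
\[
\vec{b}_\infty^\top B_{x_t} \vec{b}_t \;=\; \vec{1}_m^\top A_{x_t}\, \vec{h}_t(x_{1:t-1}) \;=\; \sum_{i,j} O_{x_t,i} T_{ij}\, \Pr[h_t=j \mid x_{1:t-1}],
\]
and the right-hand side is exactly $\Pr[x_t \mid x_{1:t-1}]$ by the HMM's conditional independence properties. For Part~1, I would return to the definition and use the multiplicativity $B_{x_{t:1}} = B_{x_t} B_{x_{t-1:1}}$ to write the numerator of $\vec{b}_{t+1}$ as $B_{x_t}(B_{x_{t-1:1}} \vec{b}_1) = \Pr[x_{1:t-1}]\, B_{x_t}\vec{b}_t$, and the denominator $\vec{b}_\infty^\top B_{x_{t:1}} \vec{b}_1 = \Pr[x_{1:t}] = \Pr[x_t \mid x_{1:t-1}]\,\Pr[x_{1:t-1}]$ by Lemma~\ref{lemma:obs} together with Part~3. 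The common factor $\Pr[x_{1:t-1}]$ cancels, and Part~3 rewrites the remaining $\Pr[x_t\mid x_{1:t-1}]$ as $\vec{b}_\infty^\top B_{x_t}\vec{b}_t$, yielding the stated recursion.

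There is no real obstacle in this proof: once Lemma~\ref{lemma:obs} is in hand, everything reduces to the forward-message identity $[A_{x_{t-1:1}}\vec\pi]_i = \Pr[h_t=i, x_{1:t-1}]$, which is the only step requiring any actual calculation. The main thing to be careful about is the reversed subscript convention $x_{t-1:1}$ and the fact that the internal states $\vec{b}_t$ are defined only when $\Pr[x_{1:t-1}]>0$, so all divisions above are legitimate.
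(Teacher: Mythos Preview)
Your proposal is correct and follows essentially the same route as the paper: both reduce everything to the similarity $B_x = (U^\top O) A_x (U^\top O)^{-1}$ and the forward-message identity, with the paper packaging the argument as a joint induction on Parts~2 and~3 (using the recursion of Part~1) rather than your direct expansion of the closed-form $\vec{b}_t$. One small slip: in your display for Part~3 the index on $O$ should be $O_{x_t,j}$, since $A_x = T\,\diag(O_{x,\cdot})$ gives $[A_x]_{ij} = T_{ij}O_{x,j}$; the conclusion $\vec{1}_m^\top A_{x_t}\vec{h}_t = \Pr[x_t\mid x_{1:t-1}]$ is of course still correct.
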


\begin{remark} \label{remark:U}
If $U$ is the matrix of left singular vectors of $P_{2,1}$ corresponding to
non-zero singular values, then $U$ acts much like the observation
probability matrix $O$ in the following sense:
\begin{center} \begin{tabular}{cc}
\begin{minipage}{0.4\textwidth} \begin{center}
Given a conditional state $\vec{b}_t$, \\
$\Pr[x_t = i | x_{1:t-1}] = [U\vec{b}_t]_i$.
\end{center} \end{minipage}
&
\begin{minipage}{0.4\textwidth} \begin{center}
Given a conditional hidden state $\vec{h}_t$, \\
$\Pr[x_t = i | x_{1:t-1}] = [O\vec{h}_t]_i$.
\end{center} \end{minipage}
\end{tabular} \end{center}
To see this, note that $UU^\top$ is the projection
operator to $\range(U)$. Since $\range(U) = \range(O)$
(Lemma~\ref{lemma:svdU}), we have $UU^\top O = O$, so
$U\vec{b}_t = U (U^\top O) \vec{h}_t = O\vec{h}_t$.
\end{remark}

\subsection{Proofs}

\begin{proof}[Proof of Lemma~\ref{lemma:obs}]
The first claim is immediate from the fact $P_1 = O \vec{\pi}$.
For the second claim, we write $P_1$ in the following unusual (but easily
verified) form:
\begin{eqnarray*}
P_1^\top & = & \vec{1}_m^\top T \diag(\vec{\pi}) O^\top \\
& = & \vec{1}_m^\top (U^\top O)^{-1} (U^\top O) T \diag(\vec{\pi}) O^\top \\
& = & \vec{1}_m^\top (U^\top O)^{-1} U^\top P_{2,1}.
\end{eqnarray*}
The matrix $U^\top P_{2,1}$ has linearly independent rows (by the
assumptions on $\vec{\pi}$, $O$, $T$, and the condition on $U$), so
$$ \vec{b}_\infty^\top
\ = \ P_1^\top (U^\top P_{2,1})^+
\ = \ \vec{1}_m^\top (U^\top O)^{-1} \ (U^\top P_{2,1}) \ (U^\top P_{2,1})^+
\ = \ \vec{1}_m^\top (U^\top O)^{-1}.
$$
To prove the third claim, we first express $P_{3,x,1}$ in terms of $A_x$:
\begin{eqnarray*}
P_{3,x,1}
& = & O A_x T \diag(\vec{\pi}) O^\top \\
& = & O A_x (U^\top O)^{-1} (U^\top O) T \diag(\vec{\pi}) O^\top \\
& = & O A_x (U^\top O)^{-1} U^\top P_{2,1}.
\end{eqnarray*}
Again, using the fact that $U^\top P_{2,1}$ has full row rank,
\begin{eqnarray*}
B_x
& = & \left( U^\top P_{3,x,1} \right) \ \left( U^\top P_{2,1} \right)^+ \\
& = & (U^\top O) A_x (U^\top O)^{-1} \
\left( U^\top P_{2,1} \right) \ \left( U^\top P_{2,1} \right)^+ \\
& = & (U^\top O) A_x (U^\top O)^{-1}.
\end{eqnarray*}
The probability calculation in the fourth claim is now readily seen as a
telescoping product that reduces to the product in Lemma~\ref{lemma:oom}.
\end{proof}

\begin{proof}[Proof of Lemma~\ref{lemma:belief}]
The first claim is a simple induction.
The second and third claims are also proved by induction as follows.
The base case is clear from Lemma~\ref{lemma:obs} since $\vec{h}_1 = \vec{\pi}$ and
$\vec{b}_1 = (U^\top O) \vec{\pi}$, and also $\vec{b}_\infty^\top B_{x_1}
\vec{b}_1 = \vec{1}_m^\top
A_{x_1} \vec{\pi} = \Pr[x_1]$.
For the inductive step,
\begin{eqnarray*}
\vec{b}_{t+1}
& = & \frac{B_{x_t} \vec{b}_t}{\vec{b}_\infty^\top B_{x_t} \vec{b}_t}
\\
& = & \frac{B_{x_t} (U^\top O) \vec{h}_t}
{\Pr[x_t|x_{1:t-1}]}
\quad \text{(inductive hypothesis)} \\
& = & \frac{(U^\top O) A_{x_t} \vec{h}_t}
{\Pr[x_t|x_{1:t-1}]}
\quad \text{(Lemma~\ref{lemma:obs})} \\
& = & (U^\top O) \
\frac {\Pr[h_{t+1} = \cdot, x_t | x_{1:t-1}]} {\Pr[x_t | x_{1:t-1}]}
\\
& = & (U^\top O) \
\frac{\Pr[h_{t+1} = \cdot | x_{1:t}] \Pr[x_t | x_{1:t-1}]}
{\Pr[x_t | x_{1:t-1}]}
\\
& = & (U^\top O) \ \vec{h}_{t+1}(x_{1:t})
\end{eqnarray*}
and
$$ \vec{b}_\infty^\top B_{x_{t+1}} \vec{b}_{t+1} \ = \ \vec{1}_m^\top
A_{x_{t+1}} \vec{h}_{t+1} \ = \ \Pr[x_{t+1} | x_{1:t}] $$
(again, using Lemma~\ref{lemma:obs}).
\end{proof}

\section{Spectral Learning of Hidden Markov Models}

\subsection{Algorithm}

The representation in the previous section suggests the algorithm
detailed in Figure~\ref{fig:alg}, which simply
uses random samples to estimate the model parameters. Note that in
practice, knowing $m$ is not essential because the method presented here
tolerates models that are not exactly HMMs, and the parameter $m$ may be
tuned using cross-validation. As we discussed earlier, the requirement for
independent samples is only for the convenience of our sample complexity
analysis.

\begin{figure}[top]
\framebox[\textwidth]{\begin{minipage}{0.9\textwidth}
\vskip 0.1in
\underline{Algorithm \ \textsc{LearnHMM$(m,N)$}}: \\
Inputs: $m$ - number of states, $N$ - sample size \\
Returns: HMM model parameterized by $\{ \wh b_1, \wh b_\infty, \wh B_x \
\forall x \in [n] \}$
\begin{enumerate}
\item Independently sample $N$ observation triples $(x_1,x_2,x_3)$ from the HMM
to form empirical estimates $\wh P_1, \wh P_{2,1}, \wh P_{3,x,1}$ $\forall
x \in [n]$ of $P_1, P_{2,1}, P_{3,x,1}$ $\forall x \in [n]$.

\item Compute the SVD of $\wh P_{2,1}$, and let $\wh U$ be the matrix of
left singular vectors corresponding to the $m$ largest singular values.

\item Compute model parameters:
\begin{enumerate}
\item $\wh b_1 = \wh U^\top \wh P_1$,
\item $\wh b_\infty = (\wh P_{2,1}^\top \wh U)^+ P_1$,
\item $\wh B_x = \wh U^\top \wh P_{3,x,1} (\wh U^\top \wh P_{2,1})^+$
$\forall x \in [n]$.
\end{enumerate}
\end{enumerate}
\vskip 0.1in
\end{minipage}}
\caption{HMM learning algorithm.}
\label{fig:alg}
\end{figure}

The model returned
by \textsc{LearnHMM$(m,N)$} can be used as follows:
\begin{itemize}
\item To predict the probability of a sequence:
$$ \wh\Pr[ x_1, \ldots, x_t ]
\ = \ \wh b_\infty^\top \wh B_{x_t} \ldots \wh B_{x_1} \wh b_1. $$

\item Given an observation $x_t$, the `internal state' update is:
$$ \wh b_{t+1}
\ = \ \frac{\wh B_{x_t} \wh b_t}{\wh b_\infty^\top \wh B_{x_t} \wh b_t}. $$ 

\item To predict the conditional probability of $x_t$ given $x_{1:t-1}$:
$$ \wh\Pr[ x_t | x_{1:t-1} ]
\ = \ \frac
{\wh b_\infty^\top \wh B_{x_t} \wh b_t}
{\sum_x \wh b_\infty^\top \wh B_x \wh b_t}. $$
\end{itemize}

Aside from the random sampling, the running time of the learning algorithm
is dominated by the SVD computation of an $n \times n$ matrix.
The time required for computing joint probability calculations is $O(tm^2)$
for length $t$ sequences---same as if one used the ordinary HMM
parameters ($O$ and $T$).
For conditional probabilities, we require some extra work (proportional to
$n$) to compute the normalization factor.
However, our analysis shows that this normalization factor is always close
to $1$ (see Lemma~\ref{lemma:approx-update}), so it can be safely omitted
in many applications.

%

Note that the algorithm does not explicitly ensure that the predicted
probabilities lie in the range $[0,1]$.
This is a dreaded problem that has been faced by other methods for learning
and using general operator models~\cite{jaeger}, and a number of heuristic
for coping with the problem have been proposed and may be applicable here
(see~\cite{jaeger2} for some recent developments).
We briefly mention that in the case of joint probability prediction,
clipping the predictions to the interval $[0,1]$ can only increase the
$L_1$ accuracy, and that the KL accuracy guarantee explicitly requires the
predicted probabilities to be non-zero.

\subsection{Main Results}

We now present our main results.
The first result is a guarantee on the accuracy of our joint probability
estimates for observation sequences.
The second result concerns the accuracy of conditional probability estimates --- a much more delicate quantity to bound due to conditioning on unlikely events.
We also remark that if the probability distribution is only approximately
modeled as an HMM, then our results degrade gracefully based on this
approximation quality.

\subsubsection{Joint Probability Accuracy}

Let $\sigma_m(M)$ denote the $m$th largest singular value of a matrix $M$.
Our sample complexity bound will depend polynomially on
$1/\sigma_m(P_{2,1})$ and $1/\sigma_m(O)$.

%
Also, define
\begin{equation}
\epsilon(k) \ = \ \min \left\{ \sum_{j \in S} \Pr[x_2=j] : S \subseteq
[n], |S| = n - k \right\},
\label{eq:epsk}
\end{equation}
and let
$$ n_0(\eps) \ = \ \min \{ k : \epsilon(k) \leq \eps \}.$$
In other words, $n_0(\eps)$ is the minimum number of observations that
account for about $1 - \epsilon$ of the total probability mass.
Clearly $n_0(\eps) \leq n$, but it can often be much smaller in real
applications.
For example, in many practical applications, the frequencies of observation
symbols observe a power law (called Zipf's law) of the form $f(k) \propto
1/k^s$, where $f(k)$ is the frequency of the $k$-th most frequently
observed symbol.
If $s>1$, then $\epsilon(k)= O(k^{1-s})$, and $n_0(\eps)=O(\eps^{1/(1-s)})$
becomes independent of the number of observations $n$.
This means that for such problems, our analysis below leads to a sample
complexity bound for the cumulative distribution $\Pr[x_{1:t}]$ that can
be independent of $n$. This is useful in domains with large $n$ such as
natural language processing.

\begin{theorem} \label{theorem:main-l1}
There exists a constant $C > 0$ such that the following holds.
Pick any $0 < \epsilon, \eta < 1$ and $t \geq 1$,
and let $\eps_0 = \sigma_m(O) \sigma_m(P_{2,1}) \epsilon / (4t\sqrt m)$.
Assume the HMM obeys Condition~\ref{cond:rank}, and
$$ N \geq C \cdot
\frac{t^2}{\epsilon^2}
\cdot
\left(
\frac{m}{\sigma_m(O)^2\sigma_m(P_{2,1})^4}
+ \frac{m \cdot n_0(\eps_0)}{\sigma_m(O)^2\sigma_m(P_{2,1})^2}
\right) \cdot \log \frac{1}{\eta}
. $$
With probability at least $1 - \eta$,
the model returned by the algorithm \textsc{LearnHMM$(m,N)$} satisfies
$$ \sum_{x_1,\ldots,x_t} | \Pr[x_1,\ldots,x_t] - \wh
\Pr[x_1,\ldots,x_t]| \leq \epsilon. $$
\end{theorem}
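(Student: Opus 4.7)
The plan is to bound the total variation error $\sum_{x_{1:t}}|\Pr[x_{1:t}]-\wh\Pr[x_{1:t}]|$ in three stages: (i) concentration of the empirical moments $\wh P_1,\wh P_{2,1},\wh P_{3,x,1}$ around their population counterparts; (ii) stability of the learned operators $\wh{\vec{b}}_1,\wh{\vec{b}}_\infty,\wh B_x$ under these moment perturbations; and (iii) propagation of operator-level errors through the $t$-fold product summed over all observation sequences. Stage (iii) is where the main difficulty lies.

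For stage (i), since $\wh P_1,\wh P_{2,1},\wh P_{3,x,1}$ are empirical averages of bounded indicator quantities over $N$ i.i.d.\ triples, vector Hoeffding/McDiarmid arguments give $\|\wh P_1-P_1\|_2$ and $\|\wh P_{2,1}-P_{2,1}\|_F$ of order $1/\sqrt N$ with failure probability $\leq\eta/3$, independently of $n$. The delicate quantity is $\sum_x\|\wh P_{3,x,1}-P_{3,x,1}\|_F$, whose naive bound scales with $\sqrt n$. To replace $n$ by $n_0(\eps)$, I split the sum over $x$ into the top-$n_0(\eps)$ most frequent middle symbols (bounded via Cauchy--Schwarz together with a Bernstein-type estimate that exploits the per-symbol mass) and the tail (bounded crudely by $O(\eps)$ using the definition of $\epsilon(\cdot)$), yielding a bound of order $\sqrt{n_0(\eps)/N}+\eps$.

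For stage (ii), a Wedin $\sin\Theta$ argument applied to the SVD of $\wh P_{2,1}$ (with spectral gap $\sigma_m(P_{2,1})$ by Lemma~\ref{lemma:svdU}) shows that $\range(\wh U)$ is close to $\range(U)=\range(O)$, whence $\sigma_m(\wh U^\top O)\gtrsim\sigma_m(O)$ so $\wh U$ inherits Condition~\ref{cond:U}. Feeding the moment bounds of (i) into the definitions of $\wh{\vec{b}}_1,\wh{\vec{b}}_\infty,\wh B_x$ and invoking standard perturbation bounds for pseudo-inverses, each learned parameter is close to its population value; for $\wh B_x$ the bound scales as $1/(\sigma_m(O)\sigma_m(P_{2,1})^2)$ because the $(\wh U^\top\wh P_{2,1})^+$ factor contributes a squared denominator. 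Summing over $x$ inherits the $n_0(\eps)$ dependence from stage (i), giving control of $\sum_x\|\wh B_x-B_x\|$ in the relevant operator norm.

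For stage (iii), the key step, I plan to work in similarity coordinates: set $\wt{\vec{b}}_1=(U^\top O)^{-1}\wh{\vec{b}}_1$, $\wt{\vec{b}}_\infty^\top=\wh{\vec{b}}_\infty^\top(U^\top O)$, and $\wt B_x=(U^\top O)^{-1}\wh B_x(U^\top O)$, so by Lemma~\ref{lemma:obs} the estimator equals $\wt{\vec{b}}_\infty^\top\wt B_{x_{t:1}}\wt{\vec{b}}_1$ while the true joint probability equals $\vec{1}_m^\top A_{x_{t:1}}\vec{\pi}$. Expanding the difference telescopically produces $t+2$ terms; the generic one has the form
$$ \wt{\vec{b}}_\infty^\top\wt B_{x_t}\cdots\wt B_{x_{k+1}}\bigl(\wt B_{x_k}-A_{x_k}\bigr)A_{x_{k-1}}\cdots A_{x_1}\vec{\pi}, $$
plus boundary terms accounting for $\wt{\vec{b}}_1-\vec{\pi}$ and $\wt{\vec{b}}_\infty-\vec{1}_m$. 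After summing over $x_{1:t}$, the $A$-products collapse using $\sum_x A_x=T$ and $\vec{1}_m^\top T=\vec{1}_m^\top$ (column stochasticity), reducing matters to $\sum_x\|(\wt B_x-A_x)\vec{v}\|_1$ evaluated on vectors $\vec{v}$ of bounded $\ell_1$ norm that represent partial conditional distributions. The main obstacle is to avoid geometric blow-up of the $t$-fold product: this is resolved precisely because $\{A_x\}$ (and approximately $\{\wt B_x\}$) preserve the $\ell_1$ norm on the state simplex on average, so errors accumulate additively and the per-step cost needs only to be $O(\eps/t)$. Calibrating the slack in stages (i)--(ii) by $\eps\cdot\sigma_m(O)\sigma_m(P_{2,1})/\sqrt m\,t$ then reproduces the sample-size bound in the theorem statement.
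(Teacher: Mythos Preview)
Your three-stage plan matches the paper's proof almost exactly: stage~(i) is Lemma~\ref{lemma:sampling}, stage~(ii) is Lemmas~\ref{lemma:subspace}--\ref{lemma:parameters}, and stage~(iii) is Lemmas~\ref{lemma:l1-induction}--\ref{lemma:l1}. Two small differences are worth flagging. First, in stage~(iii) the paper bounds $\sum_{x_{1:t}}\|(\wh U^\top O)^{-1}(\wh B_{x_{t:1}}\wh b_1-\wt B_{x_{t:1}}\wt b_1)\|_1$ by induction on $t$, obtaining $(1+\Delta)^t\delta_1+(1+\Delta)^t-1$; your telescoping expansion is an unrolled version of the same induction and yields the identical bound once you account for the $(1+\Delta)^{t-k}$ factor coming from the $\wt B$-suffix (so the accumulation is geometric in form, made effectively additive only because $\Delta=O(\epsilon/t)$). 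Second, the paper does the similarity transformation with the \emph{empirical} $\wh U$ rather than the population $U$: it defines $\wt b_1,\wt b_\infty,\wt B_x$ as the exact observable parameters computed with $\wh U$ in place of $U$, so that $(\wh U^\top O)^{-1}\wt B_x(\wh U^\top O)=A_x$ holds exactly and the only error source is $\wh P$ versus $P$. Your choice of conjugating by $U^\top O$ works too, but forces you to track $\wh U$-versus-$U$ and $\wh P$-versus-$P$ perturbations simultaneously when bounding $\|(U^\top O)^{-1}\wh B_x(U^\top O)-A_x\|_1$; the paper's device sidesteps this and makes the perturbation calculus in Lemma~\ref{lemma:parameters} cleaner.
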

The main challenge in proving Theorem~\ref{theorem:main-l1} is
understanding how the estimation errors accumulate in the algorithm's
probability calculation. This would have been less problematic if we had
estimates of the usual HMM parameters $T$ and $O$; the fully observable
representation forces us to deal with more cumbersome matrix and vector
products.

\subsubsection{Conditional Probability Accuracy}

In this section, we analyze the accuracy of our conditional probability predictions
$\wh \Pr[x_t | x_1, \ldots, x_{t-1}]$. Intuitively, we might hope that
these predictive distributions do not become arbitrarily bad over time,
(as
$t\rightarrow \infty$). The reason is that while estimation errors
propagate into long-term probability predictions (as evident in
Theorem~\ref{theorem:main-l1}), the history of observations constantly provides
feedback about the underlying hidden state, and this information is
incorporated using Bayes' rule (implicitly via our internal state updates).

This intuition was confirmed by \cite{Even-DarKM07}, who showed that if one
has an approximate model of $T$ and $O$ for the HMM, then under certain
conditions, the conditional prediction does not diverge.
This condition is the positivity of the `value
of observation' $\gamma$, defined as
$$ \gamma \ = \ \inf_{\vec{v}:\|\vec{v}\|_1 =1} \|O\vec{v}\|_1. $$
Note that $\gamma \geq \sigma_m(O)/\sqrt{n}$, so it is guaranteed to be
positive by Condition~\ref{cond:rank}.
However, $\gamma$ can be much larger than what this crude lower bound suggests.

To interpret this quantity $\gamma$, consider any two distributions over hidden
states $\vec{h}, \wh h \in \R^m$. Then $\|O(\vec{h}-\wh h)\|_1 \geq \gamma
\|\vec{h}-\wh h\|_1$.
Regarding $\vec{h}$ as the true hidden state distribution and $\wh h$ as the estimated hidden
state distribution, this inequality gives a lower bound on the error of the estimated
observation distributions under $O$. In other words, the observation process, on average, reveal errors in our hidden state estimation.
The work of \citep{Even-DarKM07} uses this as a contraction property to show how
prediction errors (due to using an approximate model) do not diverge.
In our setting, this is more difficult as we do not explicitly estimate $O$ nor do we explicitly maintain distributions over hidden states.

We also need the following assumption, which we discuss further
following the theorem statement. 
\begin{condition}[Stochasticity Condition] \label{cond:alpha}
For all observations $x$ and all states $i$ and $j$, $[A_x]_{ij} \geq
\alpha > 0$.
\end{condition}

\begin{theorem} \label{theorem:main-tracking}
There exists a constant $C > 0$ such that the following holds.
Pick any $0 < \epsilon, \eta < 1$,
and let $\eps_0 = \sigma_m(O) \sigma_m(P_{2,1}) \epsilon / (4\sqrt m)$.
Assume the HMM obeys Conditions~\ref{cond:rank} and \ref{cond:alpha}, and
$$ N \ \geq \ C \cdot
\left(
\left(
\frac{m}{\epsilon^2 \alpha^2}
+ \frac{(\log (2/\alpha))^4}{\epsilon^4 \alpha^2 \gamma^4}
\right)
\cdot
\frac{m}{\sigma_m(O)^2 \sigma_m(P_{2,1})^4}
+
\frac{1}{\epsilon^2}
\cdot
\frac{m \cdot n_0(\eps_0)}{\sigma_m(O)^2 \sigma_m(P_{2,1})^2}
\right)
\cdot
\log \frac{1}{\eta} .
$$
With probability at least $1 - \eta$, then the model returned by
\textsc{LearnHMM$(m,N)$} satisfies, for any time $t$,
$$
KL(\Pr[x_t | x_1, \ldots, x_{t-1}] \ || \ \wh \Pr[x_t | x_1, \ldots, x_{t-1}])
\ = \
\E_{x_{1:t}} \left[ 
\ln \frac{ \Pr[ x_t | x_{1:t-1} ] }{ \wh\Pr[ x_t | x_{1:t-1} ]} \right]
\ \leq \ \epsilon. 
$$
\end{theorem}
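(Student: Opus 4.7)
The plan is to prove Theorem~\ref{theorem:main-tracking} by combining a statistical layer (showing that the empirical model is close to an exact observable representation) with a dynamical layer (showing that a close observable representation yields non-diverging conditional-prediction error, despite $t$ being unbounded). The structural inspiration for the latter is the contraction analysis of \cite{Even-DarKM07}, adapted to the observable-operator setting where we manipulate $\vec{b}_t$ rather than $\vec{h}_t$ directly. Throughout, Lemma~\ref{lemma:belief} (item~2) will let us translate between $\vec{b}_t$ and $\vec{h}_t$ via the fixed linear map $U^\top O$.

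First, I would establish the statistical layer. Using vector/matrix Bernstein-type concentration on $N$ i.i.d.\ observation triples, I would bound $\|\wh P_1 - P_1\|_2$, $\|\wh P_{2,1} - P_{2,1}\|_2$, and (summed over $x$) $\sum_x \|\wh P_{3,x,1} - P_{3,x,1}\|_2$; this last sum is where the $n_0(\eps)$ term enters, since I would partition $[n]$ into the $n_0(\eps)$ frequent symbols (handled by standard concentration) and the rest (handled by a coarse $L_1$ mass bound of $\eps$). Applying Wedin's $\sin\Theta$ theorem to the empirical SVD of $\wh P_{2,1}$ yields $\|(I - \wh U\wh U^\top)U\|$ controlled by the perturbation divided by $\sigma_m(P_{2,1})$, after which routine pseudo-inverse perturbation bounds give sharp estimates of $\wh b_1 - b_1$, $\wh b_\infty - b_\infty$, and $\sum_x \|\wh B_x - B_x\|$ in terms of $1/\sigma_m(O)$, $1/\sigma_m(P_{2,1})$, and the sample-size-induced errors. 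The terms in the sample-complexity bound with the $1/(\epsilon^2 \alpha^2)$ and $(\log(2/\alpha))^4/(\epsilon^4\alpha^2\gamma^4)$ factors will come from the dynamical step below.

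Second, I would analyze error propagation under the recursive update $\vec{b}_{t+1} = B_{x_t}\vec{b}_t / (\vec{b}_\infty^\top B_{x_t}\vec{b}_t)$. Mapping back through $(U^\top O)^{-1}$, I can view $\wh{\vec h}_t := (U^\top O)^{-1} \wh{\vec b}_t$ as an approximate hidden-state distribution; the one-step additive error has two parts, a ``parameter error'' of order $\sum_x \|\wh B_x - B_x\|$ and a ``propagated error'' that without further control would compound in $t$. The contraction is obtained by taking expectation over $x_t$ drawn from the true model: the divisor $\vec{b}_\infty^\top B_{x_t}\vec{b}_t = \Pr[x_t | x_{1:t-1}]$ together with $\gamma = \inf_{\|v\|_1 = 1}\|Ov\|_1 > 0$ yields a contraction factor strictly less than one when averaged over $x_t$, so the expected $L_1$ error between $\wh{\vec h}_t$ and $\vec h_t$ stays bounded by a quantity proportional to the parameter error divided by $\gamma$ (with log factors from iterating the contraction carefully). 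Passing from the $\vec b_t$ representation to the conditional observation distribution is then done via the identity $\Pr[x_t | x_{1:t-1}] = [O\vec h_t]_i$ and its empirical analog involving $U\wh{\vec b}_t$ (Remark~\ref{remark:U}).

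The last and hardest step is converting the bounded $L_1$ tracking error into a KL bound uniform in $t$. KL is not bounded by $L_1$ in general, and here it is precisely the stochasticity Condition~\ref{cond:alpha} that saves us: $[A_x]_{ij}\geq \alpha$ implies that every entry of the true predictive distribution is at least $\alpha/m$ in a suitable sense, so the $\log \wh\Pr / \Pr$ ratio can be controlled when the estimated model is close. Concretely, I would split the KL expectation into a ``good'' event (where $\wh\Pr[x_t|x_{1:t-1}]$ is bounded below by something like $\alpha/2$) on which KL is bounded by $O(\text{error}^2/\alpha)$ via the standard $p\log(p/q) \leq (p-q)^2/q + (p-q)$ inequality, and a ``bad'' event, which by the contraction argument occurs with probability so small that it contributes at most $\epsilon$ even after multiplying by the worst-case $\log(1/\alpha)$ factor. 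This split, and the careful tracking of how the $\gamma$-contraction keeps the parameter error from inflating through $(\log(2/\alpha))^4/(\epsilon^4 \alpha^2 \gamma^4)$ in the sample bound, is the main obstacle; everything else reduces to combining the two layers and optimizing constants.
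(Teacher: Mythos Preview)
Your statistical layer is essentially the paper's: concentration on triples (with the $n_0(\eps)$ split for $\sum_x \epsilon_{3,x,1}$), Wedin/Weyl-type subspace perturbation for $\wh U$, and pseudo-inverse bounds to control $\delta_\infty,\Delta,\delta_1$. No objection there.

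The dynamical layer, however, departs from the paper in a way that leaves a real gap. You propose to track the error in $L_1$, claim a strict multiplicative contraction ``when averaged over $x_t$'', and only at the end convert to KL via the $\alpha$ lower bound. The paper does \emph{not} obtain a linear $L_1$ contraction, and I do not see how you would either: a Bayes update can \emph{expand} $L_1$ distance between posteriors, and $\gamma$ only tells you $\|O(\vec h - \wh h)\|_1 \geq \gamma \|\vec h - \wh h\|_1$, which does not by itself yield $\E_{x_t}\|\vec h_{t+1}-\wh h_{t+1}\|_1 \leq \rho\|\vec h_t-\wh h_t\|_1$ for any $\rho<1$. The paper instead runs the entire recursion in KL. The key step (their Lemma~\ref{lemma:kl-recurrence}) uses the chain rule for KL to write
\[
KL(\vec h_{t+1}\|\wh g_{t+1}) \ \leq \ KL(\vec h_t\|\wh g_t) \ - \ KL(O\vec h_t\|O\wh g_t) \ + \ \eps_0,
\]
and then lower-bounds the subtracted term via Pinsker, the definition of $\gamma$, and a reverse bound $KL\leq \|\cdot\|_1\ln(2/\alpha)$ (valid because $[\wh g_t]_i\geq \alpha/2$). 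This yields a \emph{quadratic} recurrence $z_{t+1}\leq z_t - c\,z_t^2 + \eps_0$ with $c=\gamma^2/(2\ln(2/\alpha)^2)$, whose fixed point is $\sqrt{\eps_0/c}$. That square root is exactly why the sample bound carries $(\log(2/\alpha))^4/(\epsilon^4\alpha^2\gamma^4)$ rather than the $1/(\epsilon^2\gamma^2)$ you would expect from a linear contraction; so your sketch does not even predict the stated rate.

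A second, smaller gap: your good-event/bad-event split for the $L_1\to KL$ conversion is unnecessary and would complicate things. The paper instead proves \emph{deterministically} (Corollary~\ref{corollary:belief-lb}) that once $\delta_\infty,\max_x\Delta_x,\delta_1$ are small enough, the normalized surrogate belief $\wh g_t = (\wh U^\top O)^{-1}\wh b_t/\vec 1_m^\top(\wh U^\top O)^{-1}\wh b_t$ satisfies $[\wh g_t]_i\geq \alpha/2$ for every $t$; this is what makes the reverse-Pinsker step and the final observation-KL bound go through without any probabilistic splitting. You should establish that lower bound first, then run the KL recursion directly.
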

To justify our choice of error measure, note that
the problem of bounding the errors of conditional probabilities is
complicated by the issue of that, over the long run, we may have to condition on a very low probability event.
Thus we need to control the relative accuracy of our predictions.
This makes the KL-divergence a natural choice for
the error measure. Unfortunately, because our HMM conditions are more naturally
interpreted in terms of spectral and normed quantities, we end up switching
back and forth between KL and $L_1$ errors via Pinsker-style inequalities
(as in \citep{Even-DarKM07}). It is not clear to us if a significantly better guarantee could be obtained with a pure $L_1$ error analysis (nor is it clear how to do such an analysis).

The analysis in \citep{Even-DarKM07} (which assumed that approximations to $T$ and $O$ were provided) dealt with this problem of dividing by zero (during a Bayes' rule update) by explicitly modifying the approximate model so that it \emph{never} assigns the probability of any event to be zero (since if this event occurred, then the conditional probability is no longer defined). In our setting, Condition~\ref{cond:alpha} ensures that true model never assigns the probability of any event to be zero. We can relax this condition somewhat (so that we need not quantify over all observations), though we do not discuss this here.

We should also remark that while our sample complexity bound is significantly larger than in Theorem~\ref{theorem:main-l1}, we are also bounding the more stringent KL-error measure on conditional distributions.

\subsubsection{Learning Distributions $\epsilon$-close to HMMs}

Our $L_1$ error guarantee for predicting joint probabilities still holds if
the sample used to estimate $\wh P_1, \wh P_{2,1}, \wh P_{3,x,1}$ come from
a probability distribution $\Pr[\cdot]$ that is merely close to an HMM.
Specifically, all we need is that there exists some $t_\text{max} \geq 3$ and
some $m$ state HMM with distribution $\Pr^\text{HMM}[\cdot]$ such that:
\begin{enumerate}
\item $\Pr^\text{HMM}$ satisfies Condition~\ref{cond:rank} (HMM Rank
Condition),

\item For all $t \leq t_\text{max}$,
$\sum_{x_{1:t}} |\Pr[x_{1:t}] - {\Pr}^\text{HMM}[x_{1:t}]| \leq
\epsilon^\text{HMM}(t)$,

\item $\epsilon^\text{HMM}(2) \ll \frac12 \sigma_m(P_{2,1}^\text{HMM})$.
\end{enumerate}
The resulting error of our learned model $\wh \Pr$ is
$$
\sum_{x_{1:t}} |\Pr[x_{1:t}] - \wh \Pr[x_{1:t}]|
\ \leq \
\epsilon^\text{HMM}(t) +
\sum_{x_{1:t}} |{\Pr}^\text{HMM}[x_{1:t}] - \wh \Pr[x_{1:t}]|
$$
for all $t \leq t_\text{max}$. The second term is now bounded as in
Theorem~\ref{theorem:main-l1}, with spectral parameters corresponding to
$\Pr^\text{HMM}$.


\subsection{Subsequent Work}

Following the initial publication of this work, Siddiqi, Boots, and Gordon
have proposed various extensions to the \textsc{LearnHMM} algorithm and its
analysis~\cite{SBG10}.
First, they show that the model parameterization used by our algorithm in
fact captures the class of HMMs with rank $m$ transition matrices, which is
more general than the class of HMMs with $m$ hidden states.
Second, they propose extensions for using longer sequences in the parameter
estimation, and also for handling real-valued observations.
These extensions prove to be useful in both synthetic experiments and an
application to tracking with video data.

A recent work of Song, Boots, Siddiqi, Gordon, and Smola provides a
kernelization of our model parameterization in the context of Hilbert space
embeddings of (conditional) probability distributions, and extends various
aspects of the \textsc{LearnHMM} algorithm and analysis to this
setting~\cite{SBSGS10}.
This extension is also shown to be advantageous in a number of
applications.

\section{Proofs}

Throughout this section, we assume the HMM obeys Condition~\ref{cond:rank}.
Table~\ref{table:notation} summarizes the notation that will be used throughout the analysis in this section.

\begin{table}[h]
\begin{center}
\begin{tabular}{ll}
\hline
\abovespace
$m$, $n$ & Number of states and observations \\
$n_0(\eps)$ & Number of significant observations \\
$O$, $T$, $A_x$ & HMM parameters \\
$P_1$, $P_{2,1}$, $P_{3,x,1}$ & Marginal probabilities \\
$\wh P_1$, $\wh P_{2,1}$, $\wh P_{3,x,1}$ & Empirical marginal probabilities \\
$\epsilon_1$, $\epsilon_{2,1}$, $\epsilon_{3,x,1}$ & Sampling errors
[Section~\ref{section:errors}] \\
$\wh U$ & Matrix of $m$ left singular vectors of $\wh P_{2,1}$ \\
$\wt b_\infty$, $\wt B_x$, $\wt b_1$ & True observable parameters using $\wh U$
[Section~\ref{section:errors}] \\
$\wh b_\infty$, $\wh B_x$, $\wh b_1$ & Estimated observable parameters using $\wh U$ \\
$\delta_\infty$, $\Delta_x$, $\delta_1$ & Parameter errors
[Section~\ref{section:errors}] \\
$\Delta$ & $\sum_x \Delta_x$
[Section~\ref{section:errors}] \\
$\sigma_m(M)$ & $m$-th largest singular value of matrix $M$ \\
$\vec{b}_t$, $\wh b_t$ & True and estimated states
[Section~\ref{section:tracking-proof}] \\
$\vec{h}_t$, \ $\wh h_t$, \ $\wh g_t$ & $(\wh U^\top O)^{-1} \vec{b}_t$, \ $(\wh U^\top
O)^{-1} \wh b_t$, \ $\wh h_t / (\vec{1}_m^\top \wh h_t)$
[Section~\ref{section:tracking-proof}] \\
$\wh A_x$ & $(\wh U^\top O)^{-1} \wh B_x (\wh U^\top O)$
[Section~\ref{section:tracking-proof}] \\
$\gamma$, \ $\alpha$ & $\inf \{ \|Ov\|_1 : \|v\|_1 = 1 \}$, \ $\min \{ [A_x]_{i,j} \}$ \\
\hline
\end{tabular}
\caption{Summary of notation.} \label{table:notation}
\end{center}
\vskip -0.15in
\end{table}

\subsection{Estimation Errors} \label{section:errors}

Define the following sampling error quantities:
\begin{eqnarray*}
\epsilon_1 & = & \| \wh P_1 - P_1 \|_2 \\
\epsilon_{2,1} & = & \| \wh P_{2,1} - P_{2,1} \|_2 \\
\epsilon_{3,x,1} & = & \| \wh P_{3,x,1} - P_{3,x,1} \|_2
\end{eqnarray*}
The following lemma bounds these errors with high probability as a function
of the number of observation samples used to form the estimates.
\begin{lemma} \label{lemma:sampling}
If the algorithm independently samples $N$ observation triples from the
HMM, then with probability at least $1 - \eta$:
\begin{eqnarray*}
\epsilon_1 &\leq& \sqrt{\frac1N \ln \frac3{\eta}} + \sqrt{\frac1N}
\label{eq:eps1} \\
\epsilon_{2,1} &\leq& \sqrt{\frac1N \ln \frac3{\eta}} + \sqrt{\frac1N}
\label{eq:eps21} \\
\max_x \epsilon_{3,x,1} &\leq&
\sqrt{\frac1N \ln \frac3{\eta}} + \sqrt{\frac1N}
\label{eq:maxeps3x1} \\
\sum_x \epsilon_{3,x,1} &\leq&
\min_{k} \left( \sqrt{\frac{k}{N} \ln \frac{3}{\eta}} + \sqrt{\frac{k}{N}}
 + 2 \epsilon(k) \right)
+ \sqrt{\frac1N \ln \frac3{\eta}} + \sqrt{\frac1N}
\label{eq:eps3x1}
\end{eqnarray*}
where $\epsilon(k)$ is defined in \eqref{eq:epsk}.
\end{lemma}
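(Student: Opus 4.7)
The plan is to apply McDiarmid's bounded-differences inequality to each of the relevant norms, control the expectations by Jensen's inequality and a direct variance computation, and take a union bound at the end. Three observations drive everything: first, the spectral norm of a matrix is dominated by its Frobenius norm, so all matrix-valued bounds reduce to vector-valued ones on flattened tensors; second, replacing a single sample in any of $\wh P_1$, $\wh P_{2,1}$, or the collection $\{\wh P_{3,x,1}\}_x$ perturbs it in at most two entries by $\pm 1/N$, so the bounded difference of any $\ell_2$/Frobenius norm is at most $\sqrt{2}/N$; third, in each case a direct variance calculation bounds the expected squared centered norm by $1/N$, which via Jensen yields an expected-deviation bound of $\sqrt{1/N}$.

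For $\epsilon_1$ I would write $\wh P_1 = N^{-1}\sum_i e_{x_1^{(i)}}$, note that each sample changes $\|\wh P_1 - P_1\|_2$ by at most $\sqrt{2}/N$, and combine McDiarmid ($\Pr[\epsilon_1 > \E\epsilon_1 + t]\leq e^{-Nt^2}$) with $\E\epsilon_1 \leq \sqrt{N^{-1}\sum_i p_i(1-p_i)} \leq \sqrt{1/N}$ to obtain the stated bound after tail-balancing against $\eta/3$. The argument for $\epsilon_{2,1}$ is identical once I use $\|\cdot\|_2 \leq \|\cdot\|_F$ and treat the $n\times n$ matrix as a length-$n^2$ vector.

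For the third-moment quantities the key device is the aggregate
\[
  f \;=\; \sqrt{\sum_{x\in[n]} \|\wh P_{3,x,1} - P_{3,x,1}\|_F^2},
\]
the $\ell_2$ norm of the tensor difference flattened to length $n^3$. The same McDiarmid-plus-Jensen argument yields $f \leq \sqrt{1/N} + \sqrt{(1/N)\ln(3/\eta)}$ with probability $\geq 1-\eta/3$. Since $\epsilon_{3,x,1} \leq \|\wh P_{3,x,1} - P_{3,x,1}\|_F \leq f$ for every $x$, this single event immediately controls $\max_x \epsilon_{3,x,1}$ without any union bound over $x$. For the summed bound I fix $k$ and let $S$ consist of the $n-k$ least likely values of $x_2$, so $\sum_{x\in S}\Pr[x_2=x]=\epsilon(k)$. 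Cauchy-Schwarz on the frequent complement gives $\sum_{x\in S^c}\epsilon_{3,x,1}\leq \sqrt{k}\,f$, which accounts for the $\sqrt{(k/N)\ln(3/\eta)}+\sqrt{k/N}$ terms inside the minimum; on the rare set, the crude bound $\|M\|_2 \leq \|M\|_F \leq \|M\|_1$ for nonnegative $M$ yields
\[
  \sum_{x\in S}\epsilon_{3,x,1} \;\leq\; \sum_{x\in S}\bigl(\|\wh P_{3,x,1}\|_1 + \|P_{3,x,1}\|_1\bigr) \;=\; \wh\epsilon_S + \epsilon(k),
\]
where $\wh\epsilon_S$ is the empirical probability that $x_2\in S$. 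Since $\wh\epsilon_S$ is an average of $N$ i.i.d.\ Bernoullis with expectation exactly $\epsilon(k)$, McDiarmid/Hoeffding on this scalar contributes the additional $\sqrt{1/N}+\sqrt{(1/N)\ln(3/\eta)}$ appearing \emph{outside} the $\min_k$. A union bound over the three (or four) high-probability events then delivers the final $1-\eta$ guarantee, and since the resulting inequality holds for every $k$, we may take $\min_k$.

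The routine obstacle is just bookkeeping of constants and packaging the tail bounds into the stated form; the only real conceptual choice is the introduction of the single aggregate $f$, which simultaneously dominates the $\max_x$ bound and, via Cauchy-Schwarz, the frequent portion of the sum. This avoids a per-$x$ union bound and is precisely what keeps the tail terms free of any $\log n$ dependence, a feature that is essential for the $n$-independent sample-complexity claims made in the main theorems.
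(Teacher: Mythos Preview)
Your approach is essentially the paper's: McDiarmid on the flattened empirical vectors, Jensen on the variance to control expectations, and for the summed bound a frequent/rare split with Cauchy--Schwarz on the frequent part. The one packaging difference worth noting is how the rare-set contribution is handled. The paper applies McDiarmid once to a \emph{coarsened} vector that keeps separate coordinates for triples with $x_2\notin S_k$ but lumps all triples with $x_2\in S_k$ into a single bin; that single event simultaneously bounds $\sum_{x\notin S_k}\|\wh P_{3,x,1}-P_{3,x,1}\|_F^2$ and $\bigl|\wh\epsilon_S-\epsilon(k)\bigr|$, and the rare sum is then controlled via the sign identity $\sum|a|\leq\bigl|\sum a\bigr|+2\epsilon(k)$. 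You instead reuse the full aggregate $f$ for the frequent part and invoke a separate Hoeffding event on the scalar $\wh\epsilon_S$; this costs one extra event in the union bound (your parenthetical ``three (or four)'') but is otherwise equivalent and arguably more transparent. One small precision: your closing phrase ``holds for every $k$'' is not quite right, since the Hoeffding event is $k$-specific; the clean move is to fix in advance the minimizer $k^\star$ (which depends only on the true distribution, not the sample) and apply the fourth event for that single $k^\star$.
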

\begin{proof}
See Appendix~\ref{section:sampling}.
\end{proof}
The rest of the analysis estimates how the sampling errors affect the
accuracies of the model parameters (which in turn affect the prediction
quality).
We need some results from matrix perturbation theory, which are given in
Appendix~\ref{section:matrix}.

Let $U \in \R^{n \times m}$ be matrix of left singular vectors of
$P_{2,1}$. The first lemma implies that if $\wh P_{2,1}$ is sufficiently
close to $P_{2,1}$, \emph{i.e.}~$\epsilon_{2,1}$ is small enough, then the
difference between projecting to $\range(\wh U)$ and to $\range(U)$ is
small. In particular, $\wh U^\top O$ will be invertible and be nearly as
well-conditioned as $U^\top O$.
\begin{lemma} \label{lemma:subspace}
Suppose $\epsilon_{2,1} \leq \eps \cdot \sigma_m(P_{2,1})$ for some
$\eps < 1/2$.
Let $\eps_0 = \epsilon_{2,1}^2/((1-\eps)\sigma_m(P_{2,1}))^2$.
Then:
\begin{enumerate}
\item $\eps_0 < 1$,
\item $\sigma_m(\wh U^\top \wh P_{2,1}) \geq (1-\eps) \sigma_m(P_{2,1})$,
\item $\sigma_m(\wh U^\top P_{2,1}) \geq \sqrt{1-\eps_0} \sigma_m(P_{2,1})$,
\item $\sigma_m(\wh U^\top O) \geq \sqrt{1-\eps_0} \sigma_m(O)$.
\end{enumerate}
\end{lemma}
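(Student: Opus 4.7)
The proof should proceed by tackling the four claims roughly in the order stated, with the only non-trivial ingredient being Wedin's $\sin\Theta$ theorem (which is presumably stated in the matrix perturbation appendix the excerpt refers to). I would first clear part (1), which is pure arithmetic: substituting the hypothesis $\epsilon_{2,1}\leq \eps\,\sigma_m(P_{2,1})$ gives $\eps_0\leq \eps^2/(1-\eps)^2$, and the assumption $\eps<1/2$ is equivalent to $\eps/(1-\eps)<1$, yielding $\eps_0<1$.

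For part (2), I would use Weyl's (equivalently Mirsky's) inequality for singular values: $|\sigma_m(\wh P_{2,1})-\sigma_m(P_{2,1})|\leq\|\wh P_{2,1}-P_{2,1}\|_2=\epsilon_{2,1}\leq \eps\,\sigma_m(P_{2,1})$, so $\sigma_m(\wh P_{2,1})\geq(1-\eps)\sigma_m(P_{2,1})$. Because $\wh U$ carries the top-$m$ left singular vectors of $\wh P_{2,1}$, multiplication by $\wh U^\top$ preserves the top $m$ singular values, giving $\sigma_m(\wh U^\top \wh P_{2,1})=\sigma_m(\wh P_{2,1})$.

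The heart of the argument is part (3). Since $P_{2,1}$ has rank exactly $m$ (Lemma~\ref{lemma:svdU}), its $(m{+}1)$-th singular value is $0$, so applying Wedin's $\sin\Theta$ theorem to the pair $P_{2,1},\wh P_{2,1}$ with the ``gap'' quantified by $\sigma_m(\wh P_{2,1})$ (combined with part (2)) yields
\[
\bigl\|(I-\wh U\wh U^\top)U\bigr\|_2
\;\leq\;\frac{\epsilon_{2,1}}{\sigma_m(\wh P_{2,1})}
\;\leq\;\frac{\epsilon_{2,1}}{(1-\eps)\sigma_m(P_{2,1})}
\;=\;\sqrt{\eps_0}.
\]
This is the step I expect to be the main technical obstacle, because the exact form of Wedin's bound that produces the denominator $\sigma_m(\wh P_{2,1})$ (rather than $\sigma_m(P_{2,1})$) matters for matching the stated constants. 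From this, writing $\sigma_m(\wh U^\top U)^2=\lambda_m(U^\top\wh U\wh U^\top U)=1-\|(I-\wh U\wh U^\top)U\|_2^2$ gives $\sigma_m(\wh U^\top U)\geq \sqrt{1-\eps_0}$. Then, factoring $P_{2,1}=U\Sigma V^\top$ (thin SVD) with $V$ having orthonormal columns, I get $\sigma_m(\wh U^\top P_{2,1})=\sigma_m(\wh U^\top U\,\Sigma)\geq \sigma_m(\wh U^\top U)\,\sigma_m(\Sigma)\geq \sqrt{1-\eps_0}\,\sigma_m(P_{2,1})$.

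Part (4) then follows by piggy-backing on the same sine bound. Lemma~\ref{lemma:svdU} gives $\range(O)=\range(U)$, so $O=U(U^\top O)$, and since $U$ has orthonormal columns $\sigma_m(U^\top O)=\sigma_m(O)$. Writing $\wh U^\top O=(\wh U^\top U)(U^\top O)$ and applying the multiplicative singular-value inequality yields $\sigma_m(\wh U^\top O)\geq \sigma_m(\wh U^\top U)\,\sigma_m(O)\geq \sqrt{1-\eps_0}\,\sigma_m(O)$, completing the lemma.
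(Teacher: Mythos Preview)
Your proposal is correct and follows essentially the same route as the paper: Weyl for part~(2), Wedin's $\sin\Theta$ bound (the paper's Corollary~\ref{cor:svd}) to obtain $\|\wh U_\perp^\top U\|_2\leq \sqrt{\eps_0}$, and then the identity $\sigma_m(\wh U^\top U)^2 = 1-\|\wh U_\perp^\top U\|_2^2$ to deduce parts~(3) and~(4). The paper is terser---it writes the last step as ``for any $x$, $\|\wh U^\top U x\|_2 \geq \|x\|_2\sqrt{1-\eps_0}$, and the remaining claims follow''---whereas you explicitly factor $P_{2,1}=U\Sigma V^\top$ and $O=U(U^\top O)$ (via $\range(O)=\range(U)$) and apply the product singular-value inequality, but this is just an unpacking of the same argument.
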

\begin{proof}
The assumptions imply $\eps_0 < 1$.
Since $\sigma_m(\wh U^\top \wh P_{2,1}) = \sigma_m(\wh P_{2,1})$, the
second claim is immediate from Corollary~\ref{cor:svd}.
Let $U \in \R^{n \times m}$ be the matrix of left singular vectors of
$P_{2,1}$.
For any $x \in \R^m$, $\|\wh U^\top U x\|_2 = \|x\|_2 \sqrt{1 - \|\wh
U_\perp^\top U\|_2^2} \geq \|x\|_2 \sqrt{1 - \eps_0}$ by
Corollary~\ref{cor:svd} and the fact $\eps_0 < 1$.
The remaining claims follow.
\end{proof}

Now we will argue that the estimated parameters $\wh b_\infty, \wh B_x, \wh
b_1$ are close to the following true parameters from the observable representation
when $\wh U$ is used for $U$:
\begin{eqnarray*}
\wt b_\infty & = & (P_{2,1}^\top \wh U)^+ P_1 \ = \  (\wh U^\top O)^{-\top}
\vec{1}_m, \\
\wt B_x & = & (\wh U^\top P_{3,x,1}) (\wh U^\top P_{2,1})^+
\ = \ (\wh U^\top O) A_x (\wh U^\top O)^{-1} \quad \text{for $x = 1,
\ldots, n$}, \\
\wt b_1 & = & \wh U^\top P_1.
\end{eqnarray*}
By Lemma~\ref{lemma:obs},
as long as $\wh U^\top O$ is invertible, these parameters $\wt b_\infty,
\wt B_x, \wt b_1$ constitute a valid observable representation for the HMM.

Define the following errors of the estimated parameters:
\begin{eqnarray*}
\delta_\infty & = &
\left\| (\wh U^\top O)^\top (\wh b_\infty - \wt b_\infty)\right\|_\infty
\ = \ \left\|(\wh U^\top O)^\top \wh b_\infty - \vec{1}_m\right\|_\infty , \\
\Delta_x & = & \left\|
(\wh U^\top O)^{-1} \left( \wh B_x - \wt B_x \right) (\wh U^\top O)
\right\|_1
\ = \ \left\| (\wh U^\top O)^{-1} \wh B_x (\wh U^\top O) - A_x \right\|_1 , \\
\Delta & = & \sum_x \Delta_x \\
\delta_1 & = & \left\|(\wh U^\top O)^{-1} (\wh b_1 - \wt b_1)\right\|_1
\ = \ \left\| (\wh U^\top O)^{-1} \wh b_1 - \vec{\pi} \right\|_1 .
\end{eqnarray*}
We can relate these to the sampling errors as follows.
\begin{lemma} \label{lemma:parameters}
Assume $\epsilon_{2,1} \leq \sigma_m(P_{2,1})/ 3$. Then:
\begin{eqnarray*}
\delta_\infty & \leq &
4 \cdot \left(
\frac{\epsilon_{2,1}}{\sigma_m(P_{2,1})^2}
+ \frac{\epsilon_1}{3\sigma_m(P_{2,1})} \right), \\
\Delta_x & \leq & 
\frac{8}{\sqrt{3}} \cdot \frac{\sqrt{m}}{\sigma_m(O)} \cdot \left(
\Pr[x_2=x] \cdot \frac{\epsilon_{2,1}}{\sigma_m(P_{2,1})^2}
+ \frac{\epsilon_{3,x,1}}{3 \sigma_m(P_{2,1})} \right) ,\\
\Delta & \leq & 
\frac{8}{\sqrt{3}} \cdot \frac{\sqrt{m}}{\sigma_m(O)} \cdot \left(
\frac{\epsilon_{2,1}}{\sigma_m(P_{2,1})^2} + \frac{\sum_x
\epsilon_{3,x,1}}{3 \sigma_m(P_{2,1})} \right) ,\\
\delta_1 & \leq & 
\frac{2}{\sqrt{3}} \cdot \frac{\sqrt{m}}{\sigma_m(O)} \cdot \epsilon_1.
\end{eqnarray*}
\end{lemma}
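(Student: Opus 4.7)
The proof is a four-part matrix-perturbation analysis; each part follows the same template: decompose the target error into a pseudoinverse-perturbation and a sampling-noise contribution, pass between the relevant norms, and feed in the spectral bounds from Lemma~\ref{lemma:subspace}.

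\textbf{Step 1 (spectral setup).} The hypothesis $\epsilon_{2,1}\le\sigma_m(P_{2,1})/3$ puts us in the scope of Lemma~\ref{lemma:subspace} with $\eps=1/3$ and $\eps_0\le(1/3)^2/(2/3)^2=1/4$, yielding
\[
\sigma_m(\wh U^\top\wh P_{2,1})\ge\tfrac{2}{3}\sigma_m(P_{2,1}),\quad
\sigma_m(\wh U^\top P_{2,1})\ge\tfrac{\sqrt3}{2}\sigma_m(P_{2,1}),\quad
\sigma_m(\wh U^\top O)\ge\tfrac{\sqrt3}{2}\sigma_m(O).
\]
These three estimates feed every subsequent bound and determine the numerical constants in the statement.

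\textbf{Step 2 ($\delta_1$).} The cleanest case: $\wh b_1-\wt b_1=\wh U^\top(\wh P_1-P_1)$, so with $\|v\|_1\le\sqrt m\,\|v\|_2$ and $\|\wh U\|_2=1$, I get $\delta_1\le\sqrt m\,\epsilon_1/\sigma_m(\wh U^\top O)$, and Step~1 closes the bound.

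\textbf{Step 3 ($\delta_\infty$).} Imitating the proof of Lemma~\ref{lemma:obs}, substituting $P_1^\top=\vec 1_m^\top T\diag(\vec\pi)O^\top$ and $P_{2,1}=OT\diag(\vec\pi)O^\top$ verifies the key identity $P_1^\top(\wh U^\top P_{2,1})^+(\wh U^\top O)=\vec 1_m^\top$. Hence
\[
(\wh b_\infty-\wt b_\infty)^\top(\wh U^\top O)
=(\wh P_1-P_1)^\top(\wh U^\top\wh P_{2,1})^+(\wh U^\top O)
+P_1^\top\!\bigl[(\wh U^\top\wh P_{2,1})^+-(\wh U^\top P_{2,1})^+\bigr](\wh U^\top O).
\]
Because each column of $\wh U^\top O$ has $\ell_2$-norm at most $1$ (the columns of $O$ are probability vectors and $\wh U$ has orthonormal columns), Cauchy-Schwarz yields $\|w^\top(\wh U^\top O)\|_\infty\le\|w\|_2$ for every row vector $w$, so no $\sqrt m$ factor appears in $\delta_\infty$. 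Bound the first summand by $\epsilon_1/\sigma_m(\wh U^\top\wh P_{2,1})$, and the second (using $\|P_1\|_2\le\|P_1\|_1=1$) via the Wedin-type pseudoinverse perturbation $\|\wh A^+-A^+\|_2\le\sqrt 2\,\|\wh A^+\|_2\|A^+\|_2\|\wh A-A\|_2$, valid since $\wh A=\wh U^\top\wh P_{2,1}$ and $A=\wh U^\top P_{2,1}$ both have rank $m$.

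\textbf{Step 4 ($\Delta_x$).} Applying Lemma~\ref{lemma:obs} with $\wh U$ in place of $U$ identifies $\wt B_x=(\wh U^\top O)A_x(\wh U^\top O)^{-1}$, so $\Delta_x=\|(\wh U^\top O)^{-1}(\wh B_x-\wt B_x)(\wh U^\top O)\|_1$. The single $\sqrt m$ factor arises from the bound $\|(\wh U^\top O)^{-1}C(\wh U^\top O)\|_1\le\sqrt m\,\|(\wh U^\top O)^{-1}\|_2\|C\|_2$ (Cauchy-Schwarz combined with the column-norm fact $\|(\wh U^\top O)e_j\|_2\le 1$). Then decompose
\[
\wh B_x-\wt B_x=\wh U^\top(\wh P_{3,x,1}-P_{3,x,1})(\wh U^\top\wh P_{2,1})^+ + \wh U^\top P_{3,x,1}\bigl[(\wh U^\top\wh P_{2,1})^+-(\wh U^\top P_{2,1})^+\bigr]
\]
and bound each operator 2-norm via Wedin and Step~1. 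The coefficient $\Pr[x_2=x]$ in front of the $\epsilon_{2,1}$ term comes from the elementary estimate $\|P_{3,x,1}\|_2\le\|P_{3,x,1}\|_F\le\Pr[x_2=x]$, which follows from $[P_{3,x,1}]_{ij}\ge 0$, $\max_{ij}[P_{3,x,1}]_{ij}\le\Pr[x_2=x]$, and $\sum_{ij}[P_{3,x,1}]_{ij}=\Pr[x_2=x]$.

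\textbf{Step 5 ($\Delta$).} Sum Step~4 over $x\in[n]$, using $\sum_x\Pr[x_2=x]=1$ on the first summand and leaving $\sum_x\epsilon_{3,x,1}$ on the second.

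The main obstacle is not a conceptual difficulty but the bookkeeping that keeps the $\sqrt m$ factors in only the ``correct'' places (absent in $\delta_\infty$, present in the other three) and the Wedin constants tight enough to yield the stated coefficients $4$, $4/3$, $8/\sqrt 3$, and $2/\sqrt 3$. The most delicate step is the pseudoinverse perturbation in Steps~3 and~4, which crucially relies on both $\wh A$ and $A$ having the common rank $m$ --- a fact guaranteed by Lemma~\ref{lemma:subspace}.
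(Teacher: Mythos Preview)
Your proposal is correct and follows essentially the same route as the paper's proof: reduce each mixed-norm error to a spectral-norm quantity (using $\|O\|_1=1$ / $\|(\wh U^\top O)e_j\|_2\le 1$ to avoid an extra $\sqrt m$ in $\delta_\infty$), split into a sampling-noise term and a pseudoinverse-perturbation term, and plug in the singular-value lower bounds from Lemma~\ref{lemma:subspace} with $\eps=1/3$. The only cosmetic differences are which side of the add--subtract decomposition carries the ``hatted'' factor and your use of the equal-rank Wedin bound $\sqrt{2}\,\|\wh A^+\|_2\|A^+\|_2\|E\|_2$ in place of the paper's Stewart--Sun bound $\tfrac{1+\sqrt5}{2}\max\{\|\wh A^+\|_2^2,\|A^+\|_2^2\}\|E\|_2$; either choice yields the stated constants after the same arithmetic.
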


\begin{proof}
The assumption on $\epsilon_{2,1}$ guarantees that $\wh U^\top O$ is
invertible (Lemma~\ref{lemma:subspace}).

We bound $\delta_\infty = \|(O^\top U) (\wh b_\infty - \wt b_\infty)\|_\infty$ by
$\|O^\top\|_\infty \|U (\wh b_\infty - \wt b_\infty)\|_\infty \leq
\|\wh b_\infty - \wt b_\infty\|_2$. Then:
\begin{eqnarray*}
\|\wh b_\infty - \wt b_\infty\|_2
& = & \|(\wh P_{2,1}^\top \wh U)^+ \wh P_1 - (P_{2,1}^\top \wh U)^+ P_1\|_2
\\
& \leq & \|((\wh P_{2,1}^\top \wh U)^+ - (P_{2,1}^\top \wh U)^+ ) \wh P_1 \|_2
+ \|(P_{2,1}^\top \wh U)^+ (\wh P_1 - P_1)\|_2 \\
& \leq &
\|((\wh P_{2,1}^\top \wh U)^+ - (P_{2,1}^\top \wh U)^+ )\|_2 \|\wh P_1\|_1
+ \|(P_{2,1}^\top \wh U)^+\|_2 \|\wh P_1 - P_1\|_2 \\
& \leq &
\frac{1 + \sqrt{5}}{2} \cdot
\frac{\epsilon_{2,1}}{\min\{\sigma_m(\wh P_{2,1}),\sigma_m(P_{2,1}^\top \wh
U)\}^2}
+ \frac{\epsilon_1}{\sigma_m(P_{2,1}^\top \wh U)},
\end{eqnarray*}
where the last inequality follows from Lemma~\ref{lemma:pseudoinverse}.
The bound now follows from Lemma~\ref{lemma:subspace}.

\sloppy
Next for $\Delta_x$, we bound each term $\|(\wh U^\top O)^{-1} (\wh B_x -
\wt B_x) (\wh U^\top O)\|_1$ by $\sqrt{m} \|(\wh U^\top O)^{-1} (\wh B_x -
\wt B_x) \wh U^\top \|_2 \|O\|_1 \leq \sqrt{m} \|(\wh U^\top O)^{-1}\|_2
\|\wh B_x - \wt B_x\|_2 \|\wh U^\top\|_2 \|O\|_1 = \sqrt{m} \| \wh B_x -
\wt B_x\|_2 / \sigma_m(\wh U^\top O)$.
To deal with $\| \wh B_x - \wt B_x\|_2$, we use the decomposition
\begin{eqnarray*}
\left\| \wh B_x - \wt B_x \right\|_2
& = & \left\| (\wh U^\top P_{3,x,1})(\wh U^\top P_{2,1})^+
- (\wh U^\top \wh P_{3,x,1}) (\wh U^\top \wh P_{2,1})^+ \right\|_2 \\
& \leq & \left\| (\wh U^\top P_{3,x,1})\left( (\wh U^\top P_{2,1})^+
- (\wh U^\top \wh P_{2,1})^+ \right) \right\|_2 \\
& & + \left\| \wh U^\top \left( P_{3,x,1} - \wh P_{3,x,1} \right)
(\wh U^\top P_{2,1})^+ \right\|_2 \\
& \leq & \|P_{3,x,1}\|_2
\cdot \frac{1+\sqrt{5}}{2} \cdot
\frac{\epsilon_{2,1}}{\min\{\sigma_m(\wh P_{2,1}),\sigma_m(\wh U^\top
P_{2,1})\}^2} \\
& & + \frac{\epsilon_{3,x,1}}{\sigma_m(\wh U^\top P_{2,1})} \\
& \leq & \Pr[x_2 = x]
\cdot \frac{1+\sqrt{5}}{2} \cdot
\frac{\epsilon_{2,1}}{\min\{\sigma_m(\wh P_{2,1}),\sigma_m(\wh U^\top
P_{2,1})\}^2} \\
& & + \frac{\epsilon_{3,x,1}}{\sigma_m(\wh U^\top P_{2,1})},
\end{eqnarray*}
where the second inequality uses Lemma~\ref{lemma:pseudoinverse}, and the
final inequality uses the fact $\|P_{3,x,1}\|_2 \leq \sqrt{\sum_{i,j}
[P_{3,x,1}]_{i,j}^2 }\leq \sum_{i,j} [P_{3,x,1}]_{i,j} = \Pr[x_2 = x]$.
Applying Lemma~\ref{lemma:subspace} gives the
stated bound on $\Delta_x$ and also $\Delta$.
\fussy

Finally, we bound $\delta_1$ by $\sqrt{m} \|(\wh U^\top O)^{-1} \wh U^\top
\|_2 \|\wh P_1 - P_1\|_2 \leq \sqrt{m} \epsilon_1 / \sigma_m(\wh U^\top
O)$. Again, the stated bound follows from Lemma~\ref{lemma:subspace}.
\end{proof}

\subsection{Proof of Theorem~\ref{theorem:main-l1}}
\label{section:l1-proof}

We need to quantify how estimation errors propagate in the
probability calculation. Because the joint probability of a length $t$
sequence is computed by multiplying together $t$ matrices, there is a
danger of magnifying the estimation errors exponentially. Fortunately,
this is not the case: the following lemma shows that these errors
accumulate roughly additively.
\begin{lemma} \label{lemma:l1-induction}
Assume $\wh U^\top O$ is invertible.
For any time $t$:
$$ \sum_{x_{1:t}}
\left\| (\wh U^\top O)^{-1} \left( \wh B_{x_{t:1}} \wh b_1 \ - \ \wt
B_{x_{t:1}} \wt b_1 \right) \right\|_1
\leq (1+\Delta)^t \delta_1 + (1+\Delta)^t - 1. $$
\end{lemma}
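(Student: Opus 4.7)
The plan is to move everything into the coordinate system in which the errors $\Delta_x$ and $\delta_1$ are most naturally expressed, and then to run a short induction on $t$. Setting $\wh A_x = (\wh U^\top O)^{-1} \wh B_x (\wh U^\top O)$ and $\wh h_1 = (\wh U^\top O)^{-1} \wh b_1$, the definitions give $\|\wh A_x - A_x\|_1 = \Delta_x$ and $\|\wh h_1 - \vec\pi\|_1 = \delta_1$, while Lemma~\ref{lemma:obs} gives $(\wh U^\top O)^{-1} \wt B_x (\wh U^\top O) = A_x$ and $(\wh U^\top O)^{-1} \wt b_1 = \vec\pi$. Thus the quantity to bound is
$$
\alpha_t \;\equiv\; \sum_{x_{1:t}} \bigl\| \wh A_{x_{t:1}} \wh h_1 \;-\; A_{x_{t:1}} \vec\pi \bigr\|_1,
$$
with $\alpha_0 = \delta_1$.

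I would then prove by induction on $t$ the one-step recursion $\alpha_{t+1} \leq (1+\Delta)\alpha_t + \Delta$, which unrolls to $\alpha_t \leq (1+\Delta)^t \delta_1 + (1+\Delta)^t - 1$ since $\sum_{s=0}^{t-1}(1+\Delta)^s \Delta = (1+\Delta)^t - 1$. For the inductive step, apply the telescoping split
$$
\wh A_x \wh v \;-\; A_x v \;=\; \wh A_x(\wh v - v) \;+\; (\wh A_x - A_x) v
$$
to $v_{x_{1:t}} = A_{x_{t:1}}\vec\pi$ and $\wh v_{x_{1:t}} = \wh A_{x_{t:1}} \wh h_1$, take $\|\cdot\|_1$, and sum over both the new symbol $x$ and the history $x_{1:t}$.

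The ``inhomogeneous'' piece is handled by the observation that $v_{x_{1:t}} = A_{x_{t:1}}\vec\pi$ is a non-negative vector with $\|v_{x_{1:t}}\|_1 = \vec{1}_m^\top A_{x_{t:1}}\vec\pi = \Pr[x_{1:t}]$, so $\sum_{x_{1:t}}\|v_{x_{1:t}}\|_1 = 1$, and $\sum_x \|\wh A_x - A_x\|_1 = \Delta$ by definition. The main obstacle is controlling the ``homogeneous'' piece $\sum_x \|\wh A_x w\|_1$ for a possibly \emph{signed} vector $w = \wh v - v$, since $\wh A_x$ need not be non-negative and its induced $\ell_1$-norm can exceed $1$. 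The key sublemma I would establish is
$$
\sum_x \|A_x w\|_1 \;\leq\; \|w\|_1 \qquad \text{for every } w \in \R^m,
$$
whence $\sum_x \|\wh A_x w\|_1 \leq \sum_x \|A_x w\|_1 + \Delta\|w\|_1 \leq (1+\Delta)\|w\|_1$.

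To prove the sublemma, decompose $w = w_+ - w_-$ into its non-negative and non-positive parts (so $\|w\|_1 = \|w_+\|_1 + \|w_-\|_1$). Because each $A_x$ is entrywise non-negative, $\|A_x w\|_1 \leq \|A_x w_+\|_1 + \|A_x w_-\|_1 = \vec{1}_m^\top A_x w_+ + \vec{1}_m^\top A_x w_-$, and summing over $x$ uses $\sum_x A_x = T$ together with the column-stochasticity $\vec{1}_m^\top T = \vec{1}_m^\top$ to collapse the sum to $\|w_+\|_1 + \|w_-\|_1 = \|w\|_1$. Applying this with $w = \wh v_{x_{1:t}} - v_{x_{1:t}}$ and summing over $x_{1:t}$ contributes $(1+\Delta)\alpha_t$, which combined with the $\Delta$ from the inhomogeneous term gives the desired recursion and closes the induction.
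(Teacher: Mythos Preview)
Your proof is correct and follows essentially the same line as the paper's. The only cosmetic difference is the telescoping split: the paper expands $\wh B_{x_t}\wh b_t - \wt B_{x_t}\wt b_t$ into three terms $(\wh B_{x_t}-\wt B_{x_t})\wt b_t + (\wh B_{x_t}-\wt B_{x_t})(\wh b_t-\wt b_t) + \wt B_{x_t}(\wh b_t-\wt b_t)$, whereas you use the two-term split $\wh A_x(\wh v - v) + (\wh A_x - A_x)v$ and then further decompose $\wh A_x = A_x + (\wh A_x - A_x)$ inside your sublemma; both routes give the identical recursion $\alpha_{t+1}\le(1+\Delta)\alpha_t+\Delta$, and both rest on the same two facts---entrywise non-negativity of $A_x$ and $\vec 1_m^\top\sum_x A_x = \vec 1_m^\top T = \vec 1_m^\top$.
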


\begin{proof}
By induction on $t$. The base case, that $\|(\wh U^\top O)^{-1} (\wh b_1 -
\wt b_1)\|_1 \leq (1+\Delta)^0 \delta_1 + (1+\Delta)^0 - 1 = \delta_1$ 
is true by definition.
For the inductive step, define unnormalized states
$\wh b_t = \wh b_t(x_{1:t-1}) = \wh B_{x_{t-1:1}} \wh b_1$
and $\wt b_t = \wt b_t(x_{1:t-1}) = \wt B_{x_{t-1:1}} \wt b_1$.
Fix $t > 1$, and assume
$$ \sum_{x_{1:t-1}}
\left\| (\wh U^\top O)^{-1} \left( \wh b_{t} \ - \ \wt b_{t} \right) \right\|_1
\leq (1+\Delta)^{t-1} \delta_1 + (1+\Delta)^{t-1} - 1. $$
Then, we can decompose the sum over $x_{1:t}$ as
\begin{align*}
\lefteqn{\sum_{x_{1:t}}
\|(\wh U^\top O)^{-1} (\wh B_{x_{t:1}} \wh b_1
- \wt B_{x_{t:1}} \wt b_1)\|_1} \\
& = \sum_{x_{1:t}}
\left\|(\wh U^\top O)^{-1} \left(
\left( \wh B_{x_t} - \wt B_{x_t} \right) \wt b_t
  +   \left( \wh B_{x_t} - \wt B_{x_t} \right)
\left( \wh b_t - \wt b_t \right)
  +   \wt B_{x_t} \left( \wh b_t - \wt b_t \right)
\right)\right\|_1,
\end{align*}
which, by the triangle inequality, is bounded above by
\begin{eqnarray}
& & \sum_{x_t} \sum_{x_{1:t-1}} \left\|
(\wh U^\top O)^{-1} \left( \wh B_{x_t} - \wt B_{x_t} \right) (\wh U^\top O)
\right\|_1 \left\| (\wh U^\top O)^{-1} \wt b_t \right\|_1
\label{eq:ind-sum1} \\
& & + \ \sum_{x_t} \sum_{x_{1:t-1}} \left\|
(\wh U^\top O)^{-1} \left( \wh B_{x_t} - \wt B_{x_t} \right) (\wh U^\top O)
\right\|_1 \left\|
(\wh U^\top O)^{-1} \left( \wh b_t - \wt b_t \right) \right\|_1
\label{eq:ind-sum2} \\
& & + \ \sum_{x_t} \sum_{x_{1:t-1}} \left\|
(\wh U^\top O)^{-1} \wt B_t (\wh U^\top O)
(\wh U^\top O)^{-1} \left( \wh b_t - \wt b_t \right) \right\|_1.
\label{eq:ind-sum3}
\end{eqnarray}
We deal with each double sum individually.
For the sums in \eqref{eq:ind-sum1}, we use the fact that $\|(\wh U^\top
O)^{-1} \wt b_t\|_1 = \Pr[x_{1:t-1}]$, which, when summed over $x_{1:t-1}$,
is $1$. Thus the entire double sum is bounded by $\Delta$ by definition.
For \eqref{eq:ind-sum2}, we use the inductive hypothesis to bound the inner
sum over $\|(\wh U^\top O) (\wh b_t - \wt b_t)\|_1$; the outer sum scales
this bound by $\Delta$ (again, by definition). Thus the double sum is
bounded by $\Delta ( (1+\Delta)^{t-1}\delta_1 + (1+\Delta)^{t-1} - 1)$.
Finally, for sums in \eqref{eq:ind-sum3}, we first replace $(\wh U^\top
O)^{-1} \wt B_t (\wh U^\top O)$ with $A_{x_t}$. Since $A_{x_t}$ has all
non-negative entries, we have that $\|A_{x_t}\vec{v}\|_1 \leq \vec{1}_m^\top A_{x_t}
|\vec{v}|$ for any vector $\vec{v} \in \R^m$, where $|\vec{v}|$ denotes element-wise absolute
value of $\vec{v}$. Now the fact $\vec{1}_m^\top \sum_{x_t} A_{x_t}
|\vec{v}| = \vec{1}_m^\top T
|\vec{v}| = \vec{1}_m^\top |\vec{v}| = \|\vec{v}\|_1$ and the inductive hypothesis imply the double
sum in \eqref{eq:ind-sum3} is bounded by $(1 + \Delta)^{t-1} \delta_1 + (1 +
\Delta)^{t-1} - 1$.
Combining these bounds for \eqref{eq:ind-sum1}, \eqref{eq:ind-sum2}, and
\eqref{eq:ind-sum3} completes the induction.
\end{proof}

All that remains is to bound the effect of errors in $\wh b_\infty$.
Theorem~\ref{theorem:main-l1} will follow from the following lemma combined
with the sampling error bounds of Lemma~\ref{lemma:sampling}.
\begin{lemma} \label{lemma:l1}
Assume $\epsilon_{2,1} \leq \sigma_m(P_{2,1})/3$.
Then for any $t$,
$$
\sum_{x_{1:t}}
\left| \Pr[x_{1:t}] \ - \ \wh \Pr[x_{1:t}] \right|
\ \leq \
\delta_\infty \ + \ (1 + \delta_\infty) \left( (1 + \Delta)^t \delta_1 \ + \ (1 +
\Delta)^t \ - 1 \right).
$$
\end{lemma}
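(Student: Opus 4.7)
The plan is to reduce the claim to Lemma~\ref{lemma:l1-induction} via an add-and-subtract decomposition, using the ``tilde'' parameters $\wt b_\infty, \wt B_x, \wt b_1$ as a bridge between the truth and the estimates. The key observation is that, by Lemma~\ref{lemma:obs} applied with $\wh U$ in place of $U$ (legitimate because the hypothesis $\epsilon_{2,1}\leq\sigma_m(P_{2,1})/3$ makes $\wh U^\top O$ invertible via Lemma~\ref{lemma:subspace}), the tilde parameters form a valid observable representation of the \emph{same} HMM. Hence $\Pr[x_{1:t}] = \wt b_\infty^\top \wt B_{x_{t:1}}\wt b_1$, and the entire gap between $\Pr$ and $\wh\Pr$ lies in the comparison of tilde and hat quantities.

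First I would decompose
\[
\Pr[x_{1:t}] - \wh\Pr[x_{1:t}]
\;=\; \wt b_\infty^\top\bigl(\wt B_{x_{t:1}} \wt b_1 - \wh B_{x_{t:1}} \wh b_1\bigr)
\;+\; (\wt b_\infty - \wh b_\infty)^\top \wh B_{x_{t:1}} \wh b_1,
\]
and bound the sum over $x_{1:t}$ of each piece separately. For the first piece I would exploit the identity $\wt b_\infty^\top = \vec{1}_m^\top(\wh U^\top O)^{-1}$ (the analogue of Lemma~\ref{lemma:obs}(2) for $\wh U$) together with $|\vec{1}_m^\top v|\leq\|v\|_1$ to reduce the bound to $\sum_{x_{1:t}}\|(\wh U^\top O)^{-1}(\wh B_{x_{t:1}}\wh b_1 - \wt B_{x_{t:1}}\wt b_1)\|_1$, which Lemma~\ref{lemma:l1-induction} controls directly by $(1+\Delta)^t\delta_1 + (1+\Delta)^t - 1$.

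For the second piece, the definition of $\delta_\infty$ gives $\|(\wh U^\top O)^\top(\wh b_\infty - \wt b_\infty)\|_\infty \leq \delta_\infty$, so inserting $(\wh U^\top O)(\wh U^\top O)^{-1}$ and applying H\"older yields
\[
\bigl|(\wt b_\infty - \wh b_\infty)^\top \wh B_{x_{t:1}} \wh b_1\bigr|
\;\leq\; \delta_\infty\cdot\bigl\|(\wh U^\top O)^{-1}\wh B_{x_{t:1}}\wh b_1\bigr\|_1.
\]
To bound the sum of the right-hand side, I would split $\wh B_{x_{t:1}}\wh b_1 = \wt B_{x_{t:1}}\wt b_1 + (\wh B_{x_{t:1}}\wh b_1 - \wt B_{x_{t:1}}\wt b_1)$: the difference term is controlled by Lemma~\ref{lemma:l1-induction} again, while the leading term telescopes under $(\wh U^\top O)^{-1}$ to $A_{x_{t:1}}\vec{\pi}$, whose $L_1$ norm (it has nonnegative entries) equals $\vec{1}_m^\top A_{x_{t:1}}\vec{\pi} = \Pr[x_{1:t}]$ and thus sums to $1$ over $x_{1:t}$. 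The second piece therefore contributes at most $\delta_\infty\bigl((1+\Delta)^t\delta_1 + (1+\Delta)^t\bigr)$.

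Adding the two contributions and collecting terms reproduces the target bound $\delta_\infty + (1+\delta_\infty)\bigl((1+\Delta)^t\delta_1 + (1+\Delta)^t - 1\bigr)$. There is no real technical obstacle once the two ``bridge'' tricks are in place: recognizing $\vec{1}_m^\top(\wh U^\top O)^{-1} = \wt b_\infty^\top$ to convert an inner product into an $L_1$ norm, and using the fact that the tilde parameters exactly reproduce the true HMM probabilities so that $\sum_{x_{1:t}}\vec{1}_m^\top A_{x_{t:1}}\vec{\pi} = 1$ is available. Everything else is Lemma~\ref{lemma:l1-induction} applied twice.
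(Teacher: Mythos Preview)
Your proposal is correct and essentially matches the paper's proof: both rely on the invertibility of $\wh U^\top O$ (via Lemma~\ref{lemma:subspace}), use the tilde parameters as an exact observable representation, and reduce to Lemma~\ref{lemma:l1-induction} together with H\"older's inequality and the identity $\sum_{x_{1:t}}\|A_{x_{t:1}}\vec\pi\|_1=1$. The only cosmetic difference is that the paper writes the add-and-subtract as a single three-term decomposition from the start, whereas you first split into two pieces and then split the second piece again---the resulting three terms and their bounds are identical.
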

\begin{proof}
By Lemma~\ref{lemma:subspace} and the condition on $\epsilon_{2,1}$,
we have $\sigma_m(\wh U^\top O) > 0$ so $\wh U^\top O$ is invertible.

Now we can decompose the $L_1$ error as follows:
\begin{eqnarray}
\lefteqn{\sum_{x_{1:t}}
\left| \wh \Pr[x_{1:t}] \ - \ \Pr[x_{1:t}] \right|
\ = \ \sum_{x_{1:t}} \left|
\wh b_\infty^\top \wh B_{x_{t:1}} \wh b_1 \ - \ \vec{b}_\infty^\top
B_{x_{t:1}} \vec{b}_1
\right|} \nonumber \\
& = & \sum_{x_{1:t}} \left| \wh b_\infty^\top \wh B_{x_{t:1}} \wh b_1 \ - \ \wt
b_\infty^\top \wt B_{x_{t:1}} \wt b_1 \right| \nonumber \\
& \leq & \sum_{x_{1:t}}
\left| (\wh b_\infty - \wt b_\infty)^\top (\wh U^\top O) (\wh U^\top O)^{-1}
\wt B_{x_{t:1}} \wt b_1 \right|
\label{eq:l1-sum1} \\
& & + \ \sum_{x_{1:t}}
\left| (\wh b_\infty - \wt b_\infty)^\top (\wh U^\top O) (\wh U^\top O)^{-1}
( \wh B_{x_{t:1}} \wh b_1 - \wt B_{x_{t:1}} \wt b_1) \right|
\label{eq:l1-sum2} \\
& & + \ \sum_{x_{1:t}}
\left| \wt b_\infty^\top (\wh U^\top O) (\wh U^\top O)^{-1}
( \wh B_{x_{t:1}} \wh b_1 - \wt B_{x_{t:1}} \wt b_1) \right|.
\label{eq:l1-sum3}
\end{eqnarray}
The first sum \eqref{eq:l1-sum1} is
\begin{eqnarray*}
\lefteqn{\sum_{x_{1:t}}
\left| (\wh b_\infty - \wt b_\infty)^\top (\wh U^\top O) (\wh U^\top O)^{-1}
\wt B_{x_{t:1}} \wt b_1 \right|
}\\
& \leq &
\sum_{x_{1:t}}
\left\| (\wh U^\top O)^\top (\wh b_\infty - \wt b_\infty) \right\|_\infty \left\|
(\wh U^\top O)^{-1} \wt B_{x_{t:1}} \wt b_1 \right\|_1 \\
& \leq &
\sum_{x_{1:t}}
\delta_\infty \ \| A_{x_{t:1}} \vec{\pi} \|_1 
\ = \
\sum_{x_{1:t}} \delta_\infty \Pr[x_{1:t}]
\ = \ \delta_\infty
\end{eqnarray*}
where the first inequality is H\"older's, and the second uses the
bounds in Lemma~\ref{lemma:parameters}.

The second sum \eqref{eq:l1-sum2} employs H\"older's and
Lemma~\ref{lemma:l1-induction}:
\begin{eqnarray*}
\lefteqn{\left| (\wh b_\infty - \wt b_\infty)^\top (\wh U^\top O) (\wh U^\top O)^{-1}
( \wh B_{x_{t:1}} \wh b_1 - \wt B_{x_{t:1}} \wt b_1) \right|
}
\\
& \leq &
\left\| (\wh U^\top O)^\top (\wh b_\infty - \wt b_\infty) \right\|_\infty
\left\| (\wh U^\top O)^{-1}
( \wh B_{x_{t:1}} \wh b_1 - \wt B_{x_{t:1}} \wt b_1) \right\|_1 \\
& \leq &
\delta_\infty ( (1 + \Delta)^t \delta_1 + (1 + \Delta)^t - 1 ).
\end{eqnarray*}

Finally, the third sum \eqref{eq:l1-sum3} uses
Lemma~\ref{lemma:l1-induction}:
\begin{eqnarray*}
\lefteqn{\sum_{x_{1:t}}
\left| \wt b_\infty^\top (\wh U^\top O) (\wh U^\top O)^{-1}
( \wh B_{x_{t:1}} \wh b_1 - \wt B_{x_{t:1}} \wt b_1) \right|
}
\\
& = &
\sum_{x_{1:t}}
\left| 1^\top (\wh U^\top O)^{-1} ( \wh B_{x_{t:1}} \wh b_1 - \wt
B_{x_{t:1}} \wt b_1) \right| \\
& \leq & 
\sum_{x_{1:t}}
\left\| (\wh U^\top O)^{-1} ( \wh B_{x_{t:1}} \wh b_1 - \wt
B_{x_{t:1}} \wt b_1) \right\|_1 \\
& \leq &
(1 + \Delta)^t \delta_1 + (1 + \Delta)^t - 1.
\end{eqnarray*}
Combining these gives the desired bound.
\end{proof}

\begin{proof}[Proof of Theorem~\ref{theorem:main-l1}]
By Lemma~\ref{lemma:sampling}, the specified number of samples $N$ (with
a suitable constant $C$), together with the setting of $\eps$ in
$n_0(\eps)$, guarantees the following sampling error bounds:
\begin{align*}
\epsilon_1 & \leq \min\left( 0.05 \cdot (3/8) \cdot \sigma_m(P_{2,1}) \cdot
\epsilon,
\ 0.05 \cdot (\sqrt3/2) \cdot \sigma_m(O) \cdot (1/\sqrt{m}) \cdot \epsilon
\right) \\
\epsilon_{2,1} & \leq \min\left( 0.05 \cdot (1/8) \cdot \sigma_m(P_{2,1})^2
\cdot (\epsilon/5), \right. \\
& \hphantom{\leq \min\ } \left.
\ 0.01 \cdot (\sqrt3/8) \cdot \sigma_m(O) \cdot \sigma_m(P_{2,1})^2 \cdot
(1 / (t\sqrt{m})) \cdot \epsilon \right) \\
\sum_x \epsilon_{3,x,1} & \leq 0.39 \cdot (3\sqrt3/8) \cdot \sigma_m(O)
\cdot \sigma_m(P_{2,1}) \cdot (1/(t\sqrt{m})) \cdot \epsilon
.
\end{align*}
These, in turn, imply the following parameter error bounds, via
Lemma~\ref{lemma:parameters}:
$\delta_\infty \leq 0.05\epsilon$, $\delta_1 \leq 0.05 \epsilon$, and
$\Delta \leq 0.4 \epsilon/t$.
Finally, Lemma~\ref{lemma:l1} and the fact $(1+a/t)^t \leq 1+2a$ for $a
\leq 1/2$, imply the desired $L_1$ error bound of $\epsilon$.
\end{proof}

\subsection{Proof of Theorem~\ref{theorem:main-tracking}}
\label{section:tracking-proof}

In this subsection, we assume the HMM obeys Condition~\ref{cond:alpha} (in
addition to Condition~\ref{cond:rank}).

We introduce the following notation. Let the unnormalized estimated
conditional hidden state distributions be
$$ \wh h_t = (\wh U^\top O)^{-1} \wh b_t, $$
and its normalized version,
$$ \wh g_t \ = \ \wh h_t/(\vec{1}_m^\top \wh h_t).
$$
Also, let
$$ \wh A_x \ = \ (\wh U^\top O)^{-1} \wh B_x (\wh U^\top O).$$
This notation lets us succinctly compare the updates made by our estimated
model to the updates of the true model.
Our algorithm never explicitly computes these hidden state distributions
$\wh g_t$ (as it would require knowledge of the unobserved $O$). However,
under certain conditions (namely Conditions~\ref{cond:rank} and
\ref{cond:alpha} and some estimation accuracy requirements), these
distributions are well-defined and thus we use them for sake of analysis.

The following lemma shows that if the estimated parameters are accurate,
then the state updates behave much like the true hidden state updates.
\begin{lemma} \label{lemma:approx-update}
For any probability vector $\vec{w} \in \R^m$ and any observation $x$,
\begin{eqnarray*}
\left| \sum_x \wh b_\infty^\top (\wh U^\top O) \wh A_x \vec{w} - 1 \right|
& \leq & \delta_\infty + \delta_\infty \Delta + \Delta \quad \text{and} \\
\frac{[\wh A_x \vec{w}]_i}{\wh b_\infty^\top (\wh U^\top O) \wh A_x \vec{w}}
& \geq &
\frac{[A_x\vec{w}]_i - \Delta_x}{\vec{1}_m^\top A_x \vec{w} + \delta_\infty + \delta_\infty
\Delta_x +
\Delta_x} \quad \text{for all $i = 1, \ldots, m$}
\end{eqnarray*}
Moreover, for any non-zero vector $\vec{w} \in \R^m$,
$$ \frac{\vec{1}_m^\top \wh A_x \vec{w}}{\wh b_\infty^\top (\wh U^\top O)
\wh A_x \vec{w}}
\ \leq \
\frac{1}{1-\delta_\infty}. $$
\end{lemma}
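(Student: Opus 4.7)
The plan is to reduce all three inequalities to two elementary perturbation identities. Let $\vec{\epsilon} := (\wh U^\top O)^\top \wh b_\infty - \vec{1}_m$, so that $\|\vec{\epsilon}\|_\infty \leq \delta_\infty$ by definition; and let $E_x := \wh A_x - A_x$, so that $\|E_x\|_1 \leq \Delta_x$ by definition. Every expression of the form $\wh b_\infty^\top (\wh U^\top O) \wh A_x \vec{w}$ then expands into four pieces $(\vec{1}_m + \vec{\epsilon})^\top (A_x + E_x) \vec{w}$, and I will bound each cross-term by H\"older's inequality $|u^\top v| \leq \|u\|_\infty \|v\|_1$. I will use two structural facts about the true HMM operators throughout: $\sum_x A_x = T$ satisfies $\vec{1}_m^\top \sum_x A_x = \vec{1}_m^\top$, and each $A_x$ is entrywise non-negative (so $A_x \vec{w} \geq 0$ for a probability vector $\vec{w}$, with $\|A_x \vec{w}\|_1 = \vec{1}_m^\top A_x \vec{w} \leq 1$).

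For the first bound, summing the four-term expansion over $x$ yields a main term $\vec{1}_m^\top (\sum_x A_x) \vec{w} = \vec{1}_m^\top \vec{w} = 1$. The three remaining pieces are: $\vec{1}_m^\top \sum_x E_x \vec{w}$, at most $\sum_x \|E_x\|_1 \|\vec{w}\|_1 = \Delta$ in absolute value; $\vec{\epsilon}^\top \sum_x A_x \vec{w}$, at most $\delta_\infty \|\sum_x A_x \vec{w}\|_1 = \delta_\infty$; and $\vec{\epsilon}^\top \sum_x E_x \vec{w}$, at most $\delta_\infty \Delta$. Their sum is the claimed $\delta_\infty + \delta_\infty \Delta + \Delta$.

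For the second bound, the numerator satisfies $[\wh A_x \vec{w}]_i = [A_x \vec{w}]_i + [E_x \vec{w}]_i \geq [A_x \vec{w}]_i - \|E_x \vec{w}\|_1 \geq [A_x \vec{w}]_i - \Delta_x$ entrywise. For the denominator, the same four-term expansion at a single $x$ gives a main term $\vec{1}_m^\top A_x \vec{w}$ together with three cross-terms bounded respectively by $\Delta_x$, by $\delta_\infty \vec{1}_m^\top A_x \vec{w} \leq \delta_\infty$ (using $A_x \vec{w} \geq 0$ and $\vec{1}_m^\top A_x \vec{w} \leq 1$), and by $\delta_\infty \Delta_x$. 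Assembling these gives precisely the denominator stated on the right-hand side, and combining with the numerator bound yields the inequality.

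The third bound is the most delicate. Setting $\vec{y} := \wh A_x \vec{w}$, one computes
\[
\wh b_\infty^\top (\wh U^\top O) \vec{y} \;=\; \vec{1}_m^\top \vec{y} + \vec{\epsilon}^\top \vec{y} \;\geq\; \vec{1}_m^\top \vec{y} - \delta_\infty \|\vec{y}\|_1,
\]
so the target ratio inequality reduces to $(\vec{\epsilon} + \delta_\infty \vec{1}_m)^\top \vec{y} \geq 0$. Since $\vec{\epsilon} + \delta_\infty \vec{1}_m \geq 0$ entrywise, it suffices that $\vec{y}$ be entrywise non-negative, i.e.\ that $\|\vec{y}\|_1 = \vec{1}_m^\top \vec{y}$. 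I expect this to be the main obstacle: the bare hypothesis ``non-zero $\vec{w}$'' does not guarantee this, and one must appeal to the structure of the intended downstream application (where $\vec{w}$ is a normalized conditional state so that $\wh A_x \vec{w}$ inherits the non-negativity needed to make $\|\wh A_x \vec{w}\|_1 = \vec{1}_m^\top \wh A_x \vec{w}$). Once the four-term expansion and the H\"older estimates are in place, the other two inequalities are essentially mechanical.
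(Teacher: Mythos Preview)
Your approach is essentially identical to the paper's: the paper also writes $\wh b_\infty^\top (\wh U^\top O) \wh A_x \vec{w}$ as $(\vec{1}_m + \vec\epsilon)^\top(A_x + E_x)\vec{w}$, bounds the three cross-terms by H\"older, and uses the same structural facts $\vec{1}_m^\top \sum_x A_x = \vec{1}_m^\top$ and $\|A_x\vec{w}\|_1 \leq 1$ for a probability vector $\vec{w}$.

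Regarding your flagged obstacle on the third claim: you are right, and the paper does not resolve it either. Its proof passes from $\vec{1}_m^\top \wh A_x \vec{w}$ to $\|\wh A_x \vec{w}\|_1$ in both numerator and denominator without comment, which is precisely the assumption $\wh A_x \vec{w} \geq 0$ you isolated. As you anticipated, in the only downstream use $\vec{w} = \wh g_t$ is a probability vector and the standing hypotheses ($\max_x \Delta_x \leq \alpha/3$, Condition~\ref{cond:alpha}) give $[\wh A_x \wh g_t]_i \geq [A_x \wh g_t]_i - \Delta_x \geq \alpha - \alpha/3 > 0$, so the positivity holds where it matters. Your write-up is, if anything, more careful than the paper's on this point.
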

\begin{proof}
We need to relate the effect of the estimated
operator $\wh A_x$ to that of the true operator $A_x$. First assume
$\vec{w}$ is a probability vector. Then:
\begin{eqnarray*}
\lefteqn{\left| \wh b_\infty^\top (\wh U^\top O) \wh A_x \vec{w} -
\vec{1}_m^\top A_x \vec{w} \right|}
\\
& = & \left| (\wh b_\infty - \wt b_\infty)^\top (\wh U^\top O) A_x \vec{w}
\right.  \\
& & \left.
\ + \ (\wh b_\infty - \wt b_\infty)^\top (\wh U^\top O) (\wh A_x - A_x)
\vec{w}
\ + \ \wt b_\infty (\wh U^\top O) (\wh A_{x} - A_{x}) \vec{w} \right| \\
& \leq &
\|(\wh b_\infty - \wt b_\infty)^\top (\wh U^\top O)\|_\infty \|A_x \vec{w}\|_1 \\
& &
\ + \
\|(\wh b_\infty - \wt b_\infty)^\top (\wh U^\top O)\|_\infty 
\|(\wh A_x - A_x)\|_1 \|\vec{w}\|_1
\ + \
\|(\wh A_x - A_x)\|_1 \|\vec{w}\|_1.
\end{eqnarray*}
Therefore we have
\[
\left| \sum_x \wh b_\infty^\top (\wh U^\top O) \wh A_x \vec{w}
- 1 \right|
\ \leq \ \delta_\infty + \delta_\infty \Delta + \Delta
\]
and
\[
\wh b_\infty^\top (\wh U^\top O) \wh A_x \vec{w} \ \leq \ \vec{1}_m^\top
A_x \vec{w} +
\delta_\infty + \delta_\infty \Delta_x + \Delta_x.
\]
Combining these inequalities with
\begin{eqnarray*}
[\wh A_x \vec{w}]_i
& = & [A_x \vec{w}]_i + [(\wh A_x - A_x)\vec{w}]_i \\
& \geq & [A_x \vec{w}]_i - \|(\wh A_x - A_x)\vec{w}\|_1 \\
& \geq & [A_x \vec{w}]_i - \|(\wh A_x - A_x)\|_1 \|\vec{w}\|_1 \\
& \geq & [A_x \vec{w}]_i - \Delta_x
\end{eqnarray*}
gives the first claim.

Now drop the assumption that $\vec{w}$ is a probability vector, and assume
$\vec{1}_m^\top \wh A_x \vec{w} \neq 0$ without loss of generality. Then:
\begin{eqnarray*}
\frac
{\vec{1}_m^\top \wh A_x \vec{w}}
{\wh b_\infty^\top (\wh U^\top O) \wh A_x \vec{w}}
& = & \frac
{\vec{1}_m^\top \wh A_x \vec{w}}
{\vec{1}_m^\top \wh A_x \vec{w}
+ (\wh b_\infty - \wt b_\infty)^\top (\wh U^\top O) \wh A_x \vec{w}} \\
& \leq & \frac
{\|\wh A_x \vec{w}\|_1}
{\|\wh A_x \vec{w}\|_1 - 
\|(\wh U^\top O)^\top (\wh b_\infty - \wt b_\infty)\|_\infty
\|\wh A_x \vec{w}\|_1}
\end{eqnarray*}
which is at most $1/(1-\delta_\infty)$ as claimed.
\end{proof}

A consequence of Lemma~\ref{lemma:approx-update} is that if the estimated
parameters are sufficiently accurate, then the state
updates never allow predictions of very small hidden state probabilities.
\begin{corollary} \label{corollary:belief-lb}
Assume
$\delta_\infty \leq 1/2$,
$\max_x \Delta_x \leq \alpha/3$,
$\delta_1 \leq \alpha/8$,
and $\max_x \delta_\infty + \delta_\infty\Delta_x + \Delta_x \leq 1/3$.
Then $[\wh g_t]_i \geq \alpha/2$ for all $t$ and $i$.
\end{corollary}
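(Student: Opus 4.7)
The plan is to proceed by induction on $t$, with inductive hypothesis that $\wh g_t$ is an honest probability vector (every entry $\geq \alpha/2$, in particular nonnegative, summing to $1$).

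The first step is to derive a simple update rule for $\wh g_t$. Starting from $\wh b_{t+1} = \wh B_{x_t}\wh b_t/(\wh b_\infty^\top\wh B_{x_t}\wh b_t)$ and left-multiplying by $(\wh U^\top O)^{-1}$ gives $\wh h_{t+1} = \wh A_{x_t}\wh h_t/(\wh b_\infty^\top(\wh U^\top O)\wh A_{x_t}\wh h_t)$; the subsequent renormalization $\wh g_{t+1}=\wh h_{t+1}/(\vec{1}_m^\top\wh h_{t+1})$ causes the awkward scalar $\wh b_\infty^\top(\wh U^\top O)\wh A_{x_t}\wh h_t$ to cancel, leaving
$$ \wh g_{t+1} \ = \ \frac{\wh A_{x_t}\wh g_t}{\vec{1}_m^\top\wh A_{x_t}\wh g_t}. $$
This $\wh b_\infty$-free form is what makes the induction essentially elementary.

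For the inductive step I would bound numerator and denominator separately. For the numerator, Condition~\ref{cond:alpha} together with $\vec{1}_m^\top\wh g_t = 1$ gives $[A_{x_t}\wh g_t]_i\geq \alpha$, and the naive perturbation estimate $\|(\wh A_{x_t}-A_{x_t})\wh g_t\|_1\leq \Delta_{x_t}\leq\alpha/3$ (which appears inside the proof of Lemma~\ref{lemma:approx-update}) yields $[\wh A_{x_t}\wh g_t]_i\geq 2\alpha/3$. For the denominator, the identity $\vec{1}_m^\top A_{x_t}=O_{x_t,\cdot}^\top$ gives $\vec{1}_m^\top A_{x_t}\wh g_t\leq 1$, and $|\vec{1}_m^\top(\wh A_{x_t}-A_{x_t})\wh g_t|\leq\Delta_{x_t}$ yields $\vec{1}_m^\top\wh A_{x_t}\wh g_t\leq 1+\alpha/3\leq 4/3$, where I use $\alpha\leq 1$. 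Dividing closes the induction: $[\wh g_{t+1}]_i\geq (2\alpha/3)/(4/3)=\alpha/2$.

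For the base case $t=1$, I would appeal to the stationary-initialization convention of the learning model; Condition~\ref{cond:alpha} then forces $\vec{\pi}_i\geq\alpha$ (since $\vec{\pi}=T\vec{\pi}$ and every entry of $T$ is at least $\alpha$). Combined with $\|\wh h_1-\vec{\pi}\|_1\leq\delta_1\leq\alpha/8$, this gives $[\wh h_1]_i\geq 7\alpha/8$ and $\vec{1}_m^\top\wh h_1\leq 1+\delta_1\leq 9/8$, so $[\wh g_1]_i\geq (7\alpha/8)/(9/8)=7\alpha/9\geq \alpha/2$. The main subtlety is recognizing that the ratio bound stated in Lemma~\ref{lemma:approx-update} (which loses a factor $1/(1-\delta_\infty)$) is actually too loose to yield the cleaner $\alpha/2$ bound; one must bypass it and use the individual perturbation bounds on numerator and denominator. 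The corollary's additional hypotheses $\delta_\infty\leq 1/2$ and $\max_x(\delta_\infty+\delta_\infty\Delta_x+\Delta_x)\leq 1/3$ play no role in the argument above and are presumably stated so that the corollary can be combined with the other two claims of Lemma~\ref{lemma:approx-update} in the downstream KL analysis.
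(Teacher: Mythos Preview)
Your argument is correct and in fact cleaner than the paper's. The key simplification---observing that the $\wh b_\infty$-dependent scalar cancels when passing from $\wh h_{t+1}$ to $\wh g_{t+1}$, yielding $\wh g_{t+1}=\wh A_{x_t}\wh g_t/(\vec{1}_m^\top\wh A_{x_t}\wh g_t)$---is not exploited in the paper. The paper's inductive step instead invokes the second inequality of Lemma~\ref{lemma:approx-update}, which bounds the ratio $[\wh A_x\wh g_{t-1}]_i/\bigl(\wh b_\infty^\top(\wh U^\top O)\wh A_x\wh g_{t-1}\bigr)$; but this quantity is $[\wh h_t]_i$, not $[\wh g_t]_i$, so strictly speaking the paper's write-up still owes a normalization step it does not spell out. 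Your direct bound on numerator and denominator sidesteps that issue and, as you correctly observe, renders the $\delta_\infty$ hypotheses superfluous for this corollary---the paper needs $\max_x(\delta_\infty+\delta_\infty\Delta_x+\Delta_x)\leq 1/3$ only because the denominator in Lemma~\ref{lemma:approx-update} carries those extra terms. For the base case the paper goes through Lemma~\ref{lemma:normalize} (obtaining $\|\vec{\pi}-\wh g_1\|_1\leq 4\delta_1$ and hence $[\wh g_1]_i\geq\alpha-4\delta_1$), whereas you bound $[\wh h_1]_i$ and $\vec{1}_m^\top\wh h_1$ separately; both routes work, and both require $\vec{\pi}_i\geq\alpha$, which the paper attributes simply to Condition~\ref{cond:alpha} without elaboration---your appeal to stationarity makes that implicit step explicit.
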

\begin{proof}
For $t = 1$,
we use Lemma~\ref{lemma:parameters} to get
$\|\vec{h}_1 - \wh h_1\|_1 \leq \delta_1 \leq 1/2$, so
Lemma~\ref{lemma:normalize} implies that $\|\vec{h}_1 - \wh g_1\|_1 \leq
4\delta_1$.
Then $[\wh g_1]_i \geq [\vec{h}_1]_i - |[\vec{h}_1]_i - [\wh g_1]_i| \geq
\alpha - 4\delta_1 \geq \alpha/2$ (using Condition~\ref{cond:alpha})
as needed.
For $t > 1$, Lemma~\ref{lemma:approx-update} implies
$$ \frac{[\wh A_x \wh g_{t-1}]_i}
{\vec{b}_\infty^\top (\wh U^\top O) \wh A_x \wh g_{t-1}}
\ \geq \
\frac{[A_x \wh g_{t-1}]_i - \Delta_x}
{\vec{1}_m^\top A_x \wh g_{t-1} + \delta_\infty + \delta_\infty \Delta_x + \Delta_x}
\ \geq \
\frac{\alpha - \alpha/3}{1 + 1/3}
\ \geq \
\frac{\alpha}{2}
$$
using Condition~\ref{cond:alpha} in the second-to-last step.
\end{proof}

Lemma~\ref{lemma:approx-update} and Corollary~\ref{corollary:belief-lb}
can now be used to prove the contraction property of the KL-divergence
between the true hidden states and the estimated hidden states.
The analysis shares ideas from \cite{Even-DarKM07}, though the added
difficulty is due to the fact that the state maintained by our algorithm is
not a probability distribution.
\begin{lemma} \label{lemma:kl-recurrence}
Let $\eps_0 = \max_x 2\Delta_x/\alpha + (\delta_\infty +
\delta_\infty\Delta_x + \Delta_x)/\alpha + 2\delta_\infty$.
Assume $\delta_\infty \leq 1/2$, $\max_x \Delta_x \leq \alpha/3$, and
$\max_x \delta_\infty + \delta_\infty\Delta_x + \Delta_x \leq 1/3$.
For all $t$, if $\wh g_t \in \R^m$ is a probability vector, then
$$ KL(\vec{h}_{t+1} || \wh g_{t+1})
\ \leq \ KL(\vec{h}_t || \wh g_t) -
\frac{\gamma^2}{2\left(\ln \frac{2}{\alpha} \right)^2}
KL(\vec{h}_t || \wh g_t)^2 + \eps_0. $$
\end{lemma}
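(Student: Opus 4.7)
The plan is to adapt the approach of \cite{Even-DarKM07}: apply the KL chain rule twice, once to isolate a contraction term involving the observation and once to bound a joint KL by the previous-step KL on hidden states. Concretely, let $P(x,i) = [A_x \vec h_t]_i$ denote the true distribution of $(x_t, h_{t+1})$ given $x_{1:t-1}$, and let $\wh Q(x,i) \propto [\wh A_x \wh g_t]_i$ be the corresponding estimated distribution, normalized so as to sum to $1$. From the update $\wh g_{t+1} = \wh A_{x_t} \wh g_t / \vec 1_m^\top \wh A_{x_t} \wh g_t$ and the definition of $\vec h_{t+1}$, one has $\wh Q(h_{t+1} \mid x_t) = \wh g_{t+1}$ and $P(h_{t+1} \mid x_t) = \vec h_{t+1}$, so the conditional-posterior slot in the chain rule is exactly what the lemma wants to bound.

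The chain rule applied to $(x_t, h_{t+1})$ gives
\[
\E_{x_t \sim P_{x_t}} KL(\vec h_{t+1} \,\|\, \wh g_{t+1}) \ = \ KL(P \,\|\, \wh Q) \ - \ KL(P_{x_t} \,\|\, \wh Q_{x_t}).
\]
To bound the joint KL on the right, I would extend $\wh Q$ to a joint over $(h_t, x_t, h_{t+1})$ via $\wh Q(j, x, i) \propto [\wh A_x]_{ij} [\wh g_t]_j$, apply the chain rule a second time conditioning on $h_t$, and invoke the data-processing inequality to drop $h_t$. This yields $KL(P \,\|\, \wh Q) \leq KL(\vec h_t \,\|\, \wh g_t) + O(\eps_0)$, where the additive $O(\eps_0)$ collects the column-wise perturbations $\|\wh A_x - A_x\|_1 \leq \Delta_x$ (translated from $L_1$ to KL using the $[A_x]_{ij} \geq \alpha$ lower bound of Condition~\ref{cond:alpha}), the normalization slack of $\wh Q$ controlled by Lemma~\ref{lemma:approx-update}, and the mismatch between the $h_t$-marginal of $\wh Q$ and $\wh g_t$.

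For the predictive term, $P_{x_t}(x) = \vec 1_m^\top A_x \vec h_t = [O \vec h_t]_x$ exactly, while Lemma~\ref{lemma:approx-update} implies that $\wh Q_{x_t}(x)$ agrees with $[O \wh g_t]_x$ up to $O(\eps_0)$. The ``value of observation'' inequality $\|O(\vec h_t - \wh g_t)\|_1 \geq \gamma \|\vec h_t - \wh g_t\|_1$ together with Pinsker then gives
\[
KL(P_{x_t} \,\|\, \wh Q_{x_t}) \ \geq \ \tfrac12 \bigl(\gamma \|\vec h_t - \wh g_t\|_1 - O(\eps_0)\bigr)^2.
\]
Since Corollary~\ref{corollary:belief-lb} guarantees $[\wh g_t]_i \geq \alpha/2$, a comparison of $L_1$ and KL---the bound $\|\vec h_t - \wh g_t\|_1 \geq KL(\vec h_t \,\|\, \wh g_t)/\ln(2/\alpha)$, which holds because $\wh g_t$ is bounded below by $\alpha/2$---upgrades $\gamma^2 \|\vec h_t - \wh g_t\|_1^2$ into the claimed quadratic contraction $\gamma^2 KL(\vec h_t \,\|\, \wh g_t)^2 / (2 \ln^2(2/\alpha))$.

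The main obstacle is the bookkeeping required to collect every approximation error into a single $\eps_0$ matching the form in the statement. The awkward point is that $\wh Q$ is not a genuine distribution arising from a Markov chain: $\sum_x \wh A_x$ need not equal $T$, and $\wh b_\infty^\top (\wh U^\top O) \wh A_x \vec w$ only approximately equals $\vec 1_m^\top A_x \vec w$, so every chain rule and data-processing step must be paired with a matching perturbation term drawn from Lemma~\ref{lemma:approx-update}. A secondary subtlety is expanding the squared lower bound on $KL(P_{x_t}\,\|\,\wh Q_{x_t})$ via $(a-b)^2 \geq a^2 - 2ab$: when $KL(\vec h_t \,\|\, \wh g_t)$ is small, the cross term contributes an extra additive discrepancy that must be shown to fit inside $\eps_0$, so the proof must argue that whenever the contraction term becomes smaller than $\eps_0$ the inequality is already vacuously satisfied on the additive side.
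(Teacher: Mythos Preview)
Your high-level plan---two applications of the KL chain rule, Pinsker combined with the value-of-observation lower bound $\gamma$, and the $L_1$-to-KL comparison via $[\wh g_t]_i \geq \alpha/2$---matches the paper's exactly. The difference is organizational, and it is precisely the point you flag as ``the main obstacle.''

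You propose to carry the perturbed operators $\wh A_x$ through the entire argument, defining $\wh Q$ in terms of $\wh A_x \wh g_t$ and then paying a perturbation cost at every chain-rule and data-processing step. The paper instead pays the perturbation cost \emph{once, upfront}: using Lemma~\ref{lemma:approx-update} it bounds
\[
\ln \frac{1}{[\wh g_{t+1}]_i} \ \leq \ \ln \frac{\vec 1_m^\top A_{x_t} \wh g_t}{[A_{x_t} \wh g_t]_i} \ + \ \eps_0,
\]
which replaces $\wh A_{x_t}$ by the true $A_{x_t}$ inside the logarithm. After this substitution, the object $\wt\Pr[h_{t+1}\mid x_{1:t}] = A_{x_t}\wh g_t / (\vec 1_m^\top A_{x_t}\wh g_t)$ is a genuine Bayes update of $\wh g_t$ under the \emph{true} dynamics, so all subsequent chain-rule manipulations are exact (no further perturbation terms). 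In particular the predictive marginal is exactly $O\wh g_t$, so the contraction term is exactly $KL(O\vec h_t \,\|\, O\wh g_t) \geq \tfrac{\gamma^2}{2}\|\vec h_t - \wh g_t\|_1^2$, with no $(a-b)^2$ cross term at all.

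This resolves the ``secondary subtlety'' you raise: in your organization the cross term $2\gamma\|\vec h_t - \wh g_t\|_1 \cdot O(\eps_0)$ is \emph{not} small when $\|\vec h_t - \wh g_t\|_1$ is of order $1$, so your proposed case split (``when the contraction term is small the inequality is vacuous'') does not dispose of it. It can be bounded crudely by $O(\eps_0)$ using $\|\vec h_t - \wh g_t\|_1 \leq 2$, but then your additive error becomes $C\eps_0$ for some $C>1$ rather than the exact $\eps_0$ stated in the lemma. The paper's ``pay upfront'' trick sidesteps the issue entirely and yields the stated constant.
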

\begin{proof}
The LHS, written as an expectation over $x_{1:t}$, is
$$
KL(\vec{h}_{t+1} || \wh g_{t+1})
\ = \
\E_{x_{1:t}}
\left[
\sum_{i=1}^m
[\vec{h}_{t+1}]_i \ln \frac{[\vec{h}_{t+1}]_i}{[\wh g_{t+1}]_i}
\right].
$$
We can bound $\ln(1/[\wh g_{t+1}]_i)$ as
\begin{eqnarray*}
\ln \frac{1}{[\wh g_{t+1}]_i}
& = & \ln \left(
\frac
{\wh b_\infty^\top (\wh U^\top O) \wh A_{x_t} \wh g_t}
{[\wh A_{x_t} \wh g_t]_i}
\cdot {\vec{1}_m^\top \wh h_{t+1}}
\right) \\
& = &
\ln \left(
\frac {\vec{1}_m^\top A_{x_t} \wh g_t} {[A_{x_t} \wh g_t]_i}
\cdot \frac {[A_{x_t} \wh g_t]_i} {[\wh A_{x_t} \wh g_t]_i}
\cdot \frac {\wh b_\infty^\top (\wh U^\top O) \wh A_{x_t} \wh g_t}
{\vec{1}_m^\top A_{x_t} \wh g_t}
\cdot \vec{1}_m^\top \wh h_{t+1}
\right) \\
& \leq &
\ln \left(
\frac {\vec{1}_m^\top A_{x_t} \wh g_t} {[A_{x_t} \wh g_t]_i}
\ \cdot \ \frac {[A_{x_t} \wh g_t]_i} {[A_{x_t} \wh g_t]_i - \Delta_{x_t}}
\right. \\
& &
\hphantom{\ln\ }
\left.
\cdot \ \frac
{\vec{1}_m^\top A_{x_t} \wh g_t + \delta_\infty + \delta_\infty\Delta_{x_t} + \Delta_{x_t}}
{\vec{1}_m^\top A_{x_t} \wh g_t}
\ \cdot \ (1 + 2\delta_\infty)
\right) \\
& \leq &
\ln \left(
\frac {\vec{1}_m^\top A_{x_t} \wh g_t} {[A_{x_t} \wh g_t]_i}
\right) 
\ + \ \frac{2\Delta_{x_t}}{\alpha}
\ + \ \frac {\delta_\infty + \delta_\infty \Delta_{x_t} + \Delta_{x_t}} {\alpha}
\ + \ 2\delta_\infty \\
& \leq &
\ln \left(
\frac {\vec{1}_m^\top A_{x_t} \wh g_t} {[A_{x_t} \wh g_t]_i}
\right) 
\ + \ \eps_0
\end{eqnarray*}
where the first inequality follows from Lemma~\ref{lemma:approx-update},
and the second uses $\ln (1 + a) \leq a$.
Therefore,
\begin{equation}
KL(\vec{h}_{t+1} || \wh g_{t+1})
\ \leq \
\E_{x_{1:t}} \left[
\sum_{i=1}^m
[\vec{h}_{t+1}]_i
\ln \left(
[\vec{h}_{t+1}]_i
\cdot \frac {\vec{1}_m^\top A_{x_t} \wh g_t} {[A_{x_t} \wh g_t]_i}
\right) \right] \ + \ \eps_0.
\label{eq:kl1}
\end{equation}
The expectation in \eqref{eq:kl1} is the KL-divergence between
$\Pr[h_t|x_{1:t-1}]$ and the distribution over $h_{t+1}$ that is arrived at
by updating $\wh \Pr[h_t|x_{1:t-1}]$ (using Bayes' rule) with
$\Pr[h_{t+1}|h_t]$ and $\Pr[x_t|h_t]$.
Call this second distribution $\wt \Pr[h_{t+1}|x_{1:t}]$.
The chain rule for KL-divergence states
\begin{eqnarray*}
\lefteqn{KL(\Pr[h_{t+1}|x_{1:t}]||\wt \Pr[h_{t+1}|x_{1:t}]) \ + \
KL(\Pr[h_t|h_{t+1},x_{1:t}]||\wt \Pr[h_t|h_{t+1},x_{1:t}])} \\
& = &
KL(\Pr[h_t|x_{1:t}]||\wt \Pr[h_t|x_{1:t}]) \ + \
KL(\Pr[h_{t+1}|h_t,x_{1:t}]||\wt \Pr[h_{t+1}|h_t,x_{1:t}]).
\end{eqnarray*}
Thus, using the non-negativity of KL-divergence, we have
\begin{eqnarray*}
\lefteqn{KL(\Pr[h_{t+1}|x_{1:t}]||\wt \Pr[h_{t+1}|x_{1:t}])}
\\
& \leq &
KL(\Pr[h_t|x_{1:t}]||\wt \Pr[h_t|x_{1:t}]) +
KL(\Pr[h_{t+1}|h_t,x_{1:t}]||\wt \Pr[h_{t+1}|h_t,x_{1:t}]) \\
& = &
KL(\Pr[h_t|x_{1:t}]||\wt \Pr[h_t|x_{1:t}])
\end{eqnarray*}
where the equality follows from the fact that
$\wt \Pr[h_{t+1}|h_t,x_{1:t}]
= \wt \Pr[h_{t+1}|h_t]
= \Pr[h_{t+1}|h_t]
= \Pr[h_{t+1}|h_t,x_{1:t}]$.
Furthermore,
\[
\Pr[h_t = i|x_{1:t}]
=
\Pr[h_t = i|x_{1:t-1}] \cdot \frac{\Pr[x_t|h_t=i]}
{\sum_{j=1}^m \Pr[x_t|h_t=j] \cdot \Pr[h_t=j|x_{1:t-1}]}
\]
and
\[
\wt \Pr[h_t = i|x_{1:t}]
=
\wh \Pr[h_t = i|x_{1:t-1}] \cdot \frac{\Pr[x_t|h_t=i]}
{\sum_{j=1}^m \Pr[x_t|h_t=j] \cdot \wh \Pr[h_t=j|x_{1:t-1}]},
\]
so
\begin{eqnarray*}
\lefteqn{KL(\Pr[h_t|x_{1:t}]||\wt \Pr[h_t|x_{1:t}])}
\\
& = &
\E_{x_{1:t}} \left[
\sum_{i=1}^m
\Pr[h_t = i|x_{1:t}]
\ln \frac
{\Pr[h_t = i | x_{1:t-1}]}
{\wh \Pr[h_t = i | x_{1:t-1}]}
\right] \\
& &
- \ \E_{x_{1:t}} \left[
\sum_{i=1}^m
\Pr[h_t = i|x_{1:t}]
\ln \frac
{\sum_{j=1}^m \Pr[x_t|h_t=j] \cdot \Pr[h_t=j|x_{1:t-1}]}
{\sum_{j=1}^m \Pr[x_t|h_t=j] \cdot \wh \Pr[h_t=j|x_{1:t-1}]}
\right].
\end{eqnarray*}
The first expectation is
\begin{eqnarray*}
\lefteqn{\E_{x_{1:t}} \left[
\sum_{i=1}^m
\Pr[h_t = i|x_{1:t}]
\ln \frac
{\Pr[h_t = i | x_{1:t-1}]}
{\wh \Pr[h_t = i | x_{1:t-1}]}
\right]} \\
& = & \E_{x_{1:t-1}} \left[
\sum_{x_t}
\Pr[x_t|x_{1:t-1}]
\sum_{i=1}^m
\Pr[h_t = i|x_{1:t}]
\ln \frac
{\Pr[h_t = i | x_{1:t-1}]}
{\wh \Pr[h_t = i | x_{1:t-1}]}
\right] \\
& = & \E_{x_{1:t-1}} \left[
\sum_{x_t}
\sum_{i=1}^m
\Pr[x_t|h_t = i] \cdot
\Pr[h_t = i|x_{1:t-1}]
\ln \frac
{\Pr[h_t = i | x_{1:t-1}]}
{\wh \Pr[h_t = i | x_{1:t-1}]}
\right] \\
& = & \E_{x_{1:t-1}} \left[
\sum_{x_t}
\sum_{i=1}^m
\Pr[x_t,h_t = i|x_{1:t-1}]
\ln \frac
{\Pr[h_t = i | x_{1:t-1}]}
{\wh \Pr[h_t = i | x_{1:t-1}]}
\right] \\
& = & KL(\vec{h}_t||\wh g_t),
\end{eqnarray*}
and the second expectation is
\begin{eqnarray*}
\lefteqn{\E_{x_{1:t}} \left[
\sum_{i=1}^m
\Pr[h_t = i|x_{1:t}]
\ln \frac
{\sum_{j=1}^m \Pr[x_t|h_t=j] \cdot \Pr[h_t=j|x_{1:t-1}]}
{\sum_{j=1}^m \Pr[x_t|h_t=j] \cdot \wh \Pr[h_t=j|x_{1:t-1}]}
\right]} \\
& = &
\E_{x_{1:t-1}} \left[
\sum_{x_t}
\Pr[x_t | x_{1:t-1}]
\ln \frac
{\sum_{j=1}^m \Pr[x_t|h_t=j] \cdot \Pr[h_t=j|x_{1:t-1}]}
{\sum_{j=1}^m \Pr[x_t|h_t=j] \cdot \wh \Pr[h_t=j|x_{1:t-1}]}
\right] \\
& = & KL(O\vec{h}_t||O \wh g_t).
\end{eqnarray*}
Substituting these back into \eqref{eq:kl1}, we have
$$ KL(\vec{h}_{t+1}||\wh g_{t+1})
\ \leq \
KL(\vec{h}_t || \wh g_t) - KL(O\vec{h}_t || O\wh g_t) + \eps_0. $$
It remains to bound $KL(O\vec{h}_t || O\wh g_t)$ from above.
We use Pinsker's inequality \citep{cover-thomas},
which states that for any distributions $\vec{p}$ and $\vec{q}$,
$$
KL(\vec{p}||\vec{q}) \ \geq \ \frac12 \|\vec{p} - \vec{q}\|_1^2,
$$
together with the definition of $\gamma$, to deduce
$$ KL(O\vec{h}_t || O \wh g_t )
\ \geq \ \frac12 \E_{x_{1:t-1}} \|O\vec{h}_t - O \wh g_t\|_1^2
\ \geq \ \frac{\gamma^2}{2} \E_{x_{1:t-1}} \|\vec{h}_t - \wh g_t\|_1^2.
$$
Finally, by Jensen's inequality and Lemma~\ref{lemma:kl-lb} (the latter
applies because of Corollary~\ref{corollary:belief-lb}), we have that
$$
\E_{x_{1:t-1}} \|\vec{h}_t - \wh g_t\|_1^2
\ \geq \
(\E_{x_{1:t-1}} \|\vec{h}_t - \wh g_t\|_1)^2
\ \geq \
\left(\frac{1}{\ln \frac{2}{\alpha}}
KL(\vec{h}_t || \wh g_t) \right)^2
$$
which gives the required bound.
\end{proof}

Finally, the recurrence from Lemma~\ref{lemma:kl-recurrence} easily gives
the following lemma, which in turn combines with the sampling error bounds
of Lemma~\ref{lemma:sampling} to give Theorem~\ref{theorem:main-tracking}.
\begin{lemma} \label{lemma:tracking}
Let $\eps_0 = \max_x 2\Delta_x/\alpha + (\delta_\infty +
\delta_\infty\Delta_x + \Delta_x)/\alpha + 2\delta_\infty$
and $\eps_1 = \max_x (\delta_\infty + \sqrt{m} \delta_\infty \Delta_x + \sqrt{m}
\Delta_x)/\alpha$.
Assume $\delta_\infty \leq 1/2$, $\max_x \Delta_x \leq \alpha/3$,
$\max_x \delta_\infty + \delta_\infty\Delta_x + \Delta_x \leq 1/3$,
$\delta_1 \leq \ln(2/\alpha)/(8\gamma^2)$, 
$\eps_0 \leq \ln(2/\alpha)^2/(4\gamma^2)$, 
and $\eps_1 \leq 1/2$.
Then for all $t$,
\begin{eqnarray*}
& &
KL(\vec{h}_t || \wh g_t)
\ \leq \
\max\left( 4\delta_1 \log(2/\alpha),
\ \sqrt{\frac{2 \left( \ln \frac{2}{\alpha} \right)^2 \eps_0}{\gamma^2}}
\right)
\quad \text{and} \\
& & KL(\Pr[x_t|x_{1:t-1}] \ || \ \wh \Pr[x_t|x_{1:t-1}])
\ \leq \
KL(\vec{h}_t || \wh g_t)
\ + \ \delta_\infty + \delta_\infty\Delta + \Delta
\ + \ 2\eps_1.
\end{eqnarray*}
\end{lemma}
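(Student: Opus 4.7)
The plan is to prove the first bound by induction on $t$ using the one-step contraction in Lemma~\ref{lemma:kl-recurrence}, and then to obtain the second bound by combining the data-processing inequality with Lemma~\ref{lemma:approx-update} to relate the algorithm's predicted observation law to $O\wh g_t$.

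\textbf{First bound.} Write $a_t = KL(\vec{h}_t \,\|\, \wh g_t)$, $c = \gamma^2/(2(\ln(2/\alpha))^2)$, and $B = \sqrt{\eps_0/c}$, so that Lemma~\ref{lemma:kl-recurrence} reads $a_{t+1} \leq f(a_t)$ for $f(a) = a - ca^2 + \eps_0$. Two facts drive the induction: $B$ is an exact fixed point of $f$, and $f$ is monotone nondecreasing on $[0, 1/(2c)]$. For the base case I would bound $\|\wh h_1 - \vec{\pi}\|_1 \leq \delta_1$ via Lemma~\ref{lemma:parameters}, lift this to $\|\wh g_1 - \vec{\pi}\|_1 \leq 4\delta_1$ via the normalization argument used in Corollary~\ref{corollary:belief-lb}, invoke $[\wh g_1]_i \geq \alpha/2$ from the same corollary, and convert the $L_1$ bound to a KL bound via Lemma~\ref{lemma:kl-lb} to conclude $a_1 \leq 4\delta_1 \ln(2/\alpha)$. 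Setting $M = \max(4\delta_1 \ln(2/\alpha), B)$, the inductive step splits on whether $a_t \leq B$ (in which case monotonicity of $f$ gives $a_{t+1} \leq f(B) = B \leq M$) or $a_t \in (B, M]$ (in which case $f(a) \leq a$ already, since $-ca^2 + \eps_0 \leq 0$ for $a \geq B$). The stated ceilings on $\delta_1$ and $\eps_0$ are precisely what is needed to keep both candidate maximands in the monotonicity interval $[0, 1/(2c)]$.

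\textbf{Second bound.} Let $\vec{p}$ be the true law $\Pr[x_t|x_{1:t-1}] = O\vec{h}_t$, let $\vec{q} = O\wh g_t$, and let $\wh{\vec{p}}$ be the algorithm's prediction. Since conditioning on $x_t$ in the joint law of $(h_t,x_t)$ is a Markov kernel applied to both $\vec{h}_t$ and $\wh g_t$, the data-processing inequality gives $KL(\vec{p}\,\|\,\vec{q}) \leq KL(\vec{h}_t\,\|\,\wh g_t) = a_t$. To pass from $\vec{q}$ to $\wh{\vec{p}}$, I would expand
\[
\wh p_x \ = \ \frac{\wh b_\infty^\top (\wh U^\top O)\, \wh A_x \wh g_t}{\sum_{x'} \wh b_\infty^\top (\wh U^\top O)\, \wh A_{x'} \wh g_t}
\]
and apply Lemma~\ref{lemma:approx-update} to control both the denominator (of the form $1 + E$ with $|E| \leq \delta_\infty + \delta_\infty\Delta + \Delta$) and each numerator (of the form $q_x + E_x$). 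Using $[O\wh g_t]_x \geq m\alpha$ (since $O_{xj} = \sum_i [A_x]_{ij} \geq m\alpha$ by Condition~\ref{cond:alpha}) together with an $L_\infty$-to-$L_1$ conversion (which produces the $\sqrt m$ in the definition of $\eps_1$), the relative error on each $q_x$ is at most $\eps_1$. Decomposing
\[
KL(\vec{p}\,\|\,\wh{\vec{p}}) \ = \ KL(\vec{p}\,\|\,\vec{q}) \ + \ \sum_x p_x \log \frac{q_x}{\wh p_x}
\]
and applying $\log(1+u) \leq u$ to both the numerator- and denominator-level errors yields the target bound $a_t + (\delta_\infty + \delta_\infty \Delta + \Delta) + 2\eps_1$.

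\textbf{Main obstacle.} The delicate point is the invariance in the first induction: the maximum of the base-case constant and the fixed point $B$ is preserved under $f$ only because $B$ is an exact fixed point and $f$ is monotone on the relevant interval, and verifying that the stated hypotheses on $\delta_1$, $\eps_0$, and $\max_x \Delta_x$ suffice to place both maximands inside this monotonicity region is the most careful part of the argument. By comparison, the second bound is largely bookkeeping, though some care is needed in tracking how the $\sqrt{m}$ factor in $\eps_1$ arises from a per-coordinate analysis of $(\wh A_x - A_x)\wh g_t$ and how the additive and multiplicative error sources introduced by the algorithm's normalization collapse cleanly into $2\eps_1 + (\delta_\infty + \delta_\infty \Delta + \Delta)$.
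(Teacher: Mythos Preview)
Your argument for the first inequality is the paper's in slightly different clothing: the paper absorbs your case split by replacing $\eps_0$ with $c_2=\max(\eps_0,\, c\,(4\delta_1\ln(2/\alpha))^2)$, so that $\sqrt{c_2/c}$ equals your $M$ and is an exact fixed point of $z\mapsto z-cz^2+c_2$; monotonicity on $[0,1/(2c)]$ then handles both of your cases at once. The content is identical.

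For the second inequality your route is genuinely different. The paper does \emph{not} pass through $\vec q=O\wh g_t$; it lifts to the joint law of $(h_t,x_t)$, applies the KL chain rule to obtain $KL(\vec h_t\|\wh g_t)+KL(\Pr[x_t\mid h_t]\,\|\,\wh\Pr[x_t\mid h_t])$, and bounds the second summand via a \emph{per-hidden-state} comparison of $O_{x_t,i}$ with $[\wh b_\infty^\top(\wh U^\top O)\wh A_{x_t}]_i$. It is this coordinatewise-in-$i$ analysis (an $\|\cdot\|_\infty$ bound on a row-vector--matrix product, passing from $\|\wh A_x-A_x\|_\infty$ to $\Delta_x=\|\wh A_x-A_x\|_1$) that generates the $\sqrt m$ inside $\eps_1$. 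Your data-processing argument instead controls only the scalar $\wh b_\infty^\top(\wh U^\top O)\wh A_x\wh g_t$ against $q_x=\vec 1_m^\top A_x\wh g_t$, and with the lower bound $q_x\geq m\alpha$ you actually obtain a relative error of at most $\delta_\infty+(\delta_\infty\Delta_x+\Delta_x)/(m\alpha)$, which is \emph{smaller} than $\eps_1$. So your method proves the lemma with slack, but your remark that ``an $L_\infty$-to-$L_1$ conversion produces the $\sqrt m$'' does not describe what your own argument does; no such conversion occurs on your path. (A minor wording fix: the data-processing step is ``marginalizing to $x_t$ through the channel $O$'', not ``conditioning on $x_t$''.) With those clarifications the approach is correct and arguably cleaner than the paper's chain-rule detour.
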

\begin{proof}
To prove the bound on
$KL(\vec{h}_t || \wh g_t)$,
we proceed by induction on $t$.
For the base case, Lemmas~\ref{lemma:kl-lb} (with
Corollary~\ref{corollary:belief-lb}) and \ref{lemma:normalize}
imply
$KL(\vec{h}_1 || \wh g_1)
\leq \|\vec{h}_1 - \wh g_1\|_1 \ln (2/\alpha)
\leq 4\delta_1 \ln (2/\alpha)$
as required. The inductive step follows easily from
Lemma~\ref{lemma:kl-recurrence} and simple calculus:
assuming $c_2 \leq 1/(4c_1)$, $z - c_1z^2 + c_2$ is non-decreasing in $z$
for all $z \leq \sqrt{c_2/c_1}$, so $z' \leq z - c_1z^2 + c_2$ and $z \leq
\sqrt{c_2/c_1}$ together imply that $z' \leq \sqrt{c_2/c_1}$.
The inductive step uses the the above fact with $z = KL(\vec{h}_t||\wh
g_t)$, $z' = KL(\vec h_{t+1} || \wh g_{t+1})$, $c_1 =
\gamma^2/(2(\ln(2/\alpha))^2)$, and $c_2 = \max(\eps_0, c_1 (4\delta_1
\log(2/\alpha))^2)$.

Now we prove the bound on
$KL(\Pr[x_t|x_{1:t-1}] || \wh \Pr[x_t|x_{1:t-1}])$.
First, let $\wh \Pr[x_t,h_t | x_{1:t-1}]$ denote our predicted
conditional probability of both the hidden state and observation,
\emph{i.e.}~the product of the following two quantities:
$$
\wh \Pr[h_t = i | x_{1:t-1}]
\ = \
[\wh g_t]_i
\quad \text{and} \quad
\wh \Pr[x_t | h_t = i, x_{1:t-1}]
\ = \
\frac
{[\wh b_\infty^\top (\wh U^\top O) \wh A_{x_t}]_i}
{\sum_x \wh b_\infty^\top (\wh U^\top O) \wh A_x \wh g_t}.
$$
Now we can apply the chain rule for KL-divergence
\begin{eqnarray*}
\lefteqn{KL(\Pr[x_t|x_{1:t-1}] || \wh \Pr[x_t|x_{1:t-1}])} \\
& \leq &
KL(\Pr[h_t|x_{1:t-1}] || \wh \Pr[h_t|x_{1:t-1}])
+ KL(\Pr[x_t | h_t,x_{1:t-1}] || \wh\Pr[x_t | h_t,x_{1:t-1}])
\\
& = & KL(\vec{h}_t || \wh g_t) + \E_{x_{1:t-1}}
\left[ \sum_{i=1}^m \sum_{x_t} [\vec{h}_t]_i O_{x_t,i} \ln \left(
O_{x_t,i} \cdot
\frac{\sum_x \wh b_\infty^\top (\wh U^\top O) \wh A_x \wh g_t}
{[\wh b_\infty^\top (\wh U^\top O) \wh A_{x_t}]_i}
\right) \right] \\
& \leq & KL(\vec{h}_t || \wh g_t) + \E_{x_{1:t-1}}
\left[ \sum_{i=1}^m \sum_{x_t} [\vec{h}_t]_i O_{x_t,i} \ln \left(
\frac{O_{x_t,i}}
{[\wh b_\infty^\top (\wh U^\top O) \wh A_{x_t}]_i}
\right) \right] \\
& & + \ \ln(1 + \delta_\infty + \delta_\infty\Delta + \Delta)
\end{eqnarray*}
where the last inequality uses Lemma~\ref{lemma:approx-update}.
It will suffice to show that
\[ \frac{O_{x_t,i}}{[\wh b_\infty^\top (\wh U^\top O) \wh A_{x_t}]_i}
\leq 1 + 2\eps_1
. \]
Note that $O_{x_t,i} = [\wt b_\infty^\top (\wh U^\top O) A_{x_t}]_i >
\alpha$ by Condition~\ref{cond:alpha}. Furthermore, for any $i$,
\begin{eqnarray*}
| [\wh b_\infty^\top (\wh U^\top O) \wh A_{x_t}]_i
- O_{x_t,i} |
& \leq &
\|
\wh b_\infty^\top (\wh U^\top O) \wh A_{x_t}
- \wt b_\infty^\top (\wh U^\top O) A_{x_t}
\|_\infty \\
& \leq &
\|(\wh b_\infty - \wt b_\infty) (\wh U^\top O) \|_\infty \|A_{x_t}\|_\infty \\
& & + \ \|(\wh b_\infty - \wt b_\infty) (\wh U^\top O) \|_\infty \|\wh
A_{x_t} - A_{x_t}\|_\infty \\
& & + \ \|\wt b_\infty (\wh U^\top O) \|_\infty \|\wh A_{x_t} -
A_{x_t}\|_\infty \\
& \leq & \delta_\infty + \sqrt{m}\delta_\infty\Delta_{x_t} +
\sqrt{m}\Delta_{x_t}.
\end{eqnarray*}
Therefore
\begin{eqnarray*}
\frac{O_{x_t,i}}{[\wh b_\infty^\top (\wh U^\top O) \wh A_{x_t}]_i}
& \leq &
\frac{O_{x_t,i}}{O_{x_t,i} -
(\delta_\infty + \sqrt{m}\delta_\infty\Delta_{x_t} + \sqrt{m}\Delta_{x_t})} \\
& \leq &
\frac{1}{1 -
(\delta_\infty + \sqrt{m}\delta_\infty\Delta_{x_t} + \sqrt{m}\Delta_{x_t})/\alpha} \\
& \leq &
\frac{1}{1 - \eps_1}
\ \leq \ 1 + 2\eps_1
\end{eqnarray*}
as needed.
\end{proof}
\begin{proof}[Proof of Theorem~\ref{theorem:main-tracking}]
The proof is mostly the same as that of Theorem~\ref{theorem:main-l1} with
$t=1$, except that Lemma~\ref{lemma:tracking} introduces additional error
terms.
Specifically, we require
\[
N \geq C \cdot \frac{\ln(2/\alpha)^4}{\epsilon^4 \alpha^2 \gamma^4}
\cdot \frac{m}{\sigma_m(O)^2 \sigma_m(P_{2,1})^4}
\quad \text{and} \quad
N \geq C \cdot \frac{m}{\epsilon^2 \alpha^2}
\cdot \frac{m}{\sigma_m(O)^2 \sigma_m(P_{2,1})^4}
\]
so that the terms
\[ \max\left( 4\delta_1 \log(2/\alpha), \ \sqrt{\frac{2\ln(2/\alpha)^2
\eps_0}{\gamma^2}} \right)
\quad \text{and} \quad \eps_1, \]
respectively, are $O(\epsilon)$.
The specified number of samples $N$ also suffices to imply the preconditions
of Lemma~\ref{lemma:tracking}.
The remaining terms are bounded as in the proof of
Theorem~\ref{theorem:main-l1}.
\end{proof}

\begin{lemma} \label{lemma:normalize}
If $\|\vec{a} - \vec{b}\|_1 \leq c \leq 1/2$ and $\vec{b}$ is a probability vector, then
$\|\vec{a}/(\vec{1}^\top \vec{a}) - \vec{b}\|_1 \leq 4c$.
\end{lemma}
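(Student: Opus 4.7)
The plan is to reduce this to standard scalar estimates using the triangle inequality, with the key observation that $\vec{1}^\top \vec{a}$ must be close to $1$ since $\vec{b}$ is a probability vector.

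First I would note that, because $\vec{b}$ is a probability vector, $\vec{1}^\top \vec{b} = 1$, and therefore
\[
|\vec{1}^\top \vec{a} - 1| \ = \ |\vec{1}^\top(\vec{a} - \vec{b})| \ \leq \ \|\vec{a} - \vec{b}\|_1 \ \leq \ c.
\]
In particular, $\vec{1}^\top \vec{a} \geq 1 - c \geq 1/2 > 0$, so the normalization $\vec{a}/(\vec{1}^\top \vec{a})$ is well-defined (and we don't need $\vec{a}$ to be non-negative for the argument). I would also observe that $\|\vec{a}\|_1 \leq \|\vec{a} - \vec{b}\|_1 + \|\vec{b}\|_1 \leq c + 1$.

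Next, I would split the error via the triangle inequality by inserting $\vec{a}$:
\[
\left\| \frac{\vec{a}}{\vec{1}^\top \vec{a}} - \vec{b} \right\|_1
\ \leq \
\left\| \frac{\vec{a}}{\vec{1}^\top \vec{a}} - \vec{a} \right\|_1 + \|\vec{a} - \vec{b}\|_1
\ = \
\frac{|1 - \vec{1}^\top \vec{a}|}{\vec{1}^\top \vec{a}} \, \|\vec{a}\|_1 + \|\vec{a} - \vec{b}\|_1.
\]
Using the bounds $|1 - \vec{1}^\top \vec{a}| \leq c$, $\vec{1}^\top \vec{a} \geq 1 - c$, $\|\vec{a}\|_1 \leq 1 + c$, and $\|\vec{a} - \vec{b}\|_1 \leq c$, the right-hand side is at most $\frac{c(1+c)}{1-c} + c$.

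Finally, I would verify by elementary calculus (or by noting the inequality $\frac{1+c}{1-c} \leq 3$ for $c \leq 1/2$) that $\frac{c(1+c)}{1-c} + c \leq 4c$ whenever $c \leq 1/2$, with equality attained at $c = 1/2$. There is no real obstacle here; the only subtlety is remembering that $\vec{a}$ may have negative entries (so $\|\vec{a}\|_1 \neq \vec{1}^\top \vec{a}$ in general) and handling $\|\vec{a}\|_1$ via the triangle inequality against $\vec{b}$ rather than against $\vec{1}^\top\vec{a}$.
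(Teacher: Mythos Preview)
Your proof is correct. The paper takes a different route: it performs a coordinate-wise case split on whether $\vec{a}_i/(\vec{1}^\top\vec{a}) > \vec{b}_i$, bounds each coordinate by $|\vec{a}_i-\vec{b}_i| + 2c\,\vec{a}_i$ (or $+c\,\vec{a}_i$ in the other case), and then sums to obtain $\|\vec{a}-\vec{b}\|_1 + 2c(\vec{1}^\top\vec{a}) \leq c + 2c(1+c) \leq 4c$. Your approach, inserting $\vec{a}$ via the triangle inequality and using $\|\vec{a}/(\vec{1}^\top\vec{a}) - \vec{a}\|_1 = \frac{|1-\vec{1}^\top\vec{a}|}{\vec{1}^\top\vec{a}}\|\vec{a}\|_1$, is cleaner: it avoids the case analysis and, as you point out, transparently handles possible negative entries of $\vec{a}$ (the paper's coordinate estimates tacitly use $\vec{a}_i \geq 0$ when passing from $\vec{a}_i/(\vec{1}^\top\vec{a})$ to $\vec{a}_i/(1\pm c)$ and when summing the $c\,\vec{a}_i$ terms). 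Both arguments land on the same constant $4$.
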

\begin{proof}
First, it is easy to check that $1 - c \leq \vec{1}^\top \vec{a} \leq 1 + c$.
Let $I = \{ i : \vec{a}_i / (\vec{1}^\top \vec{a}) > \vec{b}_i \}$. Then for $i \in I$,
$|\vec{a}_i/(\vec{1}^\top \vec{a}) - \vec{b}_i|
= \vec{a}_i/(\vec{1}^\top \vec{a}) - \vec{b}_i
\leq \vec{a}_i/(1-c) - \vec{b}_i
\leq (1 + 2c) \vec{a}_i - \vec{b}_i
\leq |\vec{a}_i - \vec{b}_i| + 2c \vec{a}_i$.
Similarly, for $i \notin I$,
$|\vec{b}_i - \vec{a}_i/(\vec{1}^\top \vec{a})|
= \vec{b}_i - \vec{a}_i/(\vec{1}^\top \vec{a})
\leq \vec{b}_i - \vec{a}_i/(1+c)
\leq \vec{b}_i - (1-c)\vec{a}_i
\leq |\vec{b}_i - \vec{a}_i| + c \vec{a}_i$.
Therefore $\|\vec{a}/(\vec{1}^\top \vec{a}) - \vec{b}\|_1 \leq \|\vec{a} -
\vec{b}\|_1 +
2c(\vec{1}^\top \vec{a}) \leq
c + 2c(1+c) \leq 4c$.
\end{proof}

\begin{lemma} \label{lemma:kl-lb}
Let $\vec{a}$ and $\vec{b}$ be probability vectors.
If there exists some $c < 1/2$ such that $\vec{b}_i >
c$ for all $i$, then $KL(\vec{a}||\vec{b}) \leq \|\vec{a} - \vec{b}\|_1 \log (1/c)$.
\end{lemma}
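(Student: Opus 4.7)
The plan is to bound $KL(\vec{a}||\vec{b}) = \sum_i \vec{a}_i \log(\vec{a}_i/\vec{b}_i)$ by a direct term-by-term argument that exploits the lower bound $\vec{b}_i > c$ together with $\vec{a}_i \leq 1$. First, split the index set on the sign of $\vec{a}_i - \vec{b}_i$. Whenever $\vec{a}_i \leq \vec{b}_i$, the corresponding summand $\vec{a}_i \log(\vec{a}_i/\vec{b}_i)$ is non-positive and may simply be dropped, so it suffices to bound $\sum_{i: \vec{a}_i > \vec{b}_i} \vec{a}_i \log(\vec{a}_i/\vec{b}_i)$.

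On each remaining term, I would use the decomposition $\vec{a}_i \log(\vec{a}_i/\vec{b}_i) = \vec{b}_i \log(\vec{a}_i/\vec{b}_i) + (\vec{a}_i - \vec{b}_i) \log(\vec{a}_i/\vec{b}_i)$. For the first piece, apply $\log x \leq x - 1$ with $x = \vec{a}_i/\vec{b}_i$ to obtain $\vec{b}_i \log(\vec{a}_i/\vec{b}_i) \leq \vec{a}_i - \vec{b}_i$. For the second piece, the bounds $\vec{a}_i \leq 1$ and $\vec{b}_i > c$ together give $\log(\vec{a}_i/\vec{b}_i) \leq \log(1/c)$, so $(\vec{a}_i - \vec{b}_i) \log(\vec{a}_i/\vec{b}_i) \leq (\vec{a}_i - \vec{b}_i) \log(1/c)$. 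Summing over the excess indices and using the identity $\sum_{i: \vec{a}_i > \vec{b}_i} (\vec{a}_i - \vec{b}_i) = \tfrac{1}{2}\|\vec{a} - \vec{b}\|_1$ yields
\[
KL(\vec{a}||\vec{b}) \ \leq \ \tfrac{1}{2}(1 + \log(1/c)) \|\vec{a} - \vec{b}\|_1.
\]

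The final step uses the hypothesis $c < 1/2$ to absorb the additive constant $1$ into the logarithmic factor, giving $\tfrac{1}{2}(1 + \log(1/c)) \leq \log(1/c)$ and hence the stated bound. I expect the main obstacle to be precisely this bookkeeping on the constant: term-wise the argument naturally produces a prefactor of $\tfrac{1}{2}(1 + \log(1/c))$, and one must invoke the restriction on $c$ to sharpen it to $\log(1/c)$. The rest of the proof is just two applications of standard inequalities together with the observation that the $\vec{a}_i \leq \vec{b}_i$ terms can be discarded without loss.
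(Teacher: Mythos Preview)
Your argument is clean and correct through the intermediate bound
\[
KL(\vec{a}\|\vec{b}) \ \leq \ \tfrac{1}{2}\bigl(1 + \log(1/c)\bigr)\,\|\vec{a}-\vec{b}\|_1,
\]
which is already enough for every application in the paper. The paper itself does not give a proof here; it simply cites Lemma~3.10 of \cite{Even-DarKM07}, so there is no in-paper argument to compare against.

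The one genuine gap is in your final ``absorb the constant'' step. You claim $\tfrac{1}{2}(1+\log(1/c)) \leq \log(1/c)$, which is equivalent to $\log(1/c) \geq 1$. With the natural logarithm---which is how the paper uses this lemma (see the proof of Lemma~\ref{lemma:tracking}, where the bound is written as $\|\vec{h}_1 - \wh g_1\|_1 \ln(2/\alpha)$)---this requires $c \leq 1/e$, not merely $c < 1/2$. For $c \in (1/e,\,1/2)$ the absorption fails: e.g.\ at $c = 0.4$ one has $\ln(1/c) \approx 0.916 < 1$. If the logarithm were base $2$ the step would go through exactly under $c < 1/2$, but that is not the convention in force here.

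This does not undermine your strategy; it just means the last line needs either a mild strengthening of the hypothesis (to $c \leq 1/e$) or a slightly more careful endgame. One option: note that whenever $\vec b_i > c$ for all $i$ and $c \in (1/e,1/2)$, the support has size at most two, and in two dimensions the inequality $KL(\vec a\|\vec b) \leq \|\vec a - \vec b\|_1 \ln(1/c)$ can be checked directly (it is tight at the endpoints $\vec a \in \{(1,0),(0,1)\}$). Alternatively, simply carry the factor $\tfrac{1}{2}(1+\ln(1/c))$ through the downstream bounds; in the paper's application $c = \alpha/2$ is small, so the distinction is immaterial.
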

\begin{proof}
See \citep{Even-DarKM07}, Lemma 3.10.
\end{proof}

\subsection*{Acknowledgments}
The authors would like to thank John Langford and Ruslan Salakhutdinov for
earlier discussions on using bottleneck methods to learn nonlinear dynamic
systems; the linearization of the bottleneck idea was the basis of this
paper.
We also thank Yishay Mansour for pointing out hardness results for learning
HMMs.
Finally, we thank Geoff Gordon, Byron Boots, and Sajid Siddiqi for
alerting us of an error in a previous version of this paper.
This work was completed while DH was an intern at TTI-C in 2008.
TZ was partially supported by the following grants: AFOSR-10097389, NSF
DMS-1007527, and NSF IIS-1016061.

\bibliography{hmm,pomdp,mixture}

\begin{thebibliography}{33}
\providecommand{\natexlab}[1]{#1}
\providecommand{\url}[1]{\texttt{#1}}
\expandafter\ifx\csname urlstyle\endcsname\relax
  \providecommand{\doi}[1]{doi: #1}\else
  \providecommand{\doi}{doi: \begingroup \urlstyle{rm}\Url}\fi

\bibitem[Andersson et~al.(2003)Andersson, Ryden, and Johansson]{ARJ03}
S.~Andersson, T.~Ryden, and R.~Johansson.
\newblock Linear optimal prediction and innovations representations of hidden
  markov models.
\newblock \emph{Stochastic Processes and their Applications}, 108:\penalty0
  131--149, 2003.

\bibitem[Baum and Eagon(1967)]{BaumEagon67}
Leonard~E. Baum and J.~A. Eagon.
\newblock An inequality with applications to statistical estimation for
  probabilistic functions of {M}arkov processes and to a model for ecology.
\newblock \emph{Bull. Amer. Math. Soc.}, 73\penalty0 (3):\penalty0 360--363,
  1967.

\bibitem[Baum et~al.(1970)Baum, Petrie, Soules, and Weiss]{BaumPSW70}
Leonard~E. Baum, Ted Petrie, George Soules, and Norman Weiss.
\newblock A maximization technique occurring in the statistical analysis of
  probabilistic functions of {M}arkov chains.
\newblock \emph{Annals of Mathematical Statistics}, 41\penalty0 (1):\penalty0
  164--171, 1970.

\bibitem[Brubaker and Vempala(2008)]{BV08}
S.~Charles Brubaker and Santosh Vempala.
\newblock Isotropic {PCA} and affine-invariant clustering.
\newblock In \emph{FOCS}, 2008.

\bibitem[Carlyle and Paz(1971)]{CP71}
J.W Carlyle and A.~Paz.
\newblock Realization by stochastic finite automaton.
\newblock \emph{J. Comput. Syst. Sci.}, 5:\penalty0 26--40, 1971.

\bibitem[Chaudhuri and Rao(2008)]{CR08}
Kamalika Chaudhuri and Satish Rao.
\newblock Learning mixtures of product distributions using correlations and
  independence.
\newblock In \emph{COLT}, 2008.

\bibitem[Cover and Thomas(1991)]{cover-thomas}
T.~M. Cover and J.~A. Thomas.
\newblock \emph{Elements of Information Theory}.
\newblock Wiley, 1991.

\bibitem[Cybenko and Crespi(2008)]{CC08}
G.~Cybenko and V.~Crespi.
\newblock Learning hidden markov models using non-negative matrix
  factorization.
\newblock Technical report, 2008.
\newblock arXiv:0809.4086.

\bibitem[Dasgupta(1999)]{Das99}
Sanjoy Dasgupta.
\newblock Learning mixutres of {G}aussians.
\newblock In \emph{FOCS}, 1999.

\bibitem[Dasgupta and Schulman(2007)]{DS07}
Sanjoy Dasgupta and Leonard Schulman.
\newblock A probabilistic analysis of {EM} for mixtures of separated, spherical
  {G}aussians.
\newblock \emph{JMLR}, 8\penalty0 (Feb):\penalty0 203--226, 2007.

\bibitem[Dempster et~al.(1977)Dempster, Laird, and Rubin]{EM}
A.~P. Dempster, N.~M. Laird, and D.~B. Rubin.
\newblock Maximum likelihood from incomplete data via the {EM} algorithm.
\newblock \emph{Journal of the Royal Statistical Society, Series B},
  39\penalty0 (1):\penalty0 1--38, 1977.

\bibitem[Even-Dar et~al.(2005)Even-Dar, Kakade, and Mansour]{pomdpMA}
Eyal Even-Dar, Sham~M. Kakade, and Yishay Mansour.
\newblock Planning in {POMDP}s using multiplicity automata.
\newblock In \emph{UAI}, 2005.

\bibitem[Even-Dar et~al.(2007)Even-Dar, Kakade, and Mansour]{Even-DarKM07}
Eyal Even-Dar, Sham~M. Kakade, and Yishay Mansour.
\newblock The value of observation for monitoring dynamic systems.
\newblock In \emph{IJCAI}, 2007.

\bibitem[Fliess(1974)]{F74}
M.~Fliess.
\newblock Matrices {deHankel}.
\newblock \emph{J. Math. Pures Appl.}, 53:\penalty0 197--222, 1974.

\bibitem[Hotelling(1935)]{cca}
H.~Hotelling.
\newblock The most predictable criterion.
\newblock \emph{Journal of Educational Psychology}, 1935.

\bibitem[Jaeger et~al.(2006)Jaeger, Zhao, Kretzschmar, Oberstein, Popovici, and
  Kolling]{jaeger2}
H.~Jaeger, M.~Zhao, K.~Kretzschmar, T.~Oberstein, D.~Popovici, and A.~Kolling.
\newblock Learning observable operator models via the es algorithm.
\newblock In S.~Haykin, J.~Principe, T.~Sejnowski, and J.~McWhirter, editors,
  \emph{New Directions in Statistical Signal Processing: from Systems to
  Brain}, pages 417--464. MIT Press, 2006.

\bibitem[Jaeger(2000)]{jaeger}
Herbert Jaeger.
\newblock Observable operator models for discrete stochastic time series.
\newblock \emph{Neural Comput.}, 12\penalty0 (6), 2000.

\bibitem[Katayama(2005)]{SubID}
T.~Katayama.
\newblock \emph{Subspace Methods for System Identification}.
\newblock Springer, 2005.

\bibitem[Kearns et~al.(1994)Kearns, Mansour, Ron, Rubinfeld, Schapire, and
  Sellie]{KMR+94}
M.~Kearns, Y.~Mansour, D.~Ron, R.~Rubinfeld, R.~Schapire, and L.~Sellie.
\newblock On the learnability of discrete distributions.
\newblock In \emph{STOC}, pages 273--282, 1994.

\bibitem[Littman et~al.(2001)Littman, Sutton, and Singh]{LSS01}
Michael Littman, Richard Sutton, and Satinder Singh.
\newblock Predictive representations of state.
\newblock In \emph{Advances in Neural Information Processing Systems 14
  (NIPS)}, pages 1555--1561, 2001.

\bibitem[Ljung(1987)]{Ljung}
L.~Ljung.
\newblock \emph{System Identification: Theory for the User}.
\newblock NJ: Prentice-Hall Englewood Cliffs, 1987.

\bibitem[McDiarmid(1989)]{McD89}
C.~McDiarmid.
\newblock On the method of bounded differences.
\newblock In \emph{Surveys in Combinatorics}, pages 148--188. Cambridge
  University Press, 1989.

\bibitem[Mossel and Roch(2006)]{MR06}
E.~Mossel and S.~Roch.
\newblock Learning nonsingular phylogenies and hidden {M}arkov models.
\newblock \emph{Annals of Applied Probability}, 16\penalty0 (2):\penalty0
  583--614, 2006.

\bibitem[Overschee and Moor(1996)]{SubIDMoor}
P.~V. Overschee and B.~De Moor.
\newblock \emph{Subspace Identification of Linear Systems}.
\newblock Kluwer Academic Publishers, 1996.

\bibitem[Rabiner(1989)]{Rabiner89}
Lawrence~R. Rabiner.
\newblock A tutorial on hidden {M}arkov models and selected applications in
  speech recognition.
\newblock \emph{Proceedings of the IEEE}, 77\penalty0 (2):\penalty0 257--286,
  1989.

\bibitem[Sch\"utzenberger(1961)]{S61}
M.P. Sch\"utzenberger.
\newblock On the definition of a family of automata.
\newblock \emph{Inf. Control}, 4:\penalty0 245--270, 1961.

\bibitem[Siddiqi et~al.(2010)Siddiqi, Boots, and Gordon]{SBG10}
S.~M. Siddiqi, B.~Boots, and G.~J. Gordon.
\newblock Reduced-rank hidden markov models.
\newblock In \emph{Proceedings of the 13th International Conference on
  Artificial Intelligence and Statistics}, 2010.

\bibitem[Song et~al.(2010)Song, Boots, Siddiqi, Gordon, and Smola]{SBSGS10}
Le~Song, Byron Boots, Sajid~M. Siddiqi, Geoffrey~J. Gordon, and Alex Smola.
\newblock Hilbert space embeddings of hidden markov models.
\newblock In \emph{Proceedings of the 27th International Conference on Machine
  Learning}, 2010.

\bibitem[Stewart and Sun(1990)]{perturbation}
G.~W. Stewart and Ji-Guang Sun.
\newblock \emph{Matrix Perturbation Theory}.
\newblock Academic Press, 1990.

\bibitem[Terwijn(2002)]{Terwijn02}
Sebastiaan Terwijn.
\newblock On the learnability of hidden {M}arkov models.
\newblock In \emph{International Colloquium on Grammatical Inference}, 2002.

\bibitem[Vanluyten et~al.(2007)Vanluyten, Willems, and Moor]{VWM07}
B.~Vanluyten, J.~Willems, and B.~De Moor.
\newblock A new approach for the identification of hidden markov models.
\newblock In \emph{Conference on Decision and Control}, 2007.

\bibitem[Vempala and Wang(2002)]{VW02}
Santosh Vempala and Grant Wang.
\newblock A spectral algorithm for learning mixtures of distributions.
\newblock In \emph{FOCS}, 2002.

\bibitem[Zhao and Jaeger(2007)]{ZJ07}
MingJie Zhao and Herbert Jaeger.
\newblock The error controlling algorithm for learning {OOM}s.
\newblock Technical Report~6, International University Bremen, 2007.

\end{thebibliography}


\appendix

\section{Sample Complexity Bound} \label{section:sampling}

We will assume independent samples to avoid mixing estimation.
Otherwise, one can discount the number of samples by one minus the
second eigenvalue of the hidden state transition matrix $T$.

We are bounding the Frobenius norm of the matrix errors. For simplicity, we unroll the matrices into vectors, and use vector notations.

Let $z$ be a discrete random variable that takes values in
$\{1,\ldots,d\}$. We are interested in estimating the vector
$\vec{q}=[\Pr(z=j)]_{j=1}^d$ from $N$ i.i.d.~copies $z_i$ of $z$ ($i=1,\ldots,N$). 
Let $\vec{q}_i$ be the vector of zeros expect the $z_i$-th component being one.
Then the empirical estimate of $\vec{q}$ is $\wh{q}=\sum_{i=1}^N \vec{q}_i/N$.
We are interested in bounding the quantity
\[
\|\wh{q}-\vec{q}\|_2^2 .
\]

The following concentration bound is a simple application of the
McDiarmid's inequality \citep{McD89}.
\begin{proposition} \label{prop:concentration}
  We have $\forall \epsilon>0$:
  \[
  \Pr\left(\left\|\wh{q} -  \vec{q}\right\|_2 \geq 
    1/\sqrt{N} + \epsilon \right)
  \leq e^{-N \epsilon^2} .
  \]
\end{proposition}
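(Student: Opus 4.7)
My plan is to apply McDiarmid's (bounded differences) inequality to the function $f(z_1,\ldots,z_N) = \|\wh{q} - \vec{q}\|_2$, and to separately bound $\E f$ by a short variance computation. The target probability bound then drops out by comparing $1/\sqrt{N} + \epsilon$ against $\E f + \epsilon$.

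For the bounded-differences step, I observe that replacing $z_i$ by some $z_i'$ changes $\wh{q}$ by the vector $(\vec{q}_{z_i'} - \vec{q}_{z_i})/N$, which is a difference of two standard basis vectors scaled by $1/N$, and hence has $\ell_2$ norm at most $\sqrt{2}/N$. The reverse triangle inequality gives $|f(\ldots,z_i,\ldots) - f(\ldots,z_i',\ldots)| \leq \sqrt{2}/N$, so McDiarmid with $c_i = \sqrt{2}/N$ produces
\[
\Pr\!\left(f - \E f \geq \epsilon\right) \;\leq\; \exp\!\left(-\frac{2\epsilon^2}{\sum_{i=1}^N c_i^2}\right) \;=\; \exp\!\left(-\frac{2\epsilon^2}{N \cdot 2/N^2}\right) \;=\; e^{-N\epsilon^2}.
\]

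For the expectation, Jensen's inequality gives $\E f \leq \sqrt{\E f^2}$. Since $\wh{q}$ is an average of $N$ i.i.d.\ indicator vectors, coordinate $j$ of $\wh{q}$ has variance $q_j(1-q_j)/N$, and therefore
\[
\E \|\wh{q} - \vec{q}\|_2^2 \;=\; \frac{1}{N}\sum_{j=1}^d q_j(1-q_j) \;=\; \frac{1 - \|\vec{q}\|_2^2}{N} \;\leq\; \frac{1}{N},
\]
so $\E f \leq 1/\sqrt{N}$. Combining the two steps, whenever $\|\wh{q}-\vec{q}\|_2 \geq 1/\sqrt{N} + \epsilon$ we certainly have $f \geq \E f + \epsilon$, and the McDiarmid bound finishes the proof.

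I do not anticipate any substantive obstacle; the only place requiring care is the constant bookkeeping in the McDiarmid step, where the factor of $2$ in the numerator of the exponent must cancel cleanly against $\sum_i c_i^2 = 2/N$ to yield exactly $e^{-N\epsilon^2}$ rather than a weaker exponent. Everything else is a routine variance calculation.
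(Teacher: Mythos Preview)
Your proposal is correct and follows essentially the same approach as the paper: apply McDiarmid's inequality with bounded differences $c_i=\sqrt{2}/N$ to the function $\|\wh q-\vec q\|_2$, and separately bound $\E\|\wh q-\vec q\|_2\le\sqrt{(1-\|\vec q\|_2^2)/N}\le 1/\sqrt{N}$ via Jensen and a variance computation. Your constant bookkeeping in the McDiarmid exponent is exactly right, and the argument matches the paper's proof line for line.
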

\begin{proof}
Consider $\wh{q}=\sum_{i=1}^N \vec{q}_i/N$, and let $\wh{p}=\sum_{i=1}^N
\vec{p}_i/N$,
where $\vec{p}_i=\vec{q}_i$ except for $i=k$. Then we have
$\|\wh{q}-\vec{q}\|_2 - \|\wh{p}-\vec{q}\|_2 \leq 
\|\wh{q}-\wh{p}\|_2 \leq \sqrt{2}/N$.
By McDiarmid's inequality, we have
\[
\Pr\left(\left\|\wh{q} -  \vec{q}\right\|_2 \geq 
  \E \left\|\wh{q} -  \vec{q}\right\|_2 + \epsilon \right)
\leq e^{-N \epsilon^2} .
\]
Note that 
\begin{align*}
&\E \left\|\sum_{i=1}^N \vec{q}_i - N \vec{q} \right\|_2 
\ \leq \ \left(\E \left\|\sum_{i=1}^N \vec{q}_i- N \vec{q} \right\|_2^2 \right)^{1/2} \\
& = \ \left(\sum_{i=1}^N \E \|\vec{q}_i-\vec{q}\|_2^2 \right)^{1/2} 
\ = \
\left( \sum_{i=1}^N \E \left[1 - 2 \vec{q}_i^\top \vec{q} + \|\vec{q}\|_2^2 \right] \right)^{1/2}
\ = \ \sqrt{N (1-\|\vec{q}\|_2^2)} .
\end{align*}
This leads to the desired bound.
\end{proof}

Using this bound, we obtain with probability $1-3\eta$:
\begin{eqnarray*}
\epsilon_1 &\leq& \sqrt{\ln (1/\eta)/N} + \sqrt{1/N}, \\
\epsilon_{2,1} &\leq& \sqrt{\ln (1/\eta)/N} + \sqrt{1/N}, \\
\max_x \epsilon_{3,x,1} &\leq& \sqrt{\sum_x \epsilon_{3,x,1}^2}
\leq \sqrt{\ln (1/\eta)/N} + \sqrt{1/N}, \\
\sum_x \epsilon_{3,x,1} &\leq&
\sqrt{n} \left(\sum_x \epsilon_{3,x,1}^2\right)^{1/2} \leq \sqrt{n \ln
(1/\eta)/N} + \sqrt{n/N} .
\end{eqnarray*}

If the observation dimensionality $n$ is large and sample size
$N$ is small, then the 
third inequality can be improved by considering a more detailed estimate.
Given any $k$, let $\epsilon(k)$ be sum of elements in the
smallest $n-k$ probabilities $\Pr[x_2=x] = \sum_{i,j} [P_{3,x,1}]_{ij}$
(Equation~\ref{eq:epsk}).
Let $S_k$ be the set of these $n-k$ such $x$.
By Proposition~\ref{prop:concentration}, we obtain:
\begin{align*}
& \sum_{x \notin S_k} \|\wh{P}_{3,x,1}-P_{3,x,1}\|_F^2 + 
 \left|\sum_{x \in S_k} \sum_{i,j}
 ([\wh{P}_{3,x,1}]_{ij}-[P_{3,x,1}]_{ij})\right|^2 \\
& \leq \left( \sqrt{\ln (1/\eta)/N} + \sqrt{1/N} \right)^2.
\end{align*}
Moreover, by the definition of $S_k$, we have
\begin{align*}
\sum_{x \in S_k} \|\wh{P}_{3,x,1}-P_{3,x,1}\|_F
\leq & \sum_{x \in S_k} \sum_{i,j}  |[\wh{P}_{3,x,1}]_{ij}-[P_{3,x,1}]_{ij}| \\
\leq & \sum_{x \in S_k} \sum_{i,j}
\max\left(0,[\wh{P}_{3,x,1}]_{ij}-[P_{3,x,1}]_{ij}\right)  + \epsilon(k) \\
&\quad  + \sum_{x \in S_k} \sum_{i,j}
\min\left(0,[\wh{P}_{3,x,1}]_{ij}-[P_{3,x,1}]_{ij}\right) + \epsilon(k) \\
\leq&
 \left|\sum_{x \in S_k} \sum_{i,j} ([\wh{P}_{3,x,1}]_{ij}-[P_{3,x,1}]_{ij})\right|
+ 2 \epsilon(k) .
\end{align*}
Therefore
\[
\sum_x \epsilon_{3,x,1} \leq 
\min_{k} \left( \sqrt{k \ln (1/\eta)/N} + \sqrt{k/N} 
 + \sqrt{\ln (1/\eta)/N} + \sqrt{1/N} + 2 \epsilon(k) \right) . 
\]
This means 
$\sum_x \epsilon_{3,x,1}$ may be small even if $n$ is large, but the
number of frequently occurring symbols are small.

\section{Matrix Perturbation Theory} \label{section:matrix}

The following perturbation bounds can be found in \citep{perturbation}.

\begin{lemma}[Theorem 4.11, p.~204 in \citep{perturbation}] \label{lemma:weyl}
Let $A \in \R^{m \times n}$ with $m \geq n$, and let $\wt A = A + E$. If
the singular values of $A$ and $\wt A$ are $(\sigma_1 \geq \ldots \geq
\sigma_n)$ and $(\wt \sigma_1 \geq \ldots \geq \wt \sigma_n)$,
respectively, then
$$ |\wt \sigma_i - \sigma_i| \leq \|E\|_2 \quad i = 1, \ldots, n. $$
\end{lemma}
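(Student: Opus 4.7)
The plan is to prove Weyl's inequality for singular values via the Courant--Fischer min--max characterization, which is the standard route for such perturbation bounds. Recall that for any $B \in \R^{m \times n}$ with $m \geq n$, the $i$-th singular value admits the variational formula
\[
\sigma_i(B) \;=\; \max_{\substack{V \subseteq \R^n \\ \dim V = i}} \; \min_{\substack{x \in V \\ \|x\|_2 = 1}} \|Bx\|_2,
\]
so the strategy is to transport subspace-wise bounds on $\|(A+E)x\|_2$ versus $\|Ax\|_2$ through this max--min.

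First I would fix an index $i$ and an arbitrary $i$-dimensional subspace $V \subseteq \R^n$. For any unit $x \in V$, the triangle inequality gives
\[
\|Ax\|_2 - \|E\|_2 \;\leq\; \|(A+E)x\|_2 \;\leq\; \|Ax\|_2 + \|E\|_2,
\]
since $\|Ex\|_2 \leq \|E\|_2 \|x\|_2 = \|E\|_2$. Taking the minimum over unit $x \in V$ preserves the additive $\pm \|E\|_2$ shift, and then taking the maximum over all $i$-dimensional subspaces $V$ yields
\[
\sigma_i(A) - \|E\|_2 \;\leq\; \wt\sigma_i \;\leq\; \sigma_i(A) + \|E\|_2,
\]
which is the claimed inequality.

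The only step that requires any care is justifying the Courant--Fischer formula for singular values (as opposed to the more familiar eigenvalue version), but this follows immediately by applying the eigenvalue min--max to the positive semidefinite matrix $B^\top B$ and taking square roots, noting that $\sigma_i(B)^2 = \lambda_i(B^\top B)$ and $\|Bx\|_2^2 = x^\top B^\top B x$. There is no genuine obstacle here --- the result is classical and the proof is essentially a two-line application of min--max --- so in practice the cleanest presentation is to cite Stewart and Sun directly, as the paper already does via the lemma header, and include the short min--max argument only if a self-contained treatment is desired.
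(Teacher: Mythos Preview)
Your proof is correct. The paper itself does not supply a proof of this lemma at all: it is stated as a citation to Stewart and Sun (Theorem~4.11, p.~204), with no accompanying argument. So there is nothing in the paper to compare against line-by-line. What you have written is the standard Courant--Fischer route, and every step checks out: the pointwise sandwich $\|Ax\|_2 - \|E\|_2 \leq \|(A+E)x\|_2 \leq \|Ax\|_2 + \|E\|_2$ survives the inner $\min$ and outer $\max$ because the shift $\pm\|E\|_2$ is constant in $x$ and $V$. Your closing remark is also apt: since the lemma header already attributes the result, the cleanest option in the paper's context is simply the citation, with the min--max argument available if self-containment is wanted.
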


\begin{lemma}[Theorem 4.4, p.~262 in \citep{perturbation}] \label{lemma:svd}
Let $A \in \R^{m \times n}$, with $m \geq n$, with the singular value
decomposition $(U_1, U_2, U_3, \Sigma_1, \Sigma_2, V_1, V_2)$:
$$
\left[ \begin{array}{c} U_1^\top \\ U_2^\top \\ U_3^\top \end{array} \right]
A \left[ \begin{array}{cc} V_1 & V_2 \end{array} \right]
=
\left[ \begin{array}{cc} \Sigma_1 & 0 \\ 0 & \Sigma_2 \\ 0 & 0 \end{array}
\right].
$$
Let $\wt A = A + E$, with analogous SVD
$(\wt U_1, \wt U_2, \wt U_3, \wt \Sigma_1, \wt \Sigma_2, \wt V_1 \wt V_2)$.
Let $\Phi$ be the matrix of canonical angles between $\range(U_1)$ and
$\range(\wt U_1)$, and $\Theta$ be the matrix of canonical angles between
$\range(V_1)$ and $\range(\wt V_1)$.
If there exists $\delta, \alpha > 0$ such that
$\min \sigma(\wt \Sigma_1) \geq \alpha + \delta$ and
$\max \sigma(\Sigma_2) \leq \alpha$, then
$$
\max\{\|\sin \Phi\|_2, \|\sin \Theta\|_2\} \leq \frac{\|E\|_2}{\delta}.
$$
\end{lemma}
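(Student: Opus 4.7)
The plan is to prove both bounds $\|\sin\Phi\|_2 \le \|E\|_2/\delta$ and $\|\sin\Theta\|_2 \le \|E\|_2/\delta$ in parallel by reducing the problem to a coupled pair of Sylvester-type matrix equations. The starting observation is that, since $[U_1, U_2, U_3]$ and $[\wt U_1, \wt U_2, \wt U_3]$ are orthogonal, $\|\sin \Phi\|_2 = \|[U_2,\,U_3]^\top \wt U_1\|_2$ and $\|\sin \Theta\|_2 = \|V_2^\top \wt V_1\|_2$; it therefore suffices to bound the operator norms of $X' := [U_2,\,U_3]^\top \wt U_1$ and $Y := V_2^\top \wt V_1$.

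First, I would derive the Sylvester system. Substituting $\wt A = A + E$ into the defining identities $\wt A \wt V_1 = \wt U_1 \wt \Sigma_1$ and $\wt A^\top \wt U_1 = \wt V_1 \wt \Sigma_1$, then premultiplying by $[U_2, U_3]^\top$ and $V_2^\top$ respectively, I use the block identities $[U_2, U_3]^\top A = \Sigma_2' V_2^\top$ and $V_2^\top A^\top = (\Sigma_2')^\top [U_2, U_3]^\top$, where $\Sigma_2'$ denotes $\Sigma_2$ vertically stacked with a zero block so that $\|\Sigma_2'\|_2 = \|\Sigma_2\|_2 \le \alpha$. This produces
\begin{align*}
X' \wt\Sigma_1 - \Sigma_2' Y &= [U_2,\,U_3]^\top E \wt V_1, \\
Y \wt\Sigma_1 - (\Sigma_2')^\top X' &= V_2^\top E^\top \wt U_1,
\end{align*}
with each right-hand side of operator norm at most $\|E\|_2$.

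Next I would decouple these and exploit the spectral separation. Right-multiplying by $\wt\Sigma_1^{-1}$ (of norm at most $1/(\alpha+\delta)$) and substituting the expression for $Y$ into the equation for $X'$ yields a fixed-point identity of the form $X' - \Sigma_2' (\Sigma_2')^\top X' \wt\Sigma_1^{-2} = R$, where the residual $R$ satisfies $\|R\|_2 \le \|E\|_2 (2\alpha+\delta)/(\alpha+\delta)^2$. Since $\|\Sigma_2' (\Sigma_2')^\top\|_2 \le \alpha^2$, the left-hand side has operator norm at least $\|X'\|_2 \bigl(1 - \alpha^2/(\alpha+\delta)^2\bigr) = \|X'\|_2 \cdot (2\alpha+\delta)\delta/(\alpha+\delta)^2$, so the common factor $(2\alpha+\delta)/(\alpha+\delta)^2$ cancels and delivers the clean bound $\|X'\|_2 \le \|E\|_2/\delta$. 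The symmetric argument, substituting the expression for $X'$ into the equation for $Y$, bounds $\|Y\|_2$ identically.

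The main obstacle is precisely this algebraic cancellation: inverting either Sylvester equation in isolation gives only $\|E\|_2/(\alpha+\delta)$, which is loose when $\alpha$ dominates $\delta$. Extracting the sharper $1/\delta$ factor requires using both equations simultaneously, so that the factor $2\alpha+\delta$ cancels between the residual norm and the contraction gap of the self-map, and checking that no higher-order cross terms spoil this cancellation is the delicate step.
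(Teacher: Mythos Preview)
The paper does not give its own proof of this lemma; it is quoted verbatim as Theorem~4.4 from Stewart and Sun's \emph{Matrix Perturbation Theory} and used as a black box. There is therefore nothing in the paper to compare against.

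Your argument is correct and is essentially the standard proof of Wedin's $\sin\Theta$ theorem. The derivation of the coupled Sylvester system is right, and the key cancellation you worry about does go through: with $F_1 = [U_2,U_3]^\top E \wt V_1$ and $F_2 = V_2^\top E^\top \wt U_1$, substituting the second equation into the first gives
\[
X' - \Sigma_2'(\Sigma_2')^\top X' \wt\Sigma_1^{-2}
= \Sigma_2' F_2 \wt\Sigma_1^{-2} + F_1 \wt\Sigma_1^{-1},
\]
whose right-hand side has norm at most $\|E\|_2(2\alpha+\delta)/(\alpha+\delta)^2$; the reverse triangle inequality lower-bounds the left-hand side by $\|X'\|_2\bigl(1-\alpha^2/(\alpha+\delta)^2\bigr) = \|X'\|_2\,\delta(2\alpha+\delta)/(\alpha+\delta)^2$, and the common factor cancels cleanly to give $\|X'\|_2 \le \|E\|_2/\delta$. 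The symmetric substitution handles $Y$. No higher-order terms appear because the substitution is exact, not iterative, so your stated concern in the last paragraph is already resolved by your own algebra.
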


%
\begin{corollary} \label{cor:svd}
Let $A \in \R^{m \times n}$, with $m \geq n$, have rank $n$, and let $U \in
\R^{m \times n}$ be the matrix of $n$ left singular vectors corresponding
to the non-zero singular values $\sigma_1 \geq \ldots \geq \sigma_n > 0$ of
$A$. Let $\wt A = A + E$. Let $\wt U \in \R^{m \times n}$ be the matrix of
$n$ left singular vectors corresponding to the largest $n$ singular values
$\wt \sigma_1 \geq \ldots \geq \wt \sigma_n$ of $\wt A$, and let $\wt
U_\perp \in \R^{m \times (m-n)}$ be the remaining left singular vectors.
Assume $\|E\|_2 \leq \epsilon \sigma_n$ for some $\epsilon < 1$. Then:
\begin{enumerate}
\item $\wt \sigma_n \geq (1-\epsilon) \sigma_n$,
\item $\|\wt U_\perp^\top U\|_2 \leq \|E\|_2/\wt \sigma_n$.
\end{enumerate}
\end{corollary}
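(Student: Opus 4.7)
The plan is to handle the two claims separately, using the two matrix perturbation facts stated just before the corollary.

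For the first claim, I would simply invoke Weyl's inequality (Lemma~\ref{lemma:weyl}) with index $i = n$: since $\wt A = A + E$, we have $|\wt \sigma_n - \sigma_n| \leq \|E\|_2 \leq \epsilon \sigma_n$, and the lower bound $\wt \sigma_n \geq (1-\epsilon)\sigma_n$ falls out immediately. As a side effect, since $\epsilon < 1$, this shows $\wt \sigma_n > 0$, which is what we need to use it as a separation parameter in the second claim.

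For the second claim, I would set up Lemma~\ref{lemma:svd} with a specific block structure that exploits the fact that $A$ has rank exactly $n$. Take $\Sigma_1 = \diag(\sigma_1,\ldots,\sigma_n)$ (all of the nonzero singular values), so $\Sigma_2$ is the empty block; the corresponding $U_1$ is our $U$, and $U_3$ spans the left null space of $A$. Similarly, set $\wt \Sigma_1 = \diag(\wt\sigma_1,\ldots,\wt\sigma_n)$ so that $\wt U_1 = \wt U$, $\wt \Sigma_2$ is empty, and $\wt U_3 = \wt U_\perp$. With this partition the gap condition in Lemma~\ref{lemma:svd} is trivial: take $\alpha = 0$ and $\delta = \wt\sigma_n$, so that $\min \sigma(\wt\Sigma_1) = \wt\sigma_n \geq \alpha + \delta$ and $\max \sigma(\Sigma_2) = 0 \leq \alpha$ vacuously. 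The lemma then yields $\|\sin \Phi\|_2 \leq \|E\|_2/\wt\sigma_n$, where $\Phi$ is the canonical-angle matrix between $\range(U)$ and $\range(\wt U)$.

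To finish, I would use the standard identification $\|\wt U_\perp^\top U\|_2 = \|\sin \Phi\|_2$: because $\wt U_2$ is empty in our partition, the complement basis $[\wt U_2,\wt U_3]$ reduces to $\wt U_\perp$, and the singular values of $\wt U_\perp^\top U$ are exactly the sines of the canonical angles between $\range(U)$ and $\range(\wt U)$. Combining this with the previous display gives claim 2. The main subtlety, rather than any obstacle, is being careful about the degenerate partition where $\Sigma_2$ is a $0 \times 0$ block, so that Lemma~\ref{lemma:svd} applies with a trivially satisfied gap; once that bookkeeping is in place, both claims are one-line consequences of the cited perturbation results.
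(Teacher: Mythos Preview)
Your proposal is correct and follows exactly the paper's approach: the paper's own proof simply says that the first claim follows from Lemma~\ref{lemma:weyl} and the second from Lemma~\ref{lemma:svd} together with the identification of the singular values of $\wt U_\perp^\top U$ with the sines of the canonical angles, which is precisely what you spell out in detail. The only cosmetic point is that Lemma~\ref{lemma:svd} as stated asks for $\alpha>0$, but since $\Sigma_2$ is empty you can take any $\alpha\in(0,\wt\sigma_n)$ and $\delta=\wt\sigma_n-\alpha$, then let $\alpha\to 0$; this does not change the argument.
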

\begin{proof}
The first claim follows from Lemma~\ref{lemma:weyl}, and the second follows from
Lemma~\ref{lemma:svd} because the singular values of $\wt U_\perp^\top U$ are the
sines of the canonical angles between $\range(U)$ and $\range(\wt U)$.
\end{proof}


\begin{lemma}[Theorem 3.8, p.~143 in \citep{perturbation}] \label{lemma:pseudoinverse}
Let $A \in \R^{m \times n}$, with $m \geq n$, and let $\wt A = A + E$.
Then
$$ \| \wt A^+ - A^+ \|_2 \leq \frac{1+\sqrt{5}}{2} \cdot \max \{ \|A^+\|_2^2, \|\wt A^+\|_2^2 \} \|E\|_2. $$
\end{lemma}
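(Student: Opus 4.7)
The plan is to bound $\|\wt A^+ - A^+\|_2$ via a Wedin-style identity that decomposes the difference into three terms, combined with an orthogonal block argument to recover the sharp constant $\phi = \frac{1+\sqrt 5}{2}$. Specifically, I would first derive the identity
\[
\wt A^+ - A^+ \ = \ T_1 + T_2 + T_3,
\]
where $T_1 = -\wt A^+ E A^+$, $T_2 = (\wt A^\top \wt A)^+ E^\top (I - AA^+)$, and $T_3 = -(I - \wt A^+ \wt A) E^\top (AA^\top)^+$. This follows by routine algebra using $E = \wt A - A$ together with the Moore-Penrose identities $A^\top (I - AA^+) = 0$ and $(I - \wt A^+ \wt A) \wt A^\top = 0$.

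Next I would obtain the component bounds $\|T_1\|_2 \leq ab \|E\|_2$, $\|T_2\|_2 \leq b^2 \|E\|_2$, and $\|T_3\|_2 \leq a^2 \|E\|_2$, where $a = \|A^+\|_2$ and $b = \|\wt A^+\|_2$, by invoking $\|(\wt A^\top \wt A)^+\|_2 = b^2$, $\|(AA^\top)^+\|_2 = a^2$, and the fact that orthogonal projections have spectral norm at most $1$. A naive triangle inequality then gives a constant of $3$, which is too weak and must be sharpened.

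To reach $\phi$, I would exploit the constrained range/co-range structure: $T_1$ has range in $\range(\wt A^\top)$ and co-range in $\range(A)$; $T_2$ has range in $\range(\wt A^\top)$ and co-range in $\ker(A^\top)$; $T_3$ has range in $\ker(\wt A)$ and co-range in $\range(A)$. Since $\R^n = \range(\wt A^\top) \oplus \ker(\wt A)$ and $\R^m = \range(A) \oplus \ker(A^\top)$ are orthogonal decompositions, in adapted orthonormal bases the sum takes the block form $\left(\begin{smallmatrix} T_1 & T_2 \\ T_3 & 0 \end{smallmatrix}\right)$. For any $v = v_1 + v_2$ with $v_1 \in \range(A), v_2 \in \ker(A^\top)$, output-orthogonality gives $\|(T_1 + T_2 + T_3) v\|_2^2 = \|T_1 v_1 + T_2 v_2\|_2^2 + \|T_3 v_1\|_2^2$, which is bounded entry-wise by the scalar majorant $\|E\|_2 \left(\begin{smallmatrix} ab & b^2 \\ a^2 & 0 \end{smallmatrix}\right)$ acting on $(\|v_1\|_2, \|v_2\|_2)^\top$. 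A direct computation shows the squared spectral norm of this $2\times 2$ majorant is the larger root of $\lambda^2 - (a^4 + b^4 + a^2 b^2) \lambda + a^4 b^4 = 0$, and an elementary optimization (the ratio is monotone in $\min(a,b)/\max(a,b)$) shows this is maximized at $a = b$, where it equals $\frac{3 + \sqrt 5}{2} \max(a,b)^4 = \phi^2 \max(a,b)^4$. Taking square roots and multiplying by $\|E\|_2$ yields the stated bound. The main obstacle is justifying the reduction from the operator-valued block matrix to its scalar majorant—this is exactly where the output-orthogonality of the three terms is essential—and recognizing the golden-ratio identity $\phi^2 = \frac{3 + \sqrt 5}{2}$ to close the calculation.
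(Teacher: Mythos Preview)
The paper does not prove this lemma at all: it is quoted verbatim as Theorem~3.8 from Stewart and Sun's \emph{Matrix Perturbation Theory}, and the appendix simply cites it. So there is no ``paper's own proof'' to compare against; you have supplied one where the authors deferred to the literature.

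That said, your argument is essentially the classical Wedin proof that underlies the cited theorem. The three-term decomposition is correct (with one harmless slip: the sign on $T_3$ should be $+$, not $-$; redoing the algebra from $\wt A^+ - A^+ = \wt A^+(I-AA^+) - \wt A^+ E A^+ - (I-\wt A^+\wt A)A^+$ and then using $A^\top(I-AA^+)=0$ and $(I-\wt A^+\wt A)\wt A^\top=0$ gives $T_3 = (I-\wt A^+\wt A)E^\top(AA^\top)^+$). The range/co-range analysis and the resulting $2\times 2$ block structure are exactly right, and the scalar-majorant reduction is valid because the output components lie in the orthogonal subspaces $\range(\wt A^\top)$ and $\ker(\wt A)$. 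The optimization of the $2\times 2$ matrix norm is also correct: with $t=\min(a,b)/\max(a,b)$, the normalized larger root of $\mu^2-(1+t^2+t^4)\mu+t^4=0$ is increasing in $t$ and attains $(3+\sqrt5)/2=\phi^2$ at $t=1$. So your proof is sound and recovers the sharp constant; it is precisely the route taken in the source the paper cites.
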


\section{Recovering the Observation and Transition Matrices}
\label{section:mr06}

We sketch how to use the technique of \citep{MR06} to recover the
observation and transition matrices explicitly.
This is an extra step that can be used in conjunction with our algorithm.

Define the $n \times n$ matrix $[P_{3,1}]_{i,j} = \Pr[x_3 = i, x_1 = j]$.
Let $O_x = \diag(O_{x,1}, \ldots, O_{x,m})$, so $A_x = T O_x$.
Since $P_{3,x,1} = O A_x T \diag(\vec{\pi})O^\top$,
we have $P_{3,1} = \sum_x P_{3,x,1} = O T T \diag(\vec{\pi}) O^\top$.
Therefore
\begin{eqnarray*}
U^\top P_{3,x,1} & = & U^\top O T O_x T \diag(\vec{\pi}) O^\top \\
& = & (U^\top O T) O_x (U^\top O T)^{-1} (U^\top O T) T \diag(\vec{\pi}) O^\top \\
& = & (U^\top O T) O_x (U^\top O T)^{-1} (U^\top P_{3,1}).
\end{eqnarray*}
The matrix $U^\top P_{3,1}$ has full row rank, so $(U^\top P_{3,1}) (U^\top
P_{3,1})^+ = I$, and thus
\begin{equation*}
(U^\top P_{3,x,1}) (U^\top P_{3,1})^+ \ = \
(U^\top O T) \ O_x \ (U^\top O T)^{-1}.
\end{equation*}
Since $O_x$ is diagonal, the eigenvalues of $(U^\top P_{3,x,1}) (U^\top
P_{3,1})^+$ are exactly the observation probabilities $O_{r,1}, \ldots,
O_{r,m}$.

Define i.i.d.~random variables $g_x \sim N(0,1)$ for each $x$.
It is shown in \citep{MR06} that the eigenvalues of
\begin{equation*}
\sum_x g_x (U^\top P_{3,x,1}) (U^\top P_{3,1})^+
\ = \
(U^\top O T) \ \left(\sum_x g_x O_x\right) \ (U^\top O T)^{-1}.
\end{equation*}
will be separated with high probability (though the separation is roughly
on the same order as the failure probability; this is the main source of
instability with this method).
Therefore an eigen-decomposition will recover the columns of $(U^\top O T)$
up to a diagonal scaling matrix $S$, \emph{i.e.}~$U^\top O T S$.
Then for each $x$, we can diagonalize $(U^\top P_{3,x,1})(U^\top
P_{3,1})^+$:
\begin{equation*}
(U^\top O T S)^{-1} \ (U^\top P_{3,x,1})(U^\top P_{3,1})^+ \ (U^\top O T S)
\ = \ O_x.
\end{equation*}
Now we can form $O$ from the diagonals of $O_x$.
Since $O$ has full column rank, $O^+ O = I_m$, so it is now easy to also
recover $\vec{\pi}$ and $T$ from $P_1$ and $P_{2,1}$:
\begin{equation*}
O^+ P_1 \ = \ O^+ O \vec{\pi} \ = \ \vec{\pi}
\end{equation*}
and
\begin{equation*}
O^+ P_{2,1} (O^+)^\top \diag(\vec{\pi})^{-1}
\ = \ O^+ (O T \diag(\vec{\pi}) O^\top) (O^+)^\top \diag(\vec{\pi})^{-1}
\ = \ T.
\end{equation*}

Note that because \citep{MR06} do not allow more observations than states,
they do not need to work in a lower dimensional subspace such as
$\range(U)$.
Thus, they perform an eigen-decomposition of the matrix
$$ \sum_x g_x P_{3,x,1} P_{3,1}^{-1} = (O T) \left( \sum_x g_x O_x \right)
(O T)^{-1}, $$
and then use the eigenvectors to form the matrix $O T$.
Thus they rely on the stability of the eigenvectors, which depends heavily
on the spacing of the eigenvalues.
Consequently, the resulting sample complexity of the algorithm is
polynomial in $1/\eta$ (as opposed to $\log(1/\eta)$) where $\eta$ is the
allowed probability of failure.

\end{document}